\algrenewcommand\algorithmicindent{2.0em}%
\providecommand{\algorithmname}{Algorithm}
\theoremstyle{plain}
\newtheorem{prop}{\protect\propositionname}
\theoremstyle{plain}
\newtheorem{assumption}{\protect\assumptionname}
\theoremstyle{plain}
\newtheorem{lem}{\protect\lemmaname}
\theoremstyle{remark}
\newtheorem{rem}{\protect\remarkname}
\theoremstyle{plain}
\newtheorem{thm}{\protect\theoremname}
\theoremstyle{plain}
\newtheorem{cor}{\protect\corollaryname}
\providecommand{\assumptionname}{Assumption}
\providecommand{\lemmaname}{Lemma}
\providecommand{\propositionname}{Proposition}
\providecommand{\remarkname}{Remark}
\providecommand{\theoremname}{Theorem}
\providecommand{\corollaryname}{Corollary}
\begin{document}

\global\long\def\pone{\mathsf{P1}}%
\global\long\def\ptwo{\mathsf{P2}}%
\global\long\def\mS{\mathcal{S}}%
\global\long\def\mA{\mathcal{A}}%
\global\long\def\mP{\mathcal{P}}%
\global\long\def\mQ{\mathcal{Q}}%
\global\long\def\mV{\mathcal{V}}%
\global\long\def\Pr{\mathbb{P}}%
\global\long\def\mP{\mathcal{P}}%
\global\long\def\E{\mathbb{E}}%
\global\long\def\R{\mathbb{R}}%
\global\long\def\det{\textup{det}}%
\global\long\def\tr{\textup{tr}}%
\global\long\def\simplex{\Delta}%
\global\long\def\dup{\textup{d}}%

\global\long\def\wover{\overline{w}}%
\global\long\def\wunder{\underline{w}}%
\global\long\def\Vover{\overline{V}}%
\global\long\def\Vunder{\underline{V}}%
\global\long\def\Qover{\overline{Q}}%
\global\long\def\Qunder{\underline{Q}}%
\global\long\def\Qotilde{\widetilde{Q}}%
\global\long\def\Qutilde{\utilde{Q}}%
\global\long\def\Votilde{\widetilde{V}}%
\global\long\def\Vutilde{\utilde{V}}%
\global\long\def\eps{\gamma}%
\global\long\def\epsover{\overline{\eps}}%
\global\long\def\epsunder{\underline{\eps}}%

\global\long\def\proj{\Pi}%
\global\long\def\projH{\proj_{H}}%
\global\long\def\sigmatilde{\widetilde{\sigma}}%

\global\long\def\findcce{\texttt{FIND\_CCE}}%
\global\long\def\findmax{\texttt{FIND\_MAX}}%
\global\long\def\findmin{\texttt{FIND\_MIN}}%
\global\long\def\sign{\textup{sign}}%

\newcommand{\br}{\ensuremath{\mathsf{br}}}
\newcommand{\OMVI}{OMNI-VI}
\newcommand{\omvi}{Optimistic Minimax Value Iteration}
\renewcommand{\baselinestretch}{1.1}

\let\hat\widehat
\let\tilde\widetilde
\let\check\widecheck

\title{Learning Zero-Sum Simultaneous-Move Markov Games Using Function Approximation and Correlated Equilibrium}

\author{Qiaomin Xie,$^\dagger$ Yudong Chen,$^\dagger$ Zhaoran Wang,$^\ddagger$ Zhuoran Yang$^\mathsection$ \footnote{Accepted for presentation at the Conference on Learning Theory (COLT) 2020. Emails: \texttt{qiaomin.xie@cornell.edu}, \texttt{yudong.chen@cornell.edu}, \texttt{zhaoranwang@gmail.com}, \texttt{zy6@princeton.edu}}\\ ~\\
	\normalsize $^\dagger$School of Operations Research and Information Engineering, Cornell University\\
	\normalsize $^\ddagger$Department of Industrial Engineering and Management Sciences, Northwestern University\\
	\normalsize $^\mathsection$Department of Operations Research and Financial Engineering, Princeton University
}

\date{}

\maketitle

\begin{abstract}
We develop provably efficient reinforcement learning algorithms for two-player zero-sum finite-horizon Markov games with simultaneous moves. 
To incorporate function approximation, we consider a family of Markov games where the reward function and
transition kernel possess a linear structure. Both the offline and online settings of the problems are considered. In the offline setting, we control both players and aim to find the Nash Equilibrium by minimizing the duality gap.  In the online setting, we control a single player playing against an arbitrary opponent and aim to minimize the regret. For both settings, we propose an optimistic variant of the least-squares minimax value iteration algorithm.
We show that our algorithm is computationally efficient and provably achieves an $\tilde O(\sqrt{d^3 H^3 T} )$ upper bound on the duality gap and regret, where $d$ is the linear dimension, $H$ the horizon and $T$ the total number of timesteps. Our results do not require additional assumptions on the sampling model. 

Our setting requires overcoming several new challenges that are absent in Markov decision processes or turn-based Markov games. In particular,  to  achieve optimism with simultaneous moves,  we construct both upper and lower confidence bounds of the value function, and then compute the optimistic policy by solving a general-sum matrix game with these bounds as the payoff matrices.
As finding the Nash Equilibrium of a general-sum game is computationally hard, our algorithm instead solves for a Coarse Correlated Equilibrium (CCE), which can be obtained efficiently. To our best knowledge, such a CCE-based scheme for  optimism has not appeared in the literature and might be of interest in its own right.

\end{abstract}

%!TEX root =main.tex

\section{Introduction}\label{sec:intro}

Reinforcement learning  \citep{sutton2018reinforcement} is typically modeled as a Markov Decision Process (MDP) \citep{puterman2014markov}, 
where an agent aims to learn the optimal decision-making rule via interaction with the environment. 
In Multi-agent reinforcement learning (MARL), several agents interact with each other and with the underlying environment, and their goal is to optimize their individual returns. This problem is often formulated under the framework of Markov games
 \citep{shapley1953stochastic}, which  is a generalization of the MDP model. 
Powered by function approximation techniques such as deep neural networks \citep{lecun2015deep, goodfellow2016deep},  
MARL has recently enjoyed tremendous empirical success across a variety of real-world applications. 
A partial list  of such applications includes the game of Go~\citep{silver2016alphago,silver2017alphagozero},
real-time strategy games~\citep{openaifive,vinyals2019alphastar}, Texas Hold'em poker~\citep{moravvcik2017deepstack, brown2018superhuman, brown2019superhuman}, autonomous driving~\citep{shalev2016safe_drive}, and learning communication and emergent behaviors \citep{foerster2016learning, lowe2017multi, bansal2017emergent, jaques2018social, baker2019emergent}; see the surveys in~\cite{busoniu2008marl_survey,zhang2019marl_overview}.

In contrast to the vibrant empirical study, theoretical understanding of MARL  
is relatively inadequate. Most existing work on Markov games assumes access to either a sampling oracle or a well-explored behavioral policy, which fails to capture the exploration-exploitation tradeoff that is fundamental in real-world applications of reinforcement learning. Moreover, these results mostly focus on the relatively simple turn-based setting.
An exception is the work in~\citet{wei2017online}, which extends the UCRL2 algorithm \citep{jaksch2010near} for MDP to zero-sum simultaneous-move Markov games. However, their approach  explicitly estimates the transition model and thus only works in the tabular setting. Problems with complicated state spaces and transitions necessitate the use of function approximation architectures. In this regard, a fundamental  question is left open:  Can we design a provably efficient  reinforcement learning  algorithm for Markov games under the function approximation setting?\\

In this paper, we provide an affirmative answer to this question for two-player zero-sum  Markov games with simultaneous moves and a linear structure.
In particular, we study an episodic setting, where  each episode consists of $H$ timesteps and  the players act simultaneously at each timestep. 
Upon reaching the $H$-th timestep, the episode terminates and players replay the game again by starting  a new episode. 
Here, the players have no knowledge of the system model (i.e., the transition kernel) nor access to a sampling oracle that returns the next state for an arbitrary state-action pair. Therefore, the players  have to learn the system from data by playing the game sequentially through each episode and repeatedly for multiple episodes.
More specifically, we study episodic Markov games under both the offline and online settings. In the offline setting,  both players are controlled by a central learner, and the goal is  to find an approximate Nash Equilibrium of the game, with the approximation error measured by a notion of duality gap.
In the online setting, we control one of the players  and play against an opponent who implements an arbitrary policy. Our goal is  to minimize the total regret, defined as the difference between the cumulative return of the controlled player and  its optimal achievable return when the opponent plays the best response policy. 
Both settings are generalizations of  the  regret minimization problem for MDPs. 
 
Furthermore, to incorporate function approximation, we consider Markov games with a linear structure, motivated by the linear MDP model recently studied in~\cite{jin2019linear}. In particular, we assume that both the transition kernel and the reward admit a $ d $-dimensional linear representation with respect to a known feature mapping, which can be potentially nonlinear  in its inputs. 
For both the online and offline settings, we propose the first provably efficient reinforcement learning algorithm without additional assumptions on the sampling model. Our algorithm is an Optimistic version of Minimax Value Iteration (\OMVI) with least squares estimation---a model-free approach---which   constructs  upper confidence bounds of the optimal action-value function to promote exploration. 
We show that the  \OMVI\ algorithm  is computationally efficient, and it provably achieves an $\widetilde O ( \sqrt{d^3 H^3 T})$ regret in the online setting and a similar duality gap guarantee in the offline setting, where  $ T $ is the total number of timesteps and $\widetilde O$ omits logarithmic terms.  Note that the bounds do not depend
on the cardinalities of the state and action spaces, which can be very large or even infinite.  When specialized to MDPs and linear bandits, our results can be compare with exiting regret bounds and are near-optimal. \\

We emphasize that the Markov game model poses several new and fundamental challenges that are absent in MDPs and arise due to subtle
game-theoretic considerations. Addressing these challenges require several new ideas, which we summarize as follows. 
\begin{enumerate}
 
\item \textbf{Optimism via  General-Sum Games.} In the offline simultaneous-move  setting, implementing the optimism principle for \emph{both} players amounts to constructing both upper and lower confidence bounds (UCB and LCB) for the optimal value function of the game. Doing so requires one to find, as an algorithmic subroutine, the solution of a \emph{general-sum}~(matrix) game where the two players' payoff functions correspond to the upper and lower bounds for the action-value (or Q) functions of the original Markov game, even though the latter is zero-sum to begin with. This stands in sharp contrast of turn-based games \citep{hansen2013strategy,jia2019feature,sidford2019solving}, in which each turn only involves constructing an UCB for one player.
 
\item \textbf{Using Correlated Equilibrium.}   
Finding the Nash equilibrium (NE) of a general-sum matrix game, however, is computationally hard in general
\citep{daskalakis2009complexity,chen2009settling}. Our second critical observation is that it suffices to find a \emph{Coarse Correlated Equilibrium (CCE)} \citep{moulin1978strategically,aumann1987correlated}  of the game. Originally developed in
algorithmic game theory, CCE is a tractable notion of equilibrium that strictly generalizes NE. In contrast to NE, a CCE can be found
efficiently in polynomial time even for general-sum games~\citep{papadimitriou2008computing,blum2008regret}.
Moreover, our analysis shows that using any CCE of the matrix general-sum game are sufficient for ensuring optimism for the original Markov game.  Thus, by using CCE instead of NE, we achieve efficient exploration-exploitation balance while preserving computational tractability. 

\item \textbf{Concentration and Game Stability.} The last challenge is more technical, arising in the analysis of the algorithm where we need to
establish certain uniform concentration bounds for the CCEs. As we elaborate later, the CCEs of a general-sum game are \emph{unstable}  (i.e., not Lipschitz) with respect to the payoff matrices. Therefore, standard approaches for proving uniform concentration, such as those based on covering/$\epsilon$-net
arguments, is fundamentally insufficient. We overcome this issue by carefully \emph{stabilizing} the algorithm, for which we make use of an $\epsilon$-net \emph{in the algorithm}. Moreover, we show that this can be done in a computationally efficient way via rounding on-the-fly, without explicitly maintaining the $ \epsilon $-net.
\end{enumerate}
We shall discuss the above challenges and ideas in greater details when we formally describe our algorithms. 
We note that our regret and duality gap bounds also imply polynomial sample complexity and PAC guarantees for learning the NEs of simultaneous-move Markov games. 
Moreover, as turn-based games can be viewed as a special case of simultaneous
games, where  at each state the reward and transition kernel only depend on the action of one of the players, our algorithms and guarantees readily apply to the turn-based setting.

\subsection{Related Work}

There is a large body of literature on applying reinforcement learning methods to Makove games (a.k.a.\ stochastic games). These results typically assume access to a sampling oracle, and most of them provide convergence guarantees that are asymptotic in nature.
In particular, under the tabular setting, 
the work in \cite{littman1996generalized, littman2001friend, littman2001value,greenwald2003correlated, hu2003nash, grau2018balancing} extends the value iteration and Q-learning algorithms~\citep{watkins1992q} to zero-sum and general-sum Markov games,  and that in \cite{perolat2018actor, srinivasan2018actor} 
extends  the   actor-critic  algorithm \citep{konda2000actor}.
Particularly related to us is the work in \cite{sidford2019solving}, which proposes a variance-reduced variant of the minimax Q-learning algorithm  with near-optimal sample complexity. We note that the theoretical results therein also require a sampling oracle, and they focus  on turn-based games, a special case of simultaneous-move games. 
The work in \citet{lagoudakis2012value, perolat2015approximate, perolat2016use, perolat2016softened,perolat2016learning,  yang2019theoretical} considers function approximation techniques  applied to variants of value-iteration methods and establishes finite-time convergence to the NEs of  two-player zero-sum  Markov games. Their results are based on the framework of fitted value-iteration \citep{munos2008finite} and the availability of a well-explored behavioral policy. The recent work  
\cite{jia2019feature} studies turn-based zero-sum Markov 
 games,  where the transition model is assumed to be embedded in some $ d $-dimensional feature space, extending the MDP model proposed by \citet{yang2019sample}.  
Assuming a sampling oracle, they propose a variant of Q-learning algorithm  that is guaranteed to find an
$\varepsilon$-optimal strategy using $\tilde{O}(  d\varepsilon^{-2}(1-\gamma)^{-4})$
samples, where $\gamma$ is a discount factor.  
In summary, all of the work above either assume a sampling oracle or a  well explored behavioral policy for drawing transitions, therefore effectively bypassing the exploration issue.

Our work builds on a line of research on provably efficient methods for MDPs without additional assumptions on the sampling model.
Most of the existing work focus on the tabular setting; see e.g., 
 \cite{strehl2006pac, jaksch2010near, osband2014generalization, azar2017minimax, dann2017unifying,agrawal2017optimistic,  jin2018q, russo2019worst,rosenberg2019online,  jin2019learning, zanette2019tighter, simchowitz2019non, dong2019q} and the references therein. 
Under the function approximation setting, sample-efficient algorithms have been proposed using linear function approximators
\citep{abbasi2019politex, abbasi2019exploration,  yang2019reinforcement,  du2019provably, wang2019optimism}, as well as nonlinear ones~\citep{wen2017efficient, jiang2017contextual, dann2018oracle,du2019provably,  dong2019sqrt, du2019provably2}. Among this line of work, our paper is most related to \citet{jin2019linear,  zanette2019frequentist, cai2019provably}, which consider linear MDP models and propose  optimistic and randomized variants of least-squares value iteration (LSVI) \citep{bradtke1996linear,  osband2014generalization} as well as optimistic variants of proximal policy optimization \citep{schulman2017proximal}.
Our linear Markov game model generalizes the MDP model considered in these papers, and our \OMVI\ algorithm can be viewed as a generalization of the optimistic LSVI method proposed in \citep{jin2019linear}. 
As mentioned before,  the game structures in our problem pose fundamental challenges that are absent in MDPs, and thus their algorithms cannot be trivially extended to our game setting. 
 
Work on provably sample efficient RL methods for Markov games is quite scarce. The only comparable work we are aware of is~\cite{wei2017online}, which proposes a model-based algorithm that extends the UCRL2 algorithm~\citep{jaksch2010near} for tabular MDPs to the game setting. Similarly to their work, we also consider both the online and offline settings and provide guarantees in terms of duality gap and regret. 
On the other hand, they only consider tabular setting, a special case of our linear model. Moreover,  their model-based algorithm explicitly estimates the Markov transition kernel and relies on the complicated technique of Extended Value Iteration, whose computational cost is quite high as it requires augmenting the state/action spaces. In comparison, our algorithm is model-free in the sense that it directly estimates the value functions; moreover, the computational cost of our algorithm only depends on the dimension $d$ of the feature and not the cardinality of the state space.

Finally, we remark that there is a line of work on robust MDPs~\citep{xu2012distributionally,lim2013robust_RL}, where an adversary chooses the transition kernel from an uncertainty set. This problem is closely related to our online setting, where the adversary chooses an action that determines the transition kernel. One technical difference is that in their setting, the uncertainty set is known yet the choice of the adversary is not directly observable, whereas in our case the adversary's action is observed but its influence on the transition and value functions needs to be estimated from data. The algorithms are also different: they take an model-based approach that finds the worst-case transition kernel from the uncertainty set, whereas our algorithm computes empirical estimates of the worst-case value functions using data. Also, their results apply only to the tabular setting of MDPs. 

%!TEX root =main.tex

\section{Background and Preliminaries\label{sec:setup}}

In this section, we formally describe the setup for episodic two-player zero-sum Markov games with simultaneous moves. We then describe the setting for turn-based games, which can be viewed as a special case of simultaneous-moves games.

\subsection{Notation}\label{sec:notation}

For two quantities $ x $ and $y$ that potentially depend on the problem parameters ($ d, H, |\mA| $, $ T $, etc.), if $ x \ge C y $ holds for a universal absolute constant $ C>0 $, we write $ x\gtrsim y $, $ x=\Omega(y) $ and $ y=O(x) $.   For each real number $ u $, define the clipping operation $\projH(u)=\max\left\{ \min\left\{ u,H\right\} ,-H\right\} $. We use $ \|\cdot\| $ to denote the vector $ \ell_2 $ norm and $ \| \cdot \|_\textup{F} $ the matrix Frobenius norm. Given a positive semidefinite matrix $ A $, define the weighted $ \ell_2 $ norm $ \| v\|_A := \sqrt{v^\top A v}$ for the vector $ v$.

We sometimes need to consider a general-sum  \emph{matrix} (or \emph{normal form}) game with payoff matrices $ u_i  \in \R^{|\mA|\times |\mA|}, i\in\{1,2\} $ for two players denoted by $ \pone $ and $ \ptwo $.  If $ \pone $ and $ \ptwo $ take actions $a $ and $ b $, respectively, then $ \mathsf{P}i $ receives a payoff $ u_i(a,b) $. We use the convention that $ \pone $ tries to maximize the payoff and $ \ptwo $ tries to minimize. A joint distribution $\sigma \in\simplex(\mA\times\mA)$ of both players' actions is called a \emph{Coarse Correlated Equilibrium} \citep{moulin1978strategically,aumann1987correlated} of the game if it satisfies 
\begin{subequations}
	\label{eq:cce}
	\begin{equation}
		\E_{(a,b)\sim\sigma}\left[u_1(x,a,b)\right]  \ge\E_{b\sim\mP_{2}\sigma}\left[u_1(x,a',b)\right],\quad\forall a'\in\mA,\label{eq:cce1}
	\end{equation}
	\begin{equation}
		\E_{(a,b)\sim\sigma}\left[u_2(x,a,b)\right]  \le\E_{a\sim\mP_{1}\sigma}\left[u_2(x,a,b')\right],\quad\forall b'\in\mA,\label{eq:cce2}
	\end{equation}
\end{subequations}
where for $i\in\{1,2\}$, $\mP_{i}\sigma\in\simplex(\mA)$
denotes the $i$-th marginal of $\sigma$. In words, in a CCE the players choose their actions in a potentially correlated way such that no unilateral (unconditional) deviation from $ \sigma $ is beneficial.\footnote{We note in passing that there is a more restrictive notion of \emph{Correlated Equilibrium} (CE)~\citep{moulin1978strategically,aumann1987correlated}, in which the deviation is allowed to depend on the original actions. The set of CCEs include the set of CEs, which in turn includes the set of NEs. We use CCE in this paper as it is the easiest to compute among the three.} Note that a CCE $ \sigma = \sigma_1 \times \sigma_2 $ in product form is an NE.

\subsection{Simultaneous-Move Markov Games}\label{sec:setup_game}

 A two-player, zero-sum, simultaneous-moves, episodic Markov
game is defined by the  tuple 
\[
(\mS,\mA_{1},\mA_{2},r,\Pr,H),
\]
where $\mS$ is the state space, $\mA_{i}$ is a finite
set of actions that player $ i \in\{1,2\}$ can take, $r$ is reward function,
$\Pr$ is transition kernel and $H$ is the number of steps
in each episode. 
At each step $h\in[H]$, upon observing the state $ x \in\mS$, $ \pone $ and $ \ptwo $ take actions $a\in\mA_{1}$
and $b\in\mA_{2}$, respectively, and then both receive the reward $r_{h}(x,a,b)$.
The system then transitions to a new state $x' \sim \Pr_{h}(\cdot|x,a,b)$ according to the transition kernel.
Throughout this paper, we assume for simplicity that $ \mA_{1}=\mA_{2} = \mA $ and that the rewards $r_{h}(x,a,b)$ are deterministic functions of the tuple $(x,a,b)$ taking value in $[-1,1]$; generalization to the setting with  $ \mA_{1} \neq \mA_{2} $ and stochastic rewards is straightforward.

Denote by $\simplex\equiv\simplex(\mA)$ the probability simplex over the action space $\mA$. A stochastic policy of $ \pone  $ is a length-$H$ sequence of functions $\pi:=( \pi_{h}:\mS\rightarrow\simplex )_{h\in[H]}$. At each step $h\in[H]$ and state $x\in\mS$, $ \pone $ takes an action sampled 
from the distribution $\pi_{h}(x)$ over $\mA$. Similarly, a stochastic
policy of $ \ptwo $ is given by the sequence $\nu:=(\nu_{h}:\mS\rightarrow\simplex)_{h\in[H]}$.

\subsubsection{Value Functions\label{sec:bellman_simu}}

For a fixed pair of policies $ (\pi, \nu) $ for both players, the value and Q  (a.k.a.\ action-value) functions for the above game can
be defined in a manner analogous to the episodic Markov decision process
(MDP) setting:
\begin{align*}
V_{h}^{\pi,\nu}(x) :=\E\bigg [\sum_{t=h}^{H}r_{t}(x_{t},a_{t},b_{t})|x_{h}=x\bigg], \quad 
Q_{h}^{\pi,\nu}(x,a,b) :=\E\bigg[\sum_{t=h}^{H}r_{t}(x_{t},a_{t},b_{t})|x_{h}=x,a_{h}=a,b_{h}=b\bigg],
\end{align*}
where the expectation is over $a_{t}\sim\pi_t(x_{t}),$ $b_{t}\sim\nu_t(x_{t})$ and $x_{t+1}\sim\Pr_{t}(\cdot|x_{t},a_{t},b_{t})$. 
It is convenient to set $ V_{H+1}^{\pi,\nu}(x) \equiv Q_{H+1}^{\pi,\nu}(x) \equiv 0$ for the terminal reward.
Under the boundedness assumption on the reward,  it is easy see that all value functions are bounded:
\[
\left|V_{h}^{\pi,\nu}(x)\right|\le H\quad\text{and}\quad\left|Q_{h}^{\pi,\nu}(x,a,b)\right|\le H,\qquad\forall x,a,b,h,\pi,\nu.
\]

In the zero-sum setting, for a given initial state $ x_1 $, $\pone$ aims to maximize $ V_1^{\pi,\nu} (x_1)$ whereas  $\ptwo$ aims to minimize
it. Accordingly, we introduce the value and Q functions when $ \pone $ plays the best response to a fixed policy $ \nu $ of $ \ptwo $:
\begin{align*}
V_{h}^{*,\nu}(x) = \max_{\pi} V_{h}^{\pi,\nu}(x)
\quad\text{and}\quad
Q_{h}^{*,\nu}(x,a,b) = \max_{\pi} Q_{h}^{\pi,\nu}(x,a,b).
\end{align*}
Analogously, when $ \ptwo $ plays the best response to $ \pone $'s policy $ \pi $, we define
\begin{align*}
V_{h}^{\pi,*}(x) = \min_{\nu} V_{h}^{\pi,\nu}(x)
\quad\text{and}\quad
Q_{h}^{\pi,*}(x,a,b) = \min_{\nu} Q_{h}^{\pi,\nu}(x,a,b).
\end{align*}

A Nash Equilibrium (NE) of the game is a pair of stochastic policies  $ (\pi^*, \nu^*) $ that are the best response to each other; that is,
\begin{align}
\label{eq:NE}
V_1^{\pi^*,\nu^*} (x_1) = V_1^{*,\nu^*}(x_1) = V_1^{\pi^*,*}(x_1),\qquad x_1\in \mS.
\end{align}
We assume that the game satisfies appropriate regularity conditions so that an NE exists and their value is unique.\footnote{This holds, e.g., when the state space is compact~\citep{maitra1970stochastic,maitra1971stochastic2}.} 
Correspondingly, let $ V_h^* (x) := V_h^{\pi^*, \nu^*}(x)  $ and $ Q_h^* (x,a,b) := Q_h^{\pi^*, \nu^*}(x,a,b)  $ denote the values of the NE at step $ h $.

Define the following shorthand for conditional expectation for the step-$ h $ transition:
\[
[\Pr_{h}V](x,a,b):=\E_{x'\sim\Pr_{h}(\cdot|x,a,b)}[V(x')] = \int V(x') \dup \Pr_h(x' | x,a,b).
\]
While not explicitly needed in our analysis, we note that the value/Q functions for the NE satisfy the Bellman equations
\begin{subequations}
	\label{eq:bellman}
\begin{align}
&Q_{h}^{*}(x,a,b) =r_{h}(x,a,b)+(\Pr_{h}V_{h+1}^{*})(x,a,b), \label{eq:bellman1} \\
\quad\text{and}\quad&
V_{h}^{*}(x) =\max_{A\in\Delta}\min_{B\in\Delta}\E_{a\sim A,b\sim B}Q_{h}^{*}(x,a,b)=\min_{B\in\Delta}\max_{A\in\Delta}\E_{a\sim A,b\sim B}Q_{h}^{*}(x,a,b). \label{eq:bellman2} 
\end{align}
\end{subequations}
The fixed-policy and best-response value/Q functions, $V_h^{\pi,\nu}, V_h^{\pi,*},V_h^{*,\nu}, Q_h^{\pi,\nu}, Q_h^{\pi,*}$ and $Q_h^{*,\nu} $, satisfy a similar set of Bellman equations; we omit the details.

The following weak duality result, which follows immediately from definition, relates the above value and Q functions.
\begin{prop}[Weak Duality]
\label{prop:weak_duality_simu}For each policy pair $(\pi,\nu)$
and each $h\in[H]$, $(x,a,b)\in\mS\times \mA \times \mA$, we have 
\begin{align*}
Q_{h}^{\pi,*}(x,a,b)\leq& Q_{h}^{*}(x,a,b)\leq Q_{h}^{*,\nu}(x,a,b),&
V_{h}^{\pi,*}(x)\leq& V_{h}^{*}(x)\leq V_{h}^{*,\nu}(x),\\
Q_{h}^{\pi,*}(x,a,b)\leq& Q_{h}^{\pi,\nu}(x,a,b)\leq Q_{h}^{*,\nu}(x,a,b),&
V_{h}^{\pi,*}(x)\leq& V_{h}^{\pi,\nu}(x)\leq V_{h}^{*,\nu}(x).
\end{align*}
\end{prop}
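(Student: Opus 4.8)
The plan is to reduce everything to two elementary facts: a minimum over a set of policies is no larger than any particular element of that set, and a maximum is no smaller. Both rows of inequalities, for the $V$-functions and the $Q$-functions alike, will then follow by sandwiching, with the only genuine content being the correct placement of the Nash value $V_{h}^{*}$ (resp.\ $Q_{h}^{*}$) between the two best-response values.

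First I would dispatch the second (bottom) row, which needs no game-theoretic input. By definition $V_{h}^{\pi,*}(x)=\min_{\nu'}V_{h}^{\pi,\nu'}(x)$, so $V_{h}^{\pi,*}(x)\le V_{h}^{\pi,\nu}(x)$ for the particular policy $\nu$; likewise $V_{h}^{\pi,\nu}(x)\le\max_{\pi'}V_{h}^{\pi',\nu}(x)=V_{h}^{*,\nu}(x)$. The identical argument, now holding the pair $(a,b)$ fixed, gives $Q_{h}^{\pi,*}(x,a,b)\le Q_{h}^{\pi,\nu}(x,a,b)\le Q_{h}^{*,\nu}(x,a,b)$. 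Chaining these already yields the ``global'' weak-duality bound $V_{h}^{\pi,*}\le V_{h}^{*,\nu}$ for every pair $(\pi,\nu)$, and its $Q$-analogue.

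For the first (top) row I would invoke the best-response characterization of the equilibrium at each step, namely $V_{h}^{*}=V_{h}^{*,\nu^{*}}=V_{h}^{\pi^{*},*}$ and $Q_{h}^{*}=Q_{h}^{*,\nu^{*}}=Q_{h}^{\pi^{*},*}$ for all $h$. Granting this, the right inequality follows from $V_{h}^{*}=V_{h}^{\pi^{*},*}=\min_{\nu'}V_{h}^{\pi^{*},\nu'}\le V_{h}^{\pi^{*},\nu}\le\max_{\pi'}V_{h}^{\pi',\nu}=V_{h}^{*,\nu}$, and the left inequality is its mirror image, $V_{h}^{\pi,*}=\min_{\nu'}V_{h}^{\pi,\nu'}\le V_{h}^{\pi,\nu^{*}}\le\max_{\pi'}V_{h}^{\pi',\nu^{*}}=V_{h}^{*,\nu^{*}}=V_{h}^{*}$. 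The $Q$-versions are verbatim with $(x,a,b)$ frozen.

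The hard part will be justifying the per-step equilibrium identities $V_{h}^{*}=V_{h}^{*,\nu^{*}}=V_{h}^{\pi^{*},*}$ for every $h\in[H]$, since the equilibrium condition \eqref{eq:NE} is imposed only at $h=1$. I would close this gap by taking $(\pi^{*},\nu^{*})$ to be a Markov perfect equilibrium---which exists under the stated regularity conditions---or, equivalently, by a backward induction on $h$ using the Bellman equations \eqref{eq:bellman}: the identity holds trivially at the terminal step, and if $V_{h+1}^{*}$ is simultaneously the best-response value for both players, the matrix-game minimax/maximin identity in \eqref{eq:bellman2} propagates it to step $h$. Everything else is bookkeeping.
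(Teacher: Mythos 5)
Your proof is correct, and it is essentially the argument the paper intends: the paper gives no proof at all, stating only that Proposition~\ref{prop:weak_duality_simu} ``follows immediately from definition,'' which is precisely your bottom-row sandwich ($\min$ over policies $\le$ any particular element $\le$ $\max$), chained through $(\pi^*,\nu^*)$ for the top row. The one genuine subtlety you flag is real: the NE condition~\eqref{eq:NE} is imposed only at $h=1$ and at initial states, so the per-step, per-state identities $V_h^{*}=V_h^{\pi^*,*}=V_h^{*,\nu^*}$ needed for the top row do not follow from \eqref{eq:NE} alone (a pair that is an NE only from $x_1$ can behave arbitrarily at states unreachable from $x_1$, and there $V_h^{\pi,*}(x)\le V_h^{\pi^*,\nu^*}(x)$ can fail); your fix---taking $(\pi^*,\nu^*)$ Markov perfect, or equivalently the backward induction through the minimax identity~\eqref{eq:bellman2}---is exactly what the paper implicitly assumes when it asserts the Bellman equations~\eqref{eq:bellman} for $V_h^*$ and $Q_h^*$. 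Your reduction of the $Q$-inequalities to the $V$-inequalities by freezing $(x,a,b)$ is also sound, since $Q_h^{\pi,\nu}(x,a,b)=r_h(x,a,b)+(\Pr_h V_{h+1}^{\pi,\nu})(x,a,b)$ depends on the policies only from step $h+1$ onward, so each step-$h$ $Q$ bound is inherited from the corresponding step-$(h+1)$ $V$ bound.
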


\subsubsection{Linear Structures\label{sec:setup_linear}}

We assume that  both the reward function and transition
kernel have a linear structure. 
\begin{assumption}[Linearity and Boundedness]
\label{assu:linear_bounded_simu} For each $ (x,a,b)\in\mS \times \mA \times \mA $ and $h\in[H]$, we have
\begin{align*}
r_{h}(x,a,b) =\phi(x,a,b)^{\top}\theta_{h}
\qquad\text{and}\qquad
\Pr_{h}(\cdot |x,a,b)  =\phi(x,a,b)^{\top}\mu_{h}(\cdot),
\end{align*}
where $\phi:\mS\times\mA\times\mA\rightarrow\R^{d}$ is a known feature map, $\theta_{h}\in\R^{d}$ is an unknown vector and $\mu_h = \big(\mu_{h}^{(i)} \big)_{i\in[d]}$ is a vector of  $d$ unknown (signed) measures on $\mS$. We assume that $\left\Vert \phi(\cdot,\cdot,\cdot)\right\Vert \le1$, $\left\Vert \theta_{h}\right\Vert \le\sqrt{d}$ and 
$\left\Vert \mu_{h}(\mS)\right\Vert \le\sqrt{d}$  for all $h\in[H]$, where $ \|\cdot\| $ is the vector $ \ell_2 $ norm.
\end{assumption}

Note that boundedness of the linear weights $\theta_h$ and $\mu_h$ allows for certain covering and concentration arguments in the analysis; also see~\cite[Section 2.1]{jin2019linear} for a discussion on the specific choice of normalization above. It is also easy to see that the linearity assumption above implies that the Q functions are linear.
\begin{lem}[Linearity of Value Function]
\label{lem:linearity_simu}
Under Assumption~\ref{assu:linear_bounded_simu}, for any policy pair $(\pi,\nu)$ and any
$h\in[H]$, there exists a vector $w_{h}^{\pi,\nu}\in\R^{d}$ such
that 
\[
Q_{h}^{\pi,\nu}(x,a,b)=\left\langle \phi(x,a,b),w_{h}^{\pi,\nu}\right\rangle ,\qquad\forall(x,a,b)\in\mS\times\mA\times\mA.
\]
\end{lem}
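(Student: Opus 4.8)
The plan is to prove the claim by a direct application of the one-step Bellman equation for the fixed-policy Q-function, together with the linear structure of Assumption~\ref{assu:linear_bounded_simu}. The crucial observation is that once the pair $(\pi,\nu)$ is fixed, $V_{h+1}^{\pi,\nu}$ is a fixed (bounded, measurable) function of the next state alone, so integrating it against the linear transition kernel produces a constant vector in $\R^d$ that can be absorbed into the coefficient multiplying $\phi(x,a,b)$.

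First I would invoke the Bellman recursion $Q_h^{\pi,\nu}(x,a,b) = r_h(x,a,b) + (\Pr_h V_{h+1}^{\pi,\nu})(x,a,b)$, the fixed-policy analogue of~\eqref{eq:bellman1}. Substituting the linear forms $r_h(x,a,b) = \phi(x,a,b)^{\top}\theta_h$ and $\Pr_h(\cdot\,|x,a,b) = \phi(x,a,b)^{\top}\mu_h(\cdot)$ and using linearity of the integral, I would pull $\phi(x,a,b)$ outside the integral over the next state:
\[
(\Pr_h V_{h+1}^{\pi,\nu})(x,a,b) = \int V_{h+1}^{\pi,\nu}(x')\, \phi(x,a,b)^{\top}\dup\mu_h(x') = \Big\langle \phi(x,a,b),\, \int V_{h+1}^{\pi,\nu}(x')\,\dup\mu_h(x') \Big\rangle.
\]
Setting $w_h^{\pi,\nu} := \theta_h + \int V_{h+1}^{\pi,\nu}(x')\,\dup\mu_h(x') \in \R^d$ then yields $Q_h^{\pi,\nu}(x,a,b) = \langle \phi(x,a,b), w_h^{\pi,\nu}\rangle$ for every $(x,a,b)$, which is exactly the claim. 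One could equivalently phrase this as backward induction on $h$, starting from the terminal condition $Q_{H+1}^{\pi,\nu}\equiv 0$ with $w_{H+1}^{\pi,\nu}=0$; but the induction is inessential, since the single-step identity above already determines $w_h^{\pi,\nu}$ directly from $V_{h+1}^{\pi,\nu}$.

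The only step requiring care—and the main, though modest, obstacle—is justifying that the vector integral $\int V_{h+1}^{\pi,\nu}(x')\,\dup\mu_h(x')$ is well defined and that $\phi^{\top}$ may be interchanged with integration. Since $|V_{h+1}^{\pi,\nu}(x')|\le H$ uniformly and each component $\mu_h^{(i)}$ is a signed measure of bounded total variation (consistent with $\|\mu_h(\mS)\|\le\sqrt d$), the integral of the bounded measurable function $V_{h+1}^{\pi,\nu}$ against $\mu_h$ is finite in each coordinate, so $w_h^{\pi,\nu}$ is a well-defined element of $\R^d$; the factor $\phi(x,a,b)^{\top}$ is constant with respect to the $x'$-integration and moves outside by linearity. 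Since the statement asserts only existence of $w_h^{\pi,\nu}$ and not a norm bound, I would not track $\|w_h^{\pi,\nu}\|$ here.
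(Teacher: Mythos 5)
Your proposal is correct and follows essentially the same route as the paper: apply the fixed-policy Bellman equation, substitute the linear forms of $r_h$ and $\Pr_h$, pull $\phi(x,a,b)$ out of the integral, and set $w_h^{\pi,\nu}=\theta_h+\int V_{h+1}^{\pi,\nu}(x')\,\dup\mu_h(x')$. Your extra remarks on well-definedness of the vector integral and the dispensability of induction are sound refinements of the same one-line argument the paper gives.
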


\begin{proof}
By Bellman equation and linearity of $r_{h}$ and $\Pr_{h}$, we have
\begin{align*}
Q_{h}^{\pi,\nu}(x,a,b) & =r_{h}(x,a,b)+\Pr_{h}V_{h+1}^{\pi,\nu}(x,a,b) =\phi(x,a,b)^{\top}\theta_{h}+\int V_{h+1}^{\pi,\nu}(x')\phi(x,a,b)^{\top}\dup\mu_{h}(x').
\end{align*}
Letting $w_{h}^{\pi,\nu}:=\theta_{h}+\int V_{h+1}^{\pi,\nu}(x')\textup{d}\mu_{h}(x')$
proves the lemma.
\end{proof}

\begin{rem}
\label{rem:linear_best_response}
Since $Q_{h}^{\pi,*}(x,a,b)=Q_{h}^{\pi,\br(\pi)}(x,a,b)$,
where $\br(\pi) \in \arg\min_{\nu} Q_h^{\pi,\nu}(x,a,b)$ is the best response policy to $\pi$, it follows
immediately from Lemma~\ref{lem:linearity_simu} that $Q_{h}^{\pi,*}(x,a,b)=\left\langle \phi(x,a,b),w_{h}^{\pi,*}\right\rangle $
for some $w_{h}^{\pi,*}\in\R^{d}$. Similarly, we have $Q_{h}^{*,\nu}(x,a,b)=\left\langle \phi(x,a,b),w_{h}^{*,\nu}\right\rangle $
for some $w_{h}^{*,\nu}\in\R^{d}$.
\end{rem}

The linear setting above covers the \emph{tabular setting} as a special case, where $ d=|\mS|\cdot|\mA|^{2} $ and  $\phi(x,a,b)$ is the indicator vector for the tuple $(x,a,b)$. It is also clear that MDPs are a special case of Markov games when $\ptwo$ plays a fixed and known policy. In particular, our setting covers both tabular MDPs as well as the linear MDP setting considered in the work~\cite{jin2019linear}.
Finally, as we elaborate in Section~\ref{sec:setup_turn} to follow, turn-based Markov Games can also be viewed as a special case of our setting.

\begin{rem}
	\label{rem:linear_necessary}
	Linearity of the reward and transition kernel is a \emph{strictly} stronger assumption than linearity of the value functions. Our analysis makes crucial use of this stronger assumption, which ensures that the linearity of value functions is preserved under the Bellman equation. In fact, it is likely that this assumption is essential for developing efficient algorithms, in view of recent hardness result in~\cite{du2019representation} that only assumes near-linearity of value functions of MDPs (a special case of Markov games).
\end{rem}

\subsection{Turn-Based Markov games\label{sec:setup_turn}}

In turn-based games, at each state only one player takes an action. Without loss of generality, we may partition the state space as $\mS=\mS_{1}\cup\mS_{2},$
where $\mS_{i}$ are the states at which it is player $ i $'s turn to play.\footnote{The assumption $\mS_{1}\cap\mS_{2}=\emptyset$ is satisfied if one incorporates the ``turn'' of the player as part of the state.} For each state $x\in\mS,$ let $I(x)\in\{1,2\}$ indicate the current player to play,  so that $x\in\mathcal{S}_{I(x)}$. At each step
$h\in[H]$, player $I(x)$ observes the current state $ x $ and takes an action $ a $; then the two players receive the reward $r_{h}(x,a)$, and the system transitions to a new state $x' \sim \Pr_{h}(\cdot|x,a)$.  

The value/Q functions $ V^{\pi, \nu}_h(x), Q^{\pi, \nu}_h(x,a) $ etc., as well as the corresponding NE of the game, can be defined in a completely analogous way as in the simultaneous-move setting. Similarly to Assumption~\ref{assu:linear_bounded_simu}, we also assume that the game has a linear structure.
\begin{assumption}[Linearity and Boundedness, Turn-Based]
	\label{assu:linear_bounded_turn} For each $ (x,a) \in \mS \times \mA $ and $h\in[H]$, we have
	\begin{align*}
	r_{h}(x,a)  =\phi(x,a)^{\top}\theta_{h}
	\qquad\text{and}\qquad
	\Pr_{h}(\cdot|x,a)  =\phi(x,a)^{\top}\mu_{h}(\cdot),
	\end{align*}
	where $\phi:\mS\times\mA\rightarrow\R^{d}$ is a known feature map, $\theta_{h}\in\R^{d}$ is an unknown vector and $\{\mu_{h}^{(i)}\}_{i\in[d]}$
	are $d$ unknown (signed) measures on $\mS$. We assume that $\left\Vert \phi(\cdot,\cdot)\right\Vert \le1$,
	$\left\Vert \mu_{h}(\mS)\right\Vert \le\sqrt{d}$ and $\left\Vert \theta_{h}\right\Vert \le\sqrt{d}$ for all $h\in[H]$.
\end{assumption}

One may view a turn-based game as a special case of a simultaneous-move
game, where at each state only one of the players is ``active''
and the other player's action has no influence on the reward or the
transition. Formally, for each $x\in S_{1}$, the values of $r_{h}(x,a,b)$,
$\Pr_{h}(\cdot|x,a,b)$ and $\phi(x,a,b)$ are independent of $b$; for each $x\in\mS_{2}$,
they are independent of $a$.

%!TEX root =main.tex
\section{Main Results for the Offline Setting\label{sec:main_simu}}

In this section, we consider the offline setting, where a central controller controls 
both players. The goal of the controller is learn a Nash equilibrium $ (\pi^*, \nu^*) $ of the game in episodic setting. 
In what follows, we formally define the problem setup and objectives, and then present our algorithm and provide theoretic guarantees for its performance.

\subsection{Setup and Performance Metrics}\label{sec:setup_offline}

In the episodic setting, the Markov game is played for $ K $ episodes, each of which consists of $ H $ timesteps. 
At the beginning of the $k$-th episode, an arbitrary initial state
$x_{1}^{k}$ is chosen. Then the players  $\pone$ and $ \ptwo $ play according to the policies $ \pi^k =(\pi^k_h)_{h\in[H]} $ and $\nu^{k}=(\nu^k_h)_{h\in[H]}$, respectively, which may adapt to  observations from past episodes. The game terminates after $ H $ timesteps and restarts for the $ (k+1) $-th episode. Note that expected reward for $\pone $ and $ \ptwo $ in the $ k $-th episode is $ V^{\pi^k,\nu^k}_1(x_1^k) $.

\paragraph{Duality gap guarantees:}

Recall the weak duality property in Proposition~\ref{prop:weak_duality_simu}, which states the value of the NE, $ V^*_1(x_1) $, is sandwiched between $ V_1^{\pi^k,*}(x_1) $ and $ V_1^{*,\nu^k} (x_1) $. Therefore, it is natural to use the duality gap $ V_1^{*,\nu^k} (x_1)-V_1^{\pi^k,*}(x_1) $ to measure how well the policy $ (\pi^k,\nu^k) $ in the $ k $-th episode  approximates the NE.
Accordingly, we aim to bound the following total duality gap:
\begin{equation}\label{eq:gap}
\mathrm{Gap}(K):=\sum_{k=1}^{K}\left[V_{1}^{*,\nu^{k}}(x_{1}^{k})-V_{1}^{\pi^{k},*}(x_{1}^{k})\right].
\end{equation}
Another way to interpret the above objective is as follows. Define the \emph{exploitability}~\citep{davis2014using} of $ \pone $ and $ \ptwo $, respectively, as
\begin{align*} 
\mathrm{Exploit}_1(\pi^k, \nu^k)  :=  V_1^{\pi^k,\nu^k} (x_1^k)  -  V_1^{\pi^k,  *} (x_1^k) \quad\text{and}\quad
\mathrm{Exploit}_2(\pi^k, \nu^k)  :=  V_1^{*,\nu^k} (x_1^k)  -  V_1^{\pi^k, \nu^k} (x_1^k), 
\end{align*}
both of which are nonnegative by Proposition~\ref{prop:weak_duality_simu}.
Here $ \mathrm{Exploit}_i(\pi^k, \nu^k)  $ measures the potential loss of player $ i\in\{1,2\} $ in the $ k $-th episode if the other player  unilaterally switched to the best response policy. The total duality gap can then be rewritten  as 
\begin{align*} 
\mathrm{Gap}(K) =\sum_{k=1}^{K} \Bigl[\mathrm{Exploit}_1 (\pi^k, \nu^k )  + \mathrm{Exploit}_2 (\pi^k, \nu^k ) \Bigr ],
\end{align*}
which is the sum of the exploitability of both players accumulated over $ K $ episodes. Also note that in special cases of MDPs, $ \mathrm{Gap}(K) $ reduces to the usual notion of total regret.

\paragraph{Sample complexity and PAC guarantees:}

Another performance metric is the sample complexity for finding an approximate NE. In particular, suppose that for all episodes the initial states $x_{1}$ are sampled
from the same fixed distribution. We are interested in the
number of episodes $K$ (or equivalently the number of samples $T=KH$)
needed to find a policy pair $(\pi,\nu)$ satisfying
\[
V_{1}^{*,\nu}(x_{1})-V_{1}^{\pi,*}(x_{1})\le\epsilon \qquad\text{with probability at least \ensuremath{1-\delta}}.
\]
In light of Proposition~\ref{prop:weak_duality_simu}, the above inequality implies that $ (\pi,\nu) $ is an $ \epsilon $-approximate NE in the sense that 
\begin{align*}
V_1^{*,\nu} (x_1) - \epsilon \le 
V_1^{\pi,\nu} (x_1) \le V_1^{\pi,*} (x_1) + \epsilon;
\end{align*}
that is, $ (\pi,\nu) $ satisfies the definition~\eqref{eq:NE} of NE up to an $ \epsilon $ error.
As we discuss in details after presenting our main theorem, a bound
on the total duality gap implies a bound on the sample complexity. Such a bound in turn implies a PAC-type guarantee in the sense of Kakade~\citep{kakade2003thesis}, which stipulates that an $ \epsilon $-approximate NE is played in all but a small number of timesteps.

\subsection{Algorithm}\label{sec:alg_offline}

We now present our algorithm, \omvi\ (\OMVI) with least squares estimation, which is given
as Algorithm~\ref{alg:simu}.

\begin{algorithm}[tbh]
	\caption{\omvi\ (Simultaneous Move, Offline)\label{alg:simu}}
\begin{algorithmic}[1]
	
\State \textbf{Input:} bonus parameter $ \beta>0 $.

\For {episode $k=1,2,\ldots,K$}

\State Receive initial state $x_{1}^{k}$

\For {step $h=H,H-1,\ldots,2,1$} \Comment{update policy}

\State $\Lambda_{h}^{k}\leftarrow\sum_{\tau=1}^{k-1}\phi(x_{h}^{\tau},a_{h}^{\tau},b_{h}^{\tau})\phi(x_{h}^{\tau},a_{h}^{\tau},b_{h}^{\tau})^{\top}+I.$  %\Comment{least-squares estimation}

\State $\wover_{h}^{k}\leftarrow(\Lambda_{h}^{k})^{-1}\sum_{\tau=1}^{k-1}\phi(x_{h}^{\tau},a_{h}^{\tau},b_{h}^{\tau})\left[r_{h}(x_{h}^{\tau},a_{h}^{\tau},b_{h}^{\tau})+\Vover_{h+1}^{k}(x_{h+1}^{\tau})\right]$.

\State $\wunder_{h}^{k}\leftarrow(\Lambda_{h}^{k})^{-1}\sum_{\tau=1}^{k-1}\phi(x_{h}^{\tau},a_{h}^{\tau},b_{h}^{\tau})\left[r_{h}(x_{h}^{\tau},a_{h}^{\tau},b_{h}^{\tau})+\Vunder_{h+1}^{k}(x_{h+1}^{\tau})\right]$.

\State $\Qover_{h}^{k}(\cdot,\cdot,\cdot)\leftarrow\projH\left\{ (\wover_{h}^{k})^{\top}\phi(\cdot,\cdot,\cdot)+\beta\sqrt{\phi(\cdot,\cdot,\cdot)^{\top}(\Lambda_{h}^{k})^{-1}\phi(\cdot,\cdot,\cdot)}\right\} .$ %\Comment{UCB/LCB for Q function}

\State $\Qunder_{h}^{k}(\cdot,\cdot,\cdot)\leftarrow\projH\left\{ (\wunder_{h}^{k})^{\top}\phi(\cdot,\cdot,\cdot)-\beta\sqrt{\phi(\cdot,\cdot,\cdot)^{\top}(\Lambda_{h}^{k})^{-1}\phi(\cdot,\cdot,\cdot)}\right\} .$

\State For each $x$, let $\sigma_{h}^{k}(x)\leftarrow\findcce\left(\Qover_{h}^{k},\Qunder_{h}^{k},x\right).$ %\Comment{UCB/LCB for value function}

\State $\Vover_{h}^{k}(x)\leftarrow\E_{(a,b)\sim\sigma_{h}^{k}(x)}\Qover_{h}^{k}(x,a,b)$ for each $x$.

\State $\Vunder_{h}^{k}(x)\leftarrow\E_{(a,b)\sim\sigma_{h}^{k}(x)}\Qunder_{h}^{k}\left(x,a,b\right)$
for each $x$.

\EndFor

\For {step $h=1,2,\ldots,H$} \Comment{execute policy}

\State  Sample $(a_{h}^{k},b_{h}^{k})\sim\sigma_{h}^{k}(x_{h}^{k})$.

\State  $\pone$ takes action $a_{h}^{k}$; $\ptwo$ takes
action $b_{h}^{k}$.

\State  Observe next state $x_{h+1}^{k}$.

\EndFor

\EndFor

\end{algorithmic}
\end{algorithm}

 In each episode $ k $, the algorithm first constructs the policies for both players (lines 4--13), and then executes the policy to play the game (lines 14--18).
The construction of the policy is done through backward induction with respect to the timestep $ h $. In each timestep, we first compute upper/lower estimates $ \wover_{h},\wunder_{h} \in \R^d$ of the linear coefficients of the  Q-function. This is done by approximately solving the Bellman equation~\eqref{eq:bellman} using (regularized) least-squares estimation, for which we use empirical data from the previous $ k-1 $ episodes to estimate the unknown transition kernel $ \Pr_h $ (lines 5--7). Then, to encourage exploration, we construct UCB/LCB for the Q function by adding/subtracting an appropriate bonus term (lines 8--9). The bonus takes the form $ \beta\sqrt{\phi^\top (\Lambda_{h}^{k})^{-1}\phi} $, where  $\Lambda_{h}^{k}$ is the regularized Gram matrix defined in line 5 of the algorithm. This form of bonus is common in the literature of linear bandits~\citep{lattimore2018bandit}. The next and crucial step is to convert the UCB/LCB $  (\Qover_{h}, \Qunder_{h}) $ for the Q function into UCB/LCB $ (\Vover_{h}, \Vunder_{h}) $ for the \emph{value} function (lines 10--12). This step turns out to be quite delicate; we elaborate below.

Note that $\Vover_{h}(x)$ and $\Vunder_{h}(x)$ should correspond to the actions $(a',b')$ that would be actually played at state $x$, that is, $\Vover_{h}(x) = \Qover_{h}(x,a',b')$ (in expectation w.r.t.\ randomness of the stochastic policy; similarly for $\Vunder_h(x)$), so that these upper/lower bounds can be tightened up using empirical observations from these actions. To construct these bounds, one may be tempted to let each player \emph{independently} compute the maximin or minimax values and actions. That is, one may let $ \pone $ play the action $ a' = \arg\max_a\min_b \Qover_{h}^k(x,a,b) $ and $ \ptwo $ play $ b' = \arg\min_b\max_a \Qunder_{h}^k(x,a,b) $, and then set $ \Vover_{h}^k(x) \leftarrow  \Qover_{h}^k (x,a', b')$ and $ \Vunder_{h}^k(x) \leftarrow  \Qunder_{h}^k (x,a',b')$. Unfortunately, such a $ \Vover_{h}^k(x) $ is \emph{not} a valid upper bound for  the true value, since $ \Qover_{h}^k \neq \Qunder_h^k $ in general and hence $ \Qover_{h}^k (x,a',b') \neq \max_a\min_b \Qover_{h}^k(x,a,b)$.

Instead, we must \emph{coordinate} both players for their choices of actions, which is done by solving the \emph{general-sum} matrix game with payoff matrices $ \Qover_{h}^k(x,\cdot,\cdot) $ and $ \Qunder_{h}^k(x,\cdot,\cdot) $. Finding the NE for general-sum games gives valid UCB/LCB, but doing so is computationally intractable \citep{daskalakis2009complexity,chen2009settling}. Fortunately, computing an (approximate) CCE of the matrix game turns out to be sufficient as well.  For technical reasons elaborated in the next subsection, the subroutine $\findcce$ for finding the CCE is implemented in a specific way as follows. Let $\mQ$ be
the class of functions $Q:\mS\times\mA\times\mA\to\R$ with the parametric
form
\begin{equation}
Q(x,a,b)=\projH\left\{ \left\langle w,\phi(x,a,b)\right\rangle +\rho\beta\sqrt{\phi(x,a,b)^{\top}A\phi(x,a,b)}\right\} ,\label{eq:Qclass}
\end{equation}
where the parameters $(w,A,\rho)\in \R^d\times \R^{d\times d}\times \{\pm 1\}$ satisfy  $\left\Vert w\right\Vert \le2H\sqrt{dk}$ and 
$\left\Vert A\right\Vert _{F}\le\beta^{2}\sqrt{d}$.
Let $\mQ_{\epsilon}$ be a fixed $\epsilon$-covering of $\mQ$ with
respect to the $\ell_{\infty}$ norm $\left\Vert Q-Q'\right\Vert _{\infty}:=\sup_{x,a,b}\left|Q(x,a,b)-Q'(x,a,b)\right|$.
With these notations, we present the subroutine $\findcce$ in Algorithm~\ref{alg:find_cce}. 
The algorithm effectively ``rounds'' the game $ \big(\Qover_{h}^k(x,\cdot,\cdot), \Qunder_{h}^k(x,\cdot,\cdot) \big)$ of interest into a nearby game in the finite $\epsilon $-cover  $\mQ_{\epsilon}\times\mQ_{\epsilon}$, and then uses the CCE of the latter game as an surrogate of the CCE of the original game.
We remark that this rounding step can be implemented efficiently without explicitly computing/maintaining the (exponentially large) $ \epsilon $-net; see Appendix~\ref{sec:find_cce_implement} for details.

\begin{algorithm}[tbh]
\caption{$\protect\findcce$\label{alg:find_cce}}

\begin{algorithmic}[1]
\State \textbf{Input:} $\Qover_{h}^{k}$, $\Qunder_{h}^{k}$, $x$ and discretization parameter $ \epsilon>0 $.

\State Pick a pair $\left(\Qotilde,\Qutilde\right)$ in $\mQ_{\epsilon}\times\mQ_{\epsilon}$
satisfying $\left\Vert \Qotilde-\Qover_{h}^{k}\right\Vert _{\infty}\le\epsilon$
and $\left\Vert \Qutilde-\Qunder_{h}^{k}\right\Vert _{\infty}\le\epsilon$.

\State For the input $x$, let $\widetilde{\sigma}(x)$ be the CCE (cf.\ equation~(\ref{eq:cce}))
of the matrix game with payoff matrices
\[
\Qotilde(x,\cdot,\cdot)\text{ for }\pone\quad\text{and}\quad\Qutilde(x,\cdot,\cdot)\text{ for }\ptwo.
\]

\State \textbf{Output:} $\sigmatilde(x)$.

\end{algorithmic}
\end{algorithm}

\subsubsection{Technical Considerations for $ \findcce $}\label{sec:tech_consideration}

We explain the motivation for using rounding and an $ \epsilon $-cover in $ \findcce $.
First,  note that  the least-squares step of Algorithm~\ref{alg:simu} (line 5--7) uses data from all previous episodes. This introduces complicated probabilistic dependency between the estimation target $ \Vover_{h+1}^k $ and the linear features $ \phi(x_h^\tau, a_h^\tau, b_h^\tau), \tau\in[k-1] $, as they both depend on past data. Such dependency is not present in the usual least-squares estimation in supervised learning. To overcome this issue, a standard approach is to use a covering argument to establish uniform concentration bounds valid for all value functions $ \Vover_{h+1}^k $.\footnote{In the \emph{tabular} setting, recent work in~\cite{agarwal2019sparse,ding2018loo,pananjady2019mrp} bypasses the use of uniform concentration by employing sophisticated \emph{leave-one-out} techniques to decouple the probabilistic dependency. However, it is unclear how such techniques can be used in the function approximation setting.}

While it is straightforward to construct a cover for the Q functions (as we have done in $ \findcce $), doing so for the value functions is challenging  due to \emph{instability} of the equilibria of general-sum games.  In particular, recall that the value function is defined by the CCE value of a general-sum game with two payoff matrices given by the Q functions. The CCE value, however, is \emph{not} a Lipschitz function of the payoff matrices, hence a cover for the former does \emph{not} follow from a cover for the latter.
Indeed, suppose that a game has payoff matrices $ (\Qover_{h}^k,\Qunder_{h}^k)  $ that are $ \epsilon $-close to  another game $( \Qotilde, \Qutilde)$ from the cover $\mQ_{\epsilon}\times\mQ_{\epsilon}$.  Lemma~\ref{lem:not_lipschitz} in Appendix~\ref{sec:stability} shows the following:
\begin{enumerate}[leftmargin=.4in, topsep=4pt] 
\item[(i)] The CCE \emph{values} of the above two games may be $ 1+\epsilon $ away from each other.
\end{enumerate}
Interestingly, general-sum matrix games satisfy another property, proved in Lemma~\ref{lem:eps_cce}, that is seemingly contradictory to property (i) above:
\begin{enumerate}[leftmargin=.4in,topsep=4pt] 
\item[(ii)] The CCE \emph{policy} of the game $( \Qotilde, \Qutilde)$ from the cover is a $ 2\epsilon $-approximate CCE policy for the original game $ (\Qover_{h}^k,\Qunder_{h}^k)  $, and vice versa. 
\end{enumerate}
Here a $ 2\epsilon $-approximate CCE policy is one that satisfies the definition~\eqref{eq:cce} of CCE with an additive $ 2\epsilon $ error on the RHS.
The proof of Lemma~\ref{lem:not_lipschitz} gives an example in which properties (i) and (ii) hold simultaneously.

Due to property (i) above, it is unclear how to run a covering argument \emph{only in the analysis}, since in this case the algorithm would use the CCE value of the original game $ (\Qover_{h}^k,\Qunder_{h}^k)  $ and this value cannot be controlled.
However, thanks to property (ii), it suffices to use the $ \epsilon $-cover \emph{in the algorithm}, since in this case the algorithm actually uses the CCE policy of the game $( \Qotilde, \Qutilde)$ from the finite $ \epsilon $-cover, and its value can be controlled by a union bound over the cover.  The small price we pay is that the resulting UCB/LCB are valid up to an $ 2 \epsilon $ error, which eventually goes into the regret bound. This error can be made negligible relative to the main terms in the regret by choosing a small enough~$ \epsilon $.

In summary,  the above algorithmic use of $ \epsilon $-cover appears crucial under our current framework. We leave as an intriguing open problem whether this algorithmic complication is in fact necessary or can be avoided by a more clever analysis. We also remark that the above issue does not exist in the \emph{tabular} setting, in which case the value functions $ (\Vover_{h}^k , \Vunder_{h}^k )$ are just a pair of finite-dimensional vectors and hence one can directly build an $ \epsilon $-cover for the relevant set of vectors.

\subsection{Theoretical Guarantees}

In each episode $ k $, Algorithm~\ref{alg:find_cce} computes a joint (correlated) policy $ \sigma_h^k $. As NE requires the policies to be in product form, we marginalize $ \sigma_h^k $ into a pair of independent policies $\pi_{h}^{k}(x):=\mP_{1}\sigma_{h}^{k}(x)$ and $\nu_{h}^{k}(x):=\mP_{2}\sigma_{h}^{k}(x)$ for each player. 
Our main theoretical result is the following bound on the total duality gap~\eqref{eq:gap} of these policy pairs. Recall that $ T=KH $ is the total number of timesteps.

\begin{thm}[Offline, Simultaneous Moves]
\label{thm:main_simu} Under Assumption \ref{assu:linear_bounded_simu},
there exists a constant $c>0$ such that the following holds for each
fixed $p\in(0,1)$. Set $\beta=cdH\sqrt{\iota}$ with $\iota:=\log(2dT/p)$
in Algorithm \ref{alg:simu}, and set $\epsilon=\frac{1}{KH}$ in
Algorithm~\ref{alg:find_cce}. Then with probability at least $1-p$,
Algorithm \ref{alg:simu} satisfies the bounds
\begin{align}
\label{eq:main_simu1}
V_{1}^{*,\nu^{k}}(x_{1}^{k})-V_{1}^{\pi^{k},*}(x_{1}^{k})
&\le \Vover_{1}^{k}(x_{1}^{k})-\Vunder_{1}^{k}(x_{1}^{k})+\frac{8}{K}, \quad \forall k\in[K]
\\
\label{eq:main_simu2}
\sum_{k=1}^{K} \left[\Vover_{1}^{k}(x_{1}^{k})-\Vunder_{1}^{k}(x_{1}^{k}) \right]
&\lesssim\sqrt{d^{3}H^{3}T\iota^{2}};
\end{align}
consequently, we have 
\begin{align}
\label{eq:main_simu3}
\mathrm{Gap}(K)
\lesssim\sqrt{d^{3}H^{3}T\iota^{2}}.
\end{align}
\end{thm}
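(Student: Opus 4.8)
The plan is to prove the two displayed bounds \eqref{eq:main_simu1} and \eqref{eq:main_simu2} separately; the claimed bound \eqref{eq:main_simu3} then follows by summing \eqref{eq:main_simu1} over $k\in[K]$, substituting \eqref{eq:main_simu2}, and absorbing the accumulated error $\sum_{k=1}^K 8/K = 8$ into $\sqrt{d^3H^3T\iota^2}$. These two bounds are the two classical pillars of optimistic value iteration: \eqref{eq:main_simu1} is an \emph{optimism} guarantee (the estimated value gap dominates the true duality gap, which is itself nonnegative by Proposition~\ref{prop:weak_duality_simu}), while \eqref{eq:main_simu2} is a \emph{regret-decomposition} bound (the estimated value gap telescopes into a sum of exploration bonuses). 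Throughout I would work on a single high-probability event on which all the relevant concentration inequalities hold simultaneously.

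To prove \eqref{eq:main_simu1}, I would establish by backward induction on $h=H+1,H,\dots,1$ (for a fixed episode $k$) the optimism statements
\begin{align*}
\Vover_h^k(x)\ \ge\ V_h^{*,\nu^k}(x)-2\epsilon(H-h+1),\qquad \Vunder_h^k(x)\ \le\ V_h^{\pi^k,*}(x)+2\epsilon(H-h+1),\qquad\forall x.
\end{align*}
The base case $h=H+1$ is trivial. For the inductive step I would use two ingredients. First, a uniform concentration bound, obtained via a covering argument over the parametric class $\mQ$ of \eqref{eq:Qclass}, shows $(\wover_h^k)^\top\phi+\beta\sqrt{\phi^\top(\Lambda_h^k)^{-1}\phi}\ge r_h+\Pr_h\Vover_{h+1}^k$ pointwise; together with the inductive hypothesis $\Vover_{h+1}^k\ge V_{h+1}^{*,\nu^k}-2\epsilon(H-h)$, the best-response identity $Q_h^{*,\nu^k}=r_h+\Pr_h V_{h+1}^{*,\nu^k}$, and the fact that $\projH$ preserves the inequality since $|Q_h^{*,\nu^k}|\le H$, this yields $\Qover_h^k\ge Q_h^{*,\nu^k}-2\epsilon(H-h)$ (and symmetrically $\Qunder_h^k\le Q_h^{\pi^k,*}+2\epsilon(H-h)$). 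Second, the approximate-CCE property of Lemma~\ref{lem:eps_cce}: since $\findcce$ returns the CCE $\sigma_h^k(x)$ of a game that is $\epsilon$-close to $(\Qover_h^k,\Qunder_h^k)$, the defining inequalities~\eqref{eq:cce} hold for the original game up to $2\epsilon$ slack, so $\Vover_h^k(x)=\E_{\sigma_h^k(x)}\Qover_h^k\ge\max_{a'}\E_{b\sim\nu_h^k(x)}\Qover_h^k(x,a',b)-2\epsilon$ with $\nu_h^k=\mP_2\sigma_h^k$. Chaining this with the Bellman recursion $V_h^{*,\nu^k}(x)=\max_{a'}\E_{b\sim\nu_h^k(x)}Q_h^{*,\nu^k}(x,a',b)$ advances the upper bound, and the symmetric chain using \eqref{eq:cce2} advances the lower bound. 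Evaluating at $h=1$, $x=x_1^k$ and combining the two yields \eqref{eq:main_simu1}, the additive $8/K$ collecting all accumulated $O(\epsilon)$ slacks with $\epsilon=1/(KH)$.

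To prove \eqref{eq:main_simu2}, I would set up a one-step recursion for the nonnegative estimated gap $\delta_h^k:=\Vover_h^k(x_h^k)-\Vunder_h^k(x_h^k)$. Writing $\phi_h^k:=\phi(x_h^k,a_h^k,b_h^k)$, the same concentration bound controls the least-squares difference $(\wover_h^k-\wunder_h^k)^\top\phi$ against $\Pr_h(\Vover_{h+1}^k-\Vunder_{h+1}^k)$, giving $\Qover_h^k(x_h^k,a_h^k,b_h^k)-\Qunder_h^k(x_h^k,a_h^k,b_h^k)\le[\Pr_h(\Vover_{h+1}^k-\Vunder_{h+1}^k)](x_h^k,a_h^k,b_h^k)+2\beta\sqrt{(\phi_h^k)^\top(\Lambda_h^k)^{-1}\phi_h^k}$. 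Taking expectation over $(a_h^k,b_h^k)\sim\sigma_h^k(x_h^k)$ turns the left side into $\delta_h^k$, while replacing the conditional expectation by the realized $\delta_{h+1}^k$ introduces a martingale-difference term. Unrolling from $h=1$ to $H$ and summing over $k$ gives
\begin{align*}
\sum_{k=1}^K\delta_1^k\ \lesssim\ \beta\sum_{h=1}^H\sum_{k=1}^K\sqrt{(\phi_h^k)^\top(\Lambda_h^k)^{-1}\phi_h^k}\ +\ (\text{martingale})\ +\ \epsilon KH.
\end{align*}
The elliptical-potential (self-normalized) lemma gives $\sum_{k=1}^K(\phi_h^k)^\top(\Lambda_h^k)^{-1}\phi_h^k\lesssim d\iota$ for each $h$, so Cauchy--Schwarz bounds the inner sum by $\sqrt{dK\iota}$ and the whole bonus sum by $\beta H\sqrt{dK\iota}$; the martingale sum is $\widetilde O(H\sqrt T)$ by Azuma--Hoeffding and the rounding error is $\epsilon KH=O(1)$, both lower order. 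Substituting $\beta=cdH\sqrt\iota$ and $K=T/H$ collapses the leading term to $\sqrt{d^3H^3T\iota^2}$, proving \eqref{eq:main_simu2}.

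The main obstacle is the concentration bound invoked in both parts, precisely because the least-squares targets $\Vover_{h+1}^k,\Vunder_{h+1}^k$ depend on the same trajectory data used to form $\wover_h^k,\wunder_h^k$, breaking the independence needed for off-the-shelf self-normalized martingale bounds. The remedy is a union bound over the finite $\epsilon$-cover $\mQ_\epsilon$ of the class \eqref{eq:Qclass}, which is legitimate only because $\findcce$ \emph{rounds into the cover}: by the CCE-value instability (Lemma~\ref{lem:not_lipschitz}) one cannot cover the value functions directly, whereas by the approximate-CCE-policy property (Lemma~\ref{lem:eps_cce}) the realized values $\Vover_{h+1}^k,\Vunder_{h+1}^k$ are genuinely generated from a game lying in the finite cover and so fall within the union bound at the cost of the $2\epsilon$ slack already tracked above. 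The remaining quantitative step is to bound the $\ell_\infty$-covering number of $\mQ$ --- in particular under the constraints $\|w\|\le 2H\sqrt{dk}$ and $\|A\|_\textup{F}\le\beta^2\sqrt d$ --- so that its logarithm is absorbed into $\iota$.
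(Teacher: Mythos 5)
Your proposal is correct and follows essentially the same five-step route as the paper's own proof: uniform concentration via a union bound over the algorithmic $\epsilon$-net $\mQ_\epsilon$ (justified, exactly as the paper argues, by the contrast between CCE-value instability and the $2\epsilon$-approximate-CCE policy property), least-squares error bounds, backward-induction optimism/pessimism with $O(\epsilon H)$ slack, the recursive decomposition of $\delta_h^k$ into martingale differences plus bonuses, and Azuma--Hoeffding together with the elliptical potential lemma. The only discrepancy is cosmetic: in your recursion for $\delta_h^k$ the bonus constant should be $4\beta$ rather than $2\beta$ (the two explicit bonuses in the definitions of $\Qover_h^k,\Qunder_h^k$ add $2\beta\Vert\phi_h^k\Vert_{(\Lambda_h^k)^{-1}}$ on top of the $2\beta\Vert\phi_h^k\Vert_{(\Lambda_h^k)^{-1}}$ from the two least-squares error bounds), which is absorbed by the $\lesssim$ and does not affect the result.
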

The proof is given in Section~\ref{sec:proof_main_simu}. Below we provide discussion and remarks on this theorem.

\paragraph{Optimality of the bound:}

The theorem provides an (instance-independent) bound scaling with  $\sqrt{T}$. As the total duality gap reduces to the usual notion of regret in the special case MDPs, our bound is optimal in $ T $ in view of known minimax lower bounds for MDPs~\citep{lattimore2018bandit}. 
Also note that our bound is independent of the cardinality $|\mS|$ and $|\mA|$ of the state/action spaces, but rather depends only on dimension $ d $ of the feature space,  thanks to the use of function approximation. 
To investigate the tightness of the dependence of our bound on $d$ and $H$, we recall that our setting covers the standard tabular MDPs and linear bandits  as  special cases. A direct reduction from the known lower bounds on tabular MDPs gives a lower bound $\Omega(\sqrt{dH^2T})$ for the case of nonstationary transitions~\citep{jin2018q,azar2017minimax}. Our bound is off by a factor of $ \sqrt{H} $, which may be improved by using a ``Bernstein-type'' bonus term~\citep{azar2017minimax,jin2018q}. Results from linear bandits give the lower bound $\Omega(\sqrt{d^2T})$. The additional $ \sqrt{d} $ factor in our bound is due to a covering argument applied to the $ d $-dimensional feature space for establishing uniform concentration bounds.

\paragraph{Computational complexity:}

Our algorithm can be implemented efficiently, with computational and memory
complexities polynomial in $H,K,d$ and $\left|\mA\right|$. In particular, note that a CCE of a general-sum game can be found in polynomial time~\citep{papadimitriou2008computing,blum2008regret}.\footnote{This can be done by linear programming---as the inequalities in the definition~(\ref{eq:cce}) of CCE are linear in $\sigma$---or by self-playing a no-regret algorithm~\citep{blum2008regret}.}
Moreover, in Algorithm~\ref{alg:simu} we do \emph{not}  need to compute
$\Qover(x,\cdot,\cdot),\Vover(x)$ and $\sigmatilde(x)$ etc.\ for
\emph{all} $x \in \mS $; rather, we only need to do so for the states $\{ x_{h}^{k}\} $
actually encountered in the algorithm.
Similarly, we do not need to explicitly compute or store the $\epsilon$-net $\mQ_{\epsilon}$ in \findcce
~(Algorithm~\ref{alg:find_cce}). It suffices if we
can find an element in $\mQ_{\epsilon}$ that is $\epsilon$-close
to a given function in $\mQ$, which  can  be done efficiently
on the fly. Indeed, each function in $\mQ$ has a succinct representation
using $(w,A)\in\R^{d}\times\R^{d\times d}$. We can (implicitly)
maintain a covering of the space of $(w,A)$, and find a nearby element from
this covering when needed, which can be done in $ O(d^2) $ time via coordinate-wise rounding. See Appendix~\ref{sec:find_cce_implement} and Lemma~\ref{lem:fast_covering} therein for details.

\paragraph{Sample complexity and PAC guarantees:}

The regret bound in Theorem~\ref{thm:main_simu} can be converted
into a bound on the sample complexity. For simplicity we assume that
the initial state $x_{1}$ is fixed; for the general case where $x_{1}$ is sampled from a fixed distribution,
we can simply add an additional time step at the beginning of each
episode. After $K$ episodes, we choose, among the $ K $ policy pairs $ (\pi^k, \nu^k ), k\in[K] $ computed by Algorithm~\ref{alg:simu}, the pair $ (\pi^{k_0}, \nu^{k_0}) $ with the minimum gap between the UCB and LCB; that is,
\[
k_0 = \arg\min_{k\in [K]} \left\{ \Vover_{1}^{k}(x_{1})-\Vunder_{1}^{k}(x_{1}) \right\}.
\]
Note that the UCB/LCB, $\Vover_{1}^{k}(x_{1})$ and $ \Vunder_{1}^{k}(x_{1}) $, are computed by the algorithm and hence their values are known. This policy pair $ (\pi^{k_0}, \nu^{k_0}) $ satisfies the bound
\begin{align*}
&V_{1}^{*,\nu^{k_0}}(x_{1})-V_{1}^{\pi^{k_0},*}(x_{1}) \\
& \qquad \le   \Vover_{1}^{k_0}(x_{1})-\Vunder_{1}^{k_0}(x_{1})+\frac{8}{K} && \text{inequality~\eqref{eq:main_simu1}}\\
& \qquad \le   \frac{1}{K}\text{\ensuremath{\sum_{k=1}^{K}\left[\Vover_{1}^{k}(x_{1})-\Vunder_{1}^{k}(x_{1})\right]}} + \frac{8}{K}  && \text{min $ \le $ average}\\
& \qquad \lesssim  \sqrt{\frac{d^{3}H^{5}\iota^{2}}{T}}. && \text{inequality~\eqref{eq:main_simu2} divided by $ K=T/H $} 
\end{align*}
Therefore, we can find an $\epsilon$-approximate NE (meaning that
the last RHS is bounded by $\epsilon$) with a sample complexity of $T=O\left(\frac{d^{3}H^{5}\iota^{2}}{\epsilon^{2}}\right)$. By playing the policy pair $ (\pi^{k_0}, \nu^{k_0})  $ in all subsequent episodes, we obtain a PAC-type guarantee~\cite{kakade2003thesis} in the sense that an  $\epsilon$-approximate NE is played in all but $ O\left(\frac{d^{3}H^{5}\iota^{2}}{\epsilon^{2}}\right) $ timesteps.

\subsection{Turn-Based Games\label{sec:main_turn}}

In this section, we consider turn-based Markov games, which is a
special case of simultaneous-move Markov games. 
Algorithm~\ref{alg:simu} can be specialized to this setting. For completeness, we provide the resulting algorithm in Algorithm~\ref{alg:turn} in Appendix~\ref{sec:algos_turn_offline}. Note that for turn-based games,  the $ \findcce $  routine is simplified to the subroutines $ \findmax  $ and $ \findmin $ given in Algorithm~\ref{alg:find_max}, because each state is controlled by a single player and hence finding a CCE reduces to computing a maximizer or minimizer.

As a corollary of Theorem~\ref{thm:main_simu}, we have the following bound on the total duality gap, which is defined in the same way as in equation~\eqref{eq:gap}.
\begin{cor}[Offline, Turn-based]
\label{thm:main_turn} Under Assumption~\ref{assu:linear_bounded_turn},
there exists a constant $c>0$ such that, for each fixed $p\in(0,1)$,
by setting $\beta=cdH\sqrt{\iota}$ with $\iota:=\log(2dT/p)$ in
Algorithm \ref{alg:turn}, then with probability at least $1-p$,
Algorithm \ref{alg:turn} satisfies bound
\[
\mathrm{Gap}(K)\lesssim\sqrt{d^{3}H^{3}T\iota^{2}}.
\]
\end{cor}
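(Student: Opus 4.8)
The plan is to prove Corollary~\ref{thm:main_turn} by reduction, realizing the turn-based game (and Algorithm~\ref{alg:turn}) as a special instance of the simultaneous-move game (and Algorithm~\ref{alg:simu}), and then invoking Theorem~\ref{thm:main_simu}. The embedding is the one sketched at the end of Section~\ref{sec:setup_turn}: define a simultaneous-move feature map $\tilde\phi:\mS\times\mA\times\mA\to\R^{d}$ by $\tilde\phi(x,a,b)=\phi(x,a)$ for $x\in\mS_{1}$ and $\tilde\phi(x,a,b)=\phi(x,b)$ for $x\in\mS_{2}$, keeping the same weights $\theta_{h},\mu_{h}$. First I would check that this satisfies Assumption~\ref{assu:linear_bounded_simu}: the induced $r_{h},\Pr_{h}$ agree with the turn-based ones on the active player's action, are independent of the inactive action, and the norm bounds $\|\tilde\phi\|\le1$, $\|\theta_{h}\|\le\sqrt d$, $\|\mu_{h}(\mS)\|\le\sqrt d$ are inherited verbatim from Assumption~\ref{assu:linear_bounded_turn}. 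Since the inactive player's action never affects reward or transition, the value, best-response, and NE functions of the turn-based game coincide with those of the embedded simultaneous-move game, so $\mathrm{Gap}(K)$ denotes the same quantity in both formulations.

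Next I would verify that Algorithm~\ref{alg:turn} reproduces the iterates that Algorithm~\ref{alg:simu} produces on the embedded game. The least-squares step (lines~5--7) evaluates $\tilde\phi$ at the executed tuples $(x_{h}^{\tau},a_{h}^{\tau},b_{h}^{\tau})$; because $\tilde\phi$ ignores the inactive action, these equal the turn-based features at the active action, so $\Lambda_{h}^{k}$, $\wover_{h}^{k}$, $\wunder_{h}^{k}$, and hence $\Qover_{h}^{k},\Qunder_{h}^{k}$, are identical to those of Algorithm~\ref{alg:turn}. It then remains to match the value-construction step (lines~10--12). At a state $x\in\mS_{1}$ the matrices $\Qover_{h}^{k}(x,\cdot,\cdot)$ and $\Qunder_{h}^{k}(x,\cdot,\cdot)$ are independent of $b$, so the defining inequalities~\eqref{eq:cce} force the $\pone$-marginal of any CCE onto $\arg\max_{a}\Qover_{h}^{k}(x,a,b)$ (well defined, as the matrix is independent of $b$) while leaving the $\ptwo$-marginal free; taking the pure maximizer $a'$ yields $\Vover_{h}^{k}(x)=\max_{a}\Qover_{h}^{k}(x,a,b)$ and $\Vunder_{h}^{k}(x)=\Qunder_{h}^{k}(x,a',b)$, which is precisely the output of $\findmax$. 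The symmetric computation at $x\in\mS_{2}$ reproduces $\findmin$. Hence the executed policies, and therefore $\mathrm{Gap}(K)$, agree, and Theorem~\ref{thm:main_simu} applied to the embedded game yields $\mathrm{Gap}(K)\lesssim\sqrt{d^{3}H^{3}T\iota^{2}}$.

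The one place requiring care---and the main obstacle---is reconciling the rounding inside $\findcce$ (the $\epsilon$-cover step of Algorithm~\ref{alg:find_cce}) with the exact maximization/minimization used by $\findmax$/$\findmin$. The subtlety is that the nearest cover element $\Qotilde$ to a $b$-independent matrix need not itself be $b$-independent, so the rounded game's CCE value is not literally the single-player maximum. This, however, is exactly where turn-based games are genuinely easier: the map $Q\mapsto\max_{a}Q(x,a,b)$ is $1$-Lipschitz in $\ell_{\infty}$, so the instability phenomenon of Lemma~\ref{lem:not_lipschitz}(i)---the sole reason the $\epsilon$-cover must be used \emph{inside} the algorithm in the simultaneous case---does not arise. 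Consequently the uniform-concentration argument underlying Theorem~\ref{thm:main_simu} can be run directly on the exact-max value estimates, with the covering taken purely in the analysis. I would spell this out by noting that a pure single-player maximizer is an exact (hence $2\epsilon$-approximate) CCE whose value lies within $\epsilon$ of the rounded game's value, so every step in the proof of Theorem~\ref{thm:main_simu} goes through with the same constants and the $O(1/K)$ rounding slack in~\eqref{eq:main_simu1} is harmlessly absorbed; the final bound is therefore unchanged.
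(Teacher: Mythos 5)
Your reduction is exactly the paper's proof: embed the turn-based game as a degenerate simultaneous-move game (feature, reward and transition independent of the inactive player's action), check Assumption~\ref{assu:linear_bounded_simu}, verify that Algorithm~\ref{alg:simu} collapses to Algorithm~\ref{alg:turn}, and invoke Theorem~\ref{thm:main_simu}; your first two paragraphs match the paper's argument step for step.

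The one divergence is your third paragraph, and the ``main obstacle'' it attacks does not actually exist. First, $\findmax$ (Algorithm~\ref{alg:find_max}) is \emph{not} exact maximization: like $\findcce$, it first rounds $Q$ to an element $\Qotilde$ of the cover $\mQ_{\epsilon}$ and only then maximizes $\Qotilde(x,\cdot)$; the paper retains the rounding in the turn-based subroutines precisely so that they match $\findcce$. Second, your worry that the nearest cover element to a $b$-independent payoff matrix need not itself be $b$-independent cannot arise here, because $\mQ_{\epsilon}$ is not an arbitrary $\ell_{\infty}$-net of matrices: it is a cover of the parametric class~\eqref{eq:Qclass}, whose members are functions of the feature $\phi(x,a,b)$ alone. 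Under the embedding, $\phi(x,a,b)$ equals $\phi(x,a)$ at states $x\in\mS_{1}$, so \emph{every} element of $\mQ_{\epsilon}$ is automatically $b$-independent at those states. Consequently the CCE conditions~\eqref{eq:cce} for the rounded pair $\left(\Qotilde,\Qutilde\right)$ force the $\pone$-marginal onto $\arg\max_{a}\Qotilde(x,a)$ while rendering $\ptwo$'s marginal irrelevant, and choosing the pure-strategy CCE at that maximizer reproduces the updates of Algorithm~\ref{alg:turn} verbatim: the degeneration of $\findcce$ to $\findmax$/$\findmin$ is exact, not approximate, and the corollary follows from Theorem~\ref{thm:main_simu} with no extra slack and no re-derivation of the concentration step. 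Your Lipschitz-of-max observation is nonetheless correct and is precisely the mechanism the paper exploits in the online setting (Lemma~\ref{lem:covering_V}, non-expansiveness of the maximin operator); it would indeed permit replacing $\findmax$ by an exact maximizer with the covering done purely in the analysis---but as written that argument proves the bound for a \emph{modified} algorithm rather than for Algorithm~\ref{alg:turn} as stated, and it is unnecessary for the corollary.
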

 We prove this corollary in Appendix~\ref{sec:proof_main_turn}.

%!TEX root =main.tex

\section{Main Results for the Online Setting\label{sec:main_oneline_simu}}

In this section, we consider the online setting, where we control
$\pone$ and play against an arbitrary (and potentially adversarial)
$\ptwo$. Our goal is to maximize the reward of $ \pone $. Below we describe the performance metrics, followed by our algorithms and theoretical guarantees.

\subsection{Setup and Performance Metrics}\label{sec:setup_online}

We consider the episodic setting as described in Section~\ref{sec:setup_offline}. Let  $\pi = (\pi^{k})$ and $\nu = (\nu^{k})$ be the policy sequences for $ \pone $ and $\ptwo$, respectively, where $ \nu $ is arbitrary. We do not know $ \ptwo $'s choice of $ \nu $ nor the Markov model of the game a priori, and would like learn a good policy $ \pi $ online so as to optimize the reward $ \sum_k V_{1}^{\pi^k,\nu^k} $ received by $ \pone $ over $ K $ episodes. To this end, we are interested in bounding, for each $ \nu $, the total (expected)  regret
\begin{equation} \label{eq:regret}
\textrm{Regret}_{\nu}(K):=\sum_{k=1}^{K}\big[V_{1}^{*}(x_{1}^{k})-V_{1}^{\pi^{k},\nu^{k}}(x_{1}^{k})\big],
\end{equation}
where $x_{1}^{k}$ is the (arbitrary) initial state in the $k$-th episode. If we can obtain a bound on $ \textrm{Regret}_{\nu}(K) $ that scales sublinearly with $ K $ for all $ \nu $, then we are guaranteed that regardless of $ \nu $, the reward collected by $ \pone $ is no worse (in the long run) than its optimal worst-case reward, that is, the NE value $ V_{1}^{*} $.

We note that a special case of the above  setting is when  $\ptwo$ is omniscient and always
plays the best response to $ \pone $'s policy, i.e., 
\[
\nu^{k}= \br(\pi^k) \in \arg\min_{\nu'\in\simplex}V_{1}^{\pi^{k},\nu'}(x_1^k), \quad \forall k\in[K].
\]
Note that in this case, we have  $V_{1}^{\pi^{k},\nu^{k}}(x_{1}^{k})=V_{1}^{\pi^{k},*}(x_{1}^{k})$
by definition.

\subsection{Algorithm}\label{sec:alg_online}

We adapt the \omvi\ algorithm to the online setting, as given in Algorithm~\ref{alg:online_simu}. This algorithm can be viewed as a one-sided version of Algorithm~\ref{alg:simu}: we compute least-squares estimate for the linear coefficients and then construct UCBs for the value functions---we do not need to construct LCBs as $ \ptwo $ is not controlled by us.  Constructing the UCBs is done by finding the NE of the \emph{zero-sum} matrix game with the payoff matrix $Q_{h}^{k}(x,\cdot,\cdot)$.
%Recalling the definition of NE, we see that the pair $(\pi_{h}^{k}(x),B_{0}) \in\simplex\times\simplex$ computed in the algorithm
%satisfies
%\begin{align}
%\E_{a\sim\pi_{h}^{k}(x),b\sim B_{0}}\left[Q_{h}^{k}(x,a,b)\right] & =\max_{A\in\simplex}\E_{a\sim A,b\sim\nu_{h}^{k}(x)}\left[Q_{h}^{k}(x,a,b)\right]=\min_{B\in\simplex}\E_{a\sim\pi_{h}^{k}(x),b\sim B}\left[Q_{h}^{k}(x,a,b)\right], %\label{eq:nash_online}
%\end{align}
%for each $ x \in \mS$. 

\begin{algorithm}[ht]
\caption{\omvi\ (Simultaneous Move, Online)\label{alg:online_simu}}

\begin{algorithmic}[1]
	
\State \textbf{Input:} bonus parameter $ \beta>0 $.
	
\For{ episode $k=1,2,\ldots,K$ } 

\State Receive initial state $x_{1}^{k}$.

\For{step $h=H,H-1,\ldots,2,1$ } \Comment{update policy}

\State $\Lambda_{h}^{k}\leftarrow\sum_{\tau=1}^{k-1}\phi(x_{h}^{\tau},a_{h}^{\tau},b_{h}^{\tau})\phi(x_{h}^{\tau},a_{h}^{\tau},b_{h}^{\tau})^{\top}+I.$

\State $w_{h}^{k}\leftarrow(\Lambda_{h}^{k})^{-1}\sum_{\tau=1}^{k-1}\phi(x_{h}^{\tau},a_{h}^{\tau},b_{h}^{\tau})\left[r_{h}(x_{h}^{\tau},a_{h}^{\tau},b_{h}^{\tau})+V_{h+1}^{k}(x_{h+1}^{\tau})\right]$.

\State $Q_{h}^{k}(\cdot,\cdot,\cdot)\leftarrow\projH\left\{ (w_{h}^{k})^{\top}\phi(\cdot,\cdot,\cdot)+\beta\sqrt{\phi(\cdot,\cdot,\cdot)^{\top}(\Lambda_{h}^{k})^{-1}\phi(\cdot,\cdot,\cdot)}\right\} $.

\State For each $x$, let $(\pi_{h}^{k}(x),B_{0})$ be the
NE of the matrix game with payoff matrix $Q_{h}^{k}(x,\cdot,\cdot)$.

\State $V_{h}^{k}(\cdot)\leftarrow\E_{a\sim\pi_{h}^{k}(\cdot),b\sim B_{0}}\left[Q_{h}^{k}(\cdot,a,b)\right].$

\EndFor

\For{step $h=1,2,\ldots,H$ } \Comment{execute policy}

\State $\pone$ take action $a_{h}^{k}\sim\pi_{h}^{k}(x_{h}^{k})$.

\State Let $\ptwo$ play; denote its action by $b_{h}^{k}$.

\State Observe next state $x_{h+1}^{k}$.
\EndFor
\EndFor

%\State \textbf{Output:} policy sequence $ \pi^k, k\in[K]  $
\end{algorithmic}
\end{algorithm}

Due to the one-sided nature of the online setting, some of the difficulties in the offline setting---pertaining to general-sum games and CCE---no longer exist here. In particular, Algorithm~\ref{alg:online_simu} no longer requires the $ \findcce $ subroutine that makes use of an $\epsilon$-cover. Technically, this is due to the fact that zero-sum matrix games are  more well-behaved than general-sum games. In particular, the value of a zero-sum game \emph{is} Lipschitz in the payoff matrix, hence  uniform concentration can be established in a more straightforward manner (cf.\ the discussion in Section~\ref{sec:alg_offline}).

\subsection{Regret Bound Guarantees}

We establish the following bound on the total regret~\eqref{eq:regret} achieved by Algorithm~\ref{alg:online_simu}. 
\begin{thm}[Online, Simultaneous Move]
\label{thm:main_online_simu} Under Assumption~\ref{assu:linear_bounded_simu},
there exists a constant $c>0$ such that the following holds for each
fixed $p\in(0,1)$ and any policy sequence $ \nu $ for $ \ptwo $.
Set $\beta=cdH\sqrt{\iota}$ with $\iota:=\log(2dT/p)$.
Then with probability at least $1-p$, Algorithm~\ref{alg:online_simu} achieves the regret bound 
\[
\mathrm{Regret}_{\nu}(K)\lesssim\sqrt{d^{3}H^{3}T\iota^{2}}.
\]
\end{thm}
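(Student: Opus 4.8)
The plan is to follow the now-standard optimism-based analysis of least-squares value iteration developed for linear MDPs in \citet{jin2019linear}, adapting it to the zero-sum matrix-game structure at each step. The argument has three pieces: (i) a uniform concentration bound certifying that the least-squares estimate tracks the Bellman update up to the bonus; (ii) an optimism claim $V_h^k(x)\ge V_h^*(x)$ proved by backward induction; and (iii) a telescoping regret decomposition whose two resulting terms are controlled by the elliptical-potential lemma and the Azuma--Hoeffding inequality.

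First I would establish the concentration bound. Writing $\phi_h^\tau:=\phi(x_h^\tau,a_h^\tau,b_h^\tau)$, the goal is to show that, with probability at least $1-p$, for all $(k,h)$ and all $(x,a,b)$,
\[
\bigl|(w_h^k)^\top\phi(x,a,b)-r_h(x,a,b)-[\Pr_h V_{h+1}^k](x,a,b)\bigr|\le\beta\sqrt{\phi(x,a,b)^\top(\Lambda_h^k)^{-1}\phi(x,a,b)}.
\]
The obstacle is that $V_{h+1}^k$ is itself data-dependent (built from the same trajectories that define $\Lambda_h^k$ and the $\phi_h^\tau$), so a naive self-normalized martingale bound does not apply. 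I would resolve this by a covering argument over the class $\mV$ of value functions the algorithm can produce: each such $V$ is the \emph{value of a zero-sum matrix game} whose payoff matrix $Q(x,\cdot,\cdot)$ lies in a parametric class of the form~\eqref{eq:Qclass}. The crucial simplification in the online setting is that the value of a zero-sum game is $1$-Lipschitz in its payoff matrix in $\ell_\infty$, so an $\epsilon$-cover of $\mQ$ induces an $\epsilon$-cover of $\mV$ directly (this is precisely what fails for CCE values in the offline case). Combining an $\ell_\infty$-cover of $\mQ$, whose log-cardinality is $\tilde O(d^2)$, with a self-normalized martingale concentration inequality from the linear-bandit literature~\citep{lattimore2018bandit} then yields the displayed bound with $\beta=cdH\sqrt{\iota}$.

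Next I would prove optimism, $V_h^k(x)\ge V_h^*(x)$, by downward induction on $h$ (trivial at $h=H+1$). Given the inductive hypothesis and monotonicity of $\Pr_h$, the concentration bound yields $Q_h^k(x,a,b)\ge r_h+[\Pr_h V_{h+1}^k]\ge r_h+[\Pr_h V_{h+1}^*]=Q_h^*(x,a,b)$ after the clipping $\projH$; taking the game value $\max_A\min_B$ on both sides preserves the inequality, so $V_h^k\ge V_h^*$ and in particular $V_1^k(x_1^k)\ge V_1^*(x_1^k)$. The regret is thus bounded by $\sum_k[V_1^k(x_1^k)-V_1^{\pi^k,\nu^k}(x_1^k)]$. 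To control this, set $\delta_h^k:=V_h^k(x_h^k)-V_h^{\pi^k,\nu^k}(x_h^k)$ and exploit the maximin property of the NE strategy $\pi_h^k$: since $\pi_h^k$ secures at least the game value against any response, $V_h^k(x)\le\E_{a\sim\pi_h^k,b\sim\nu_h^k}Q_h^k(x,a,b)$. Subtracting $V_h^{\pi^k,\nu^k}=\E_{a\sim\pi_h^k,b\sim\nu_h^k}Q_h^{\pi^k,\nu^k}$ and invoking the upper confidence bound gives $\delta_h^k\le\E_{a\sim\pi_h^k,b\sim\nu_h^k}[2\beta\sqrt{\phi^\top(\Lambda_h^k)^{-1}\phi}]+\delta_{h+1}^k+\xi_h^k$, where $\xi_h^k$ is a martingale difference from replacing $\Pr_h$ by the realized transition.

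Finally I would telescope over $h$ and sum over $k$. After replacing expected by realized features (another martingale step), the bonus term $\sum_{k,h}\sqrt{(\phi_h^k)^\top(\Lambda_h^k)^{-1}\phi_h^k}$ is bounded via Cauchy--Schwarz and the elliptical-potential lemma by $\sum_h\sqrt{K\cdot d\iota}=\sqrt{dHT\iota}$; multiplying by $2\beta=\tilde O(dH)$ produces the leading term $\sqrt{d^3H^3T\iota^2}$. The accumulated martingale terms are $\tilde O(H\sqrt{T})$ by Azuma--Hoeffding (each increment is $O(H)$), which is lower order. I expect the main obstacle to be piece (i): making the uniform concentration rigorous, namely verifying that $(w_h^k,(\Lambda_h^k)^{-1})$ stay within the parametric ranges defining $\mQ$ so the cover applies, and tracking the covering-number contribution into $\beta$. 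The Lipschitzness of zero-sum game values is exactly what renders this step tractable and is the reason the online algorithm avoids the in-algorithm $\epsilon$-net required in the offline CCE setting.
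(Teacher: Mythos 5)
Your proposal is correct and follows essentially the same route as the paper's proof: a covering argument over the value class $\mV$ made possible by the non-expansiveness of the zero-sum maximin value in the payoff matrix (including your observation that this is exactly what fails for CCE values offline), a self-normalized concentration bound with $\beta \asymp dH\sqrt{\iota}$, optimism $V_h^k \ge V_h^*$ by backward induction, and a recursive decomposition $\delta_h^k \le \delta_{h+1}^k + (\text{martingale terms}) + 2\beta\|\phi_h^k\|_{(\Lambda_h^k)^{-1}}$ using the maximin property of $\pi_h^k$, finished with Azuma--Hoeffding and the elliptical-potential lemma. The bookkeeping details you flag as potential obstacles (boundedness of $w_h^k$ so the cover applies, splitting the per-step error into the two martingale differences over $a_h^k$ and the transition) are handled in the paper exactly as you anticipate, so no gap remains.
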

The proof is given in Appendix~\ref{sec:proof_main_online_simu}. Note that the regret bound holds for any policy $ \nu $ of $ \ptwo  $ and any initial states $ \{x_1^k\} $. Moreover, the bound is sublinear in $ T $---scaling with $ \sqrt{T} $ in particular---and depends polynomially on $ d $ and $ H $. 
As our regret reduces to the standard regret notion in the special cases of MDPs and linear bandits, the discussion in Section~\ref{sec:alg_offline} on the optimality of bounds, also applies here. 

We remark that the above bound provides a uniform guarantee for $ \pone $'s performance, regardless of the policy of the opponent $\ptwo$. An interesting future direction is to achieve a more refined guarantee that \emph{exploits a weak opponent}. In particular, such a guarantee would involve a stronger notion of regret in which, instead of competing with the Nash value $ \sum_{k} V_{1}^{*}(x_{1}^{k}) $ as in the current definition~\eqref{eq:regret},  one competes against the value $  \max_{\pi} \sum_{k=1}^K V_1^{\pi, \nu^k}(x_1^k) $ achieved by the best fixed policy in hindsight. We believe doing so would require modifying the algorithm, which is left to future work.

\subsection{Turn-Based Games\label{sec:main_online_turn}}

The algorithm above can be specialized to online turn-based games. 
For completeness we provide resulting algorithm in Appendix~\ref{sec:algos_turn_offline} as Algorithm~\ref{alg:online_turn}.
Note that in the turn-based setting, we only need to solve a unilateral
maximization or minimization problem, rather than solving zero-sum
games as is needed in the simultaneous-move setting. 

As an immediate corollary of Theorem~\ref{thm:main_online_simu}, we have the following regret bound for turn-based games in the online setting. 
\begin{cor}[Online, Turn-based]
\label{thm:main_online_turn}
Under Assumption~\ref{assu:linear_bounded_turn},
there exists a constant $c>0$ such that the following holds for each fixed $p\in(0,1)$ and any policy sequence $ \nu $ for $ \ptwo $.
Set $\beta=cdH\sqrt{\iota}$ with $\iota:=\log(2dT/p)$ in
Algorithm~\ref{alg:online_turn}. Then with probability at least $1-p$, Algorithm
\ref{alg:online_turn} achieves the regret bound 
\[
\mathrm{Regret}_{\nu}(K)\lesssim\sqrt{d^{3}H^{3}T\iota^{2}}.
\]
\end{cor}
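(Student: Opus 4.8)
The plan is to derive Corollary~\ref{thm:main_online_turn} as a direct reduction to Theorem~\ref{thm:main_online_simu}, exploiting the fact—established in Section~\ref{sec:setup_turn}—that a turn-based Markov game is a special case of a simultaneous-move game in which, at each state $x$, exactly one player's action is relevant and the other player's action has no effect on the reward $r_h$, the transition $\Pr_h$, or the feature map $\phi$. Concretely, for $x\in\mS_1$ the quantities $r_h(x,a,b)$, $\Pr_h(\cdot\,|x,a,b)$ and $\phi(x,a,b)$ do not depend on $b$, and symmetrically for $x\in\mS_2$ they do not depend on $a$. Under this embedding, Assumption~\ref{assu:linear_bounded_turn} for the turn-based game is precisely Assumption~\ref{assu:linear_bounded_simu} for the embedded simultaneous-move game, so the hypotheses of Theorem~\ref{thm:main_online_simu} are met with the same dimension $d$, horizon $H$, bonus $\beta=cdH\sqrt{\iota}$ and failure probability $p$.

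The key step is to verify that Algorithm~\ref{alg:online_turn}, which uses the simplified $\findmax$/$\findmin$ subroutines, produces exactly the same iterates as Algorithm~\ref{alg:online_simu} would on the embedded game. The only place the two algorithms could differ is line~8, where Algorithm~\ref{alg:online_simu} computes the NE of the zero-sum matrix game with payoff $Q_h^k(x,\cdot,\cdot)$. When $x\in\mS_1$, the payoff $Q_h^k(x,a,b)$ is independent of $b$ (since $\phi$, and hence the entire linear-plus-bonus expression defining $Q_h^k$, is independent of $b$), so the NE of the matrix game degenerates to $\pone$ playing $\arg\max_a Q_h^k(x,a)$ with $\ptwo$'s marginal irrelevant; this is exactly what $\findmax$ returns, and the resulting $V_h^k(x)=\max_a Q_h^k(x,a)$ agrees. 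The symmetric argument with $\findmin$ handles $x\in\mS_2$. Thus the value-function iterates $V_h^k$, the induced policy $\pi^k$, and hence the regret trajectory coincide under the two algorithms, so the regret bound transfers verbatim.

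The main obstacle—and it is minor—is confirming that the technical normalization and boundedness conditions survive the embedding without introducing spurious $|\mA|$-dependence or breaking the feature norm bound $\|\phi\|\le 1$. Because the embedded feature simply ignores one coordinate of the action pair rather than concatenating anything, the bounds $\|\phi\|\le 1$, $\|\theta_h\|\le\sqrt d$ and $\|\mu_h(\mS)\|\le\sqrt d$ pass through unchanged, and the effective linear dimension remains $d$ rather than inflating. Consequently Theorem~\ref{thm:main_online_simu} applies with identical constants, and its conclusion $\mathrm{Regret}_\nu(K)\lesssim\sqrt{d^3H^3T\iota^2}$ is exactly the claimed bound for the turn-based setting. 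I would therefore phrase the proof as a short reduction: embed the game, observe the algorithmic equivalence at the degenerate matrix-game step, and invoke Theorem~\ref{thm:main_online_simu} to conclude.
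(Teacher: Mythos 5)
Your proposal is correct and follows essentially the same route as the paper, which proves the corollary by specializing Theorem~\ref{thm:main_online_simu} via the embedding~\eqref{eq:degeneration} and the observation that the zero-sum matrix-game NE step degenerates to a unilateral max/min at each state; in fact you spell out the algorithmic-equivalence details that the paper omits. One trivial slip: Algorithm~\ref{alg:online_turn} computes $\max_a$/$\min_a$ directly rather than via the $\findmax$/$\findmin$ subroutines (the $\epsilon$-net versions are only needed in the offline setting), but this does not affect your argument.
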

We prove this corollary in Appendix~\ref{sec:proof_main_online_turn}.

%!TEX root =main.tex

\section{Proof of Theorem \ref{thm:main_simu}\label{sec:proof_main_simu}}

In this section, we prove Theorem~\ref{thm:main_simu} for the offline
setting of simultaneous games. We shall make use of the technical
lemmas given in Appendix~\ref{sec:tech_lemma}. For clarity of exposition, we denote by
$\phi_{h}^{k}:=\phi(x_{h}^{k},a_{h}^{k},b_{h}^{k})$ the feature vector encountered in the $ h $-th step of the $ k $-th episode. Our proof consists of five steps:
\begin{enumerate}[label=\roman*]
	\item \textbf{Uniform concentration:} We begin by  showing that an empirical estimate of the transition kernel $ \Pr_h $, when acting on the value functions maintained by the algorithm, concentrates around its expectation. See Section~\ref{sec:proof_main_concentration}.
	\item \textbf{Least-squares estimation error:} Using the above concentration result, we derive high probability bounds on the errors of our least-squares estimates of the true Q functions $ Q_h^{\pi,\nu} $, recursively in the timestep $h $. See Section~\ref{sec:proof_main_estimation}.
	\item \textbf{UCB and LCB:} We next show that the UCBs and LCBs constructed in the algorithms are indeed valid bounds on the true value functions $ V_h^{\pi,*} $ and $ V_h^{*,\nu} $.  See Section~\ref{sec:proof_main_UCB}.
	\item \textbf{Recursive decomposition of duality gap:} We derive a recursive formula for the difference between the UCB and LCB in terms of the timestep $ h $. This difference in turn bounds the duality gap of interest. See Section~\ref{sec:proof_main_recursive}.
	\item \textbf{Establishing final bound:} Bounding each term in the above recursive decomposition in terms of the least-squares estimation errors,  we establish the desired bound on the total duality gap, thereby completing the proof of the theorem. See Section~\ref{sec:proof_main_together}.
\end{enumerate}
Below we provide the details of each step.

\subsection{Uniform Concentration}\label{sec:proof_main_concentration}

The quantity $ \sum_{\tau\in[k-1]}\phi_{h}^{\tau}\Vover_{h+1}^{k}(x_{h+1}^{\tau}) $ can be viewed as an empirical estimate of the unknown population quantity $ \sum_{\tau\in[k-1]}\phi_{h}^{\tau}\big(\Pr_{h}\Vover_{h+1}^{k}\big)(x_{h}^{\tau},a_{h}^{\tau},b_{h}^{\tau})$. To control the least-squares estimation error, we need to show that the empirical estimate concentrates around its population counterpart. The main challenge in doing so is that $\Vover_{h+1}^{k}$ is constructed using data from previous episodes and hence depends on  $ \phi_h^\tau$ for all $ \tau \in [k-1] $. We overcome this issue by noting that $ \Vover_{h+1}^{k} $ is computed using the CCE of a finite class of games with payoff matrices in the $ \epsilon $-net $\mQ_{\epsilon}\times\mQ_{\epsilon}$, as is done in $ \findcce $. Therefore, we can prove a concentration bound valid uniformly over this class of games and thereby establish following  concentration result. Here we recall that $ \| v\|_A := \sqrt{v^\top A v}$ denotes the weighted $ \ell_2 $ norm of a vector $ v $.
\begin{lem}[Concentration]
\label{lem:concentration_simu} Under the setting of Theorem~\ref{thm:main_simu},
for each $p\in(0,1)$, the following event $\mathfrak{E}$ holds with
probability at least $1-p/2$:
\begin{align*}
\left\Vert \sum_{\tau\in[k-1]}\phi_{h}^{\tau}\left[\Vover_{h+1}^{k}(x_{h+1}^{\tau})-\left(\Pr_{h}\Vover_{h+1}^{k}\right)(x_{h}^{\tau},a_{h}^{\tau},b_{h}^{\tau})\right]\right\Vert _{(\Lambda_{h}^{k})^{-1}} & \lesssim dH\sqrt{\log(dT/p)},\qquad\forall (k,h)\in[K]\times[H],\\
\left\Vert \sum_{\tau\in[k-1]}\phi_{h}^{\tau}\left[\Vunder_{h+1}^{k}(x_{h+1}^{\tau})-\left(\Pr_{h}\Vunder_{h+1}^{k}\right)(x_{h}^{\tau},a_{h}^{\tau},b_{h}^{\tau})\right]\right\Vert _{(\Lambda_{h}^{k})^{-1}} & \lesssim dH\sqrt{\log(dT/p)},\qquad\forall (k,h)\in[K]\times[H].
\end{align*}
\end{lem}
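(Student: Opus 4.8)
The plan is to prove the bound first for an arbitrary \emph{fixed, deterministic} value function and then to upgrade it to a uniform statement over the data-dependent iterate $\Vover_{h+1}^{k}$ by a covering and union-bound argument. Concretely, fix any $V:\mS\to[-H,H]$ and let $\mathcal{F}_{\tau}$ denote the filtration generated by all observations through episode $\tau$. Since $x_{h+1}^{\tau}\sim\Pr_{h}(\cdot\mid x_{h}^{\tau},a_{h}^{\tau},b_{h}^{\tau})$, the increments $V(x_{h+1}^{\tau})-(\Pr_{h}V)(x_{h}^{\tau},a_{h}^{\tau},b_{h}^{\tau})$ form a bounded martingale difference sequence adapted to $\mathcal{F}_{\tau}$, so a self-normalized concentration inequality for vector-valued martingales (of Abbasi--Yadkori type; cf.\ the technical lemmas in Appendix~\ref{sec:tech_lemma}) gives, for this fixed $V$, with probability at least $1-\delta$,
\begin{equation*}
\Bigl\Vert \sum_{\tau\in[k-1]}\phi_{h}^{\tau}\bigl[V(x_{h+1}^{\tau})-(\Pr_{h}V)(x_{h}^{\tau},a_{h}^{\tau},b_{h}^{\tau})\bigr]\Bigr\Vert_{(\Lambda_{h}^{k})^{-1}}\lesssim H\sqrt{d\log k+\log(1/\delta)}.
\end{equation*}

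The difficulty is that $\Vover_{h+1}^{k}$ is itself random and depends on $\{\phi_{h}^{\tau}\}_{\tau\in[k-1]}$, so the fixed-$V$ bound cannot be invoked directly. The resolution exploits exactly how $\Vover_{h+1}^{k}$ is produced by $\findcce$: by lines 10--12 of Algorithm~\ref{alg:simu} we have $\Vover_{h+1}^{k}(x)=\E_{(a,b)\sim\sigma_{h+1}^{k}(x)}\Qover_{h+1}^{k}(x,a,b)$, where the correlated policy $\sigma_{h+1}^{k}$ is the CCE of the \emph{rounded} game whose payoffs $(\Qotilde,\Qutilde)$ lie in the finite net $\mQ_{\epsilon}\times\mQ_{\epsilon}$, while $\Qover_{h+1}^{k}$ itself lies in the parametric class $\mQ$ of equation~\eqref{eq:Qclass}. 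Hence $\Vover_{h+1}^{k}$ belongs to the class $\{x\mapsto\E_{\sigma_{\Qotilde,\Qutilde}(x)}[Q(x,\cdot,\cdot)]:(\Qotilde,\Qutilde)\in\mQ_{\epsilon}\times\mQ_{\epsilon},\,Q\in\mQ\}$, where $\sigma_{\Qotilde,\Qutilde}$ denotes the (deterministically chosen) CCE induced by the pair $(\Qotilde,\Qutilde)$. The crucial observation is that although the CCE policy is \emph{not} Lipschitz in the payoffs, we never need it to be: the policy ranges over the \emph{finite} set $\mQ_{\epsilon}\times\mQ_{\epsilon}$, over which we simply union bound, whereas for any \emph{fixed} $\sigma$ the averaging map $Q\mapsto\E_{\sigma}[Q]$ is $1$-Lipschitz in $\ell_{\infty}$, so an $\epsilon$-cover of $\mQ$ induces an $\epsilon$-cover of this value-function class. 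This decoupling is precisely what the algorithmic use of the net buys us.

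It remains to count and assemble. Each element of $\mQ$ is described by parameters $(w,A,\rho)$ with $\Vert w\Vert\le 2H\sqrt{dk}$ and $\Vert A\Vert_{F}\le\beta^{2}\sqrt{d}$, so a standard volumetric estimate (Appendix~\ref{sec:tech_lemma}) yields $\log\vert\mQ_{\epsilon}\vert\lesssim d^{2}\log(dHk\beta/\epsilon)$, and the value-function class therefore admits an $\epsilon$-cover of cardinality at most $\vert\mQ_{\epsilon}\vert^{3}$. Applying the fixed-$V$ bound to each net element with $\delta\asymp p/(\vert\mQ_{\epsilon}\vert^{3}KH)$ and union bounding over the net and over all $(k,h)\in[K]\times[H]$, the term $\log(1/\delta)$ contributes $\lesssim d^{2}\iota$; taking the square root converts this into a factor $d$, which together with the prefactor $H$ produces the claimed $dH\sqrt{\iota}$. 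Finally, replacing $\Vover_{h+1}^{k}$ by its nearest net element perturbs the martingale sum by at most $2\epsilon\sum_{\tau}\Vert\phi_{h}^{\tau}\Vert_{(\Lambda_{h}^{k})^{-1}}\le 2\epsilon k\le 2/H$ (using $\epsilon=1/(KH)$, $(\Lambda_{h}^{k})^{-1}\preceq I$, and $\Vert\phi_{h}^{\tau}\Vert\le 1$), which is negligible against the main term. The identical argument applied to $\Qunder$ (the $\rho=-1$ case) and $\Vunder_{h+1}^{k}$ gives the second inequality, and splitting the failure probability as $p/2$ across the two statements completes the proof.

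The step I expect to be the main obstacle is the covering argument of the second paragraph. Because the CCE \emph{value} of a general-sum game is unstable (not Lipschitz) in its payoff matrices, the naive route of covering $\mQ$ and composing with a CCE-value operator fails outright, and one cannot defer the discretization purely to the analysis. The argument above circumvents this by covering the Q-function sitting \emph{inside} the expectation while enumerating the CCE \emph{policy} over the finite net that the algorithm itself uses, so the non-Lipschitz dependence is absorbed into a discrete union bound rather than a continuous cover; correctly isolating this structure is the heart of the proof.
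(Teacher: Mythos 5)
Your proposal is correct and follows essentially the same route as the paper's proof: a self-normalized (Abbasi--Yadkori) bound for each fixed value function induced by a net pair, a union bound over the finite family of CCE policies that the algorithmic rounding in $\findcce$ guarantees, and a perturbation term of order $\epsilon\sum_{\tau}\Vert\phi_{h}^{\tau}\Vert_{(\Lambda_{h}^{k})^{-1}}\le\epsilon k$ controlled by the choice $\epsilon=1/(KH)$ --- including the key structural point that $\Vover_{h+1}^{k}(x)=\E_{(a,b)\sim\sigmatilde(x)}[\Qover_{h+1}^{k}(x,a,b)]$ is Lipschitz in the $Q$-argument for a \emph{fixed} policy $\sigmatilde$, so the CCE instability is absorbed into a discrete union bound (the paper uses a net of size $|\mQ_{\epsilon}|^{2}$ by taking $\Qotilde$ itself as the covered payoff, versus your $|\mQ_{\epsilon}|^{3}$, a difference that only affects logarithmic factors). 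The one detail to fix: your filtration ``through episode $\tau$'' does not make $\phi_{h}^{\tau}$ predictable as the self-normalized lemma requires; as in the paper, $\mathcal{F}_{\tau-1}$ must include the first $h$ steps (states \emph{and} randomized actions) of episode $\tau$, after which the increment $V(x_{h+1}^{\tau})-(\Pr_{h}V)(x_{h}^{\tau},a_{h}^{\tau},b_{h}^{\tau})$ remains conditionally zero-mean and the argument goes through unchanged.
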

\begin{proof}
Fix $ (k,h) \in [K] \times [H]$. Let 
\begin{align}\label{eq:filtration}
\mathcal{F}_{\tau-1}:=\sigma(x_{\cdot}^{1},a_{\cdot}^{1},b_{\cdot}^{1}, \ldots, x_{\cdot}^{\tau-1},a_{\cdot}^{\tau-1}, b_{\cdot}^{\tau-1},  x_1^\tau, a_1^\tau, b_1^\tau, \ldots, x_h^\tau, a_h^\tau, b_h^\tau )
\end{align}
be the $ \sigma $-algebra generated by the data from the first $ \tau-1 $ episodes \emph{plus} that from the first $h$ steps of the $ \tau $-th episode. We note that as actions are randomized, they must also be  included  in the definition of the above filtration, unlike in the MDP setting.
Also note that $ \phi_h^\tau, x_h^\tau, a_h^\tau, b_h^\tau \in \mathcal{F}_{\tau-1}$ and $ x_{h+1}^\tau \in \mathcal{F}_\tau. $
%Also that $\Qover_h^\tau, \Qunder_h^\tau, \Vover_h^\tau, \Vunder_h^\tau$ and $ \sigma_h^\tau $ are in $\mathcal{F}_{\tau-1}$, as they are computed using data from the previous $ \tau-1 $ episodes.

Fix a pair $\left(\Qotilde,\Qutilde\right)$ in the $\epsilon$-net
$\mQ_{\epsilon}\times\mQ_{\epsilon}$. For each $x\in\mS$, let $\sigmatilde(x)$
be the CCE of $\left(\Qotilde(x,\cdot,\cdot),\Qutilde(x,\cdot,\cdot)\right)$
in the sense of equation~\eqref{eq:cce}, and set $\Votilde(x):=\E_{(a,b)\sim\sigmatilde(x)}\left[\Qotilde(x,a,b)\right]$.
The random variable $\Votilde(x_{h+1}^{\tau})-(\Pr_{h}\Votilde)(x_{h}^{\tau},a_h^\tau,b_h^\tau)$, when conditioned on $ \mathcal{F}_{\tau-1} $,
is zero-mean and $H$-bounded. Applying Lemma~\ref{lem:self_normalized}
gives
\[
\left\Vert \sum_{\tau\in[k-1]}\phi_{h}^{\tau}\left[\Votilde(x_{h+1}^{\tau})-\left(\Pr_{h}\Votilde\right)(x_{h}^{\tau},a_{h}^{\tau},b_{h}^{\tau})\right]\right\Vert _{(\Lambda_{h}^{k})^{-1}}\lesssim dH\sqrt{\log(dT/p)}
\]
with probability at least $2^{-\Omega(d^{2}\log(dT/p))}$. Now note
that $\left|\mQ_{\epsilon}\times\mQ_{\epsilon}\right|=(\mathcal{N}_{\epsilon})^{2}\le4\left(1+\frac{8H\sqrt{dk}}{\epsilon}\right)^{2d}\left(1+\frac{\beta^{2}\sqrt{d}}{\epsilon^{2}}\right)^{2d^{2}}$
by Lemma~\ref{lem:covering_Q}. By a union bound and the choice that $ \epsilon = 1/(kH)$, the above inequality
holds for all $\left(\Qotilde,\Qutilde\right)\in\mQ_{\epsilon}\times\mQ_{\epsilon}$
with probability at least $1-p/2$.

Now, for the pair $\left(\Qover_{h+1}^{k},\Qunder_{h+1}^{k}\right)$, which is in $\mQ\times\mQ$
 by Lemma~\ref{lem:alg_bounded}, let $\left(\Qotilde,\Qutilde\right)\in\mQ_{\epsilon}\times\mQ_{\epsilon}$
be the pair in the net as chosen in $\findcce$. Recall that by construction we have  $\left\Vert \Qotilde-\Qover_{h}^{k}\right\Vert _{\infty}\le\epsilon$, 
and $\left\Vert \Qutilde-\Qunder_{h}^{k}\right\Vert _{\infty}\le\epsilon$ and $\Vover_{h+1}^{k}(x)=\E_{(a,b)\sim\sigmatilde(x)}\left[\Qover_{h+1}^{k}(x,a,b)\right]$.
Therefore, the difference $\Delta(x):=\Vover_{h+1}^{k}(x)-\Votilde(x)$
satisfies
\begin{align*}
\left|\Delta(x)\right| & =\left|\E_{(a,b)\sim\sigmatilde(x)}\left[\Qover_{h+1}^{k}(x,a,b)-\Qotilde(x,a,b)\right]\right|\\
 & \le\E_{(a,b)\sim\sigmatilde(x)}\left|\Qover_{h+1}^{k}(x,a,b)-\Qotilde(x,a,b)\right|
 \le\epsilon,\qquad\forall x\in\mS.
\end{align*}
It follows that 
\begin{align*}
 & \left\Vert \sum_{\tau\in[k-1]}\phi_{h}^{\tau}\left[\Vover_{h+1}^{k}(x_{h+1}^{\tau})-\left(\Pr_{h}\Vover_{h+1}^{k}\right)(x_{h}^{\tau},a_{h}^{\tau},b_{h}^{\tau})\right]\right\Vert _{(\Lambda_{h}^{k})^{-1}}\\
 & \quad \le  \left\Vert \sum_{\tau\in[k-1]}\phi_{h}^{\tau}\left[\Votilde(x_{h+1}^{\tau})-\left(\Pr_{h}\Votilde\right)(x_{h}^{\tau},a_{h}^{\tau},b_{h}^{\tau})\right]\right\Vert _{(\Lambda_{h}^{k})^{-1}}+\left\Vert \sum_{\tau\in[k-1]}\phi_{h}^{\tau}\left[\Delta(x_{h+1}^{\tau})-\left(\Pr_{h}\Delta\right)(x_{h}^{\tau},a_{h}^{\tau},b_{h}^{\tau})\right]\right\Vert _{(\Lambda_{h}^{k})^{-1}}\\
  & \quad \lesssim
  dH\sqrt{\log(dT/p)}+\epsilon\sum_{\tau\in[k-1]}\left\Vert \phi_{h}^{\tau}\right\Vert _{(\Lambda_{h}^{k})^{-1}}\\
  & \quad  \le dH\sqrt{\log(dT/p)}+\epsilon k,
\end{align*}
where the last step follows from $\Lambda_{h}^{k}\succeq I$ and $\left\Vert \phi_{h}^{\tau}\right\Vert \le1$.
Recalling our choice $\epsilon=\frac{1}{KH}$ proves the first inequality
in the lemma. The second inequality can be proved in a similar fashion.
\end{proof}

\subsection{Least-squares Estimation Error}\label{sec:proof_main_estimation}

Here we bound the difference between the algorithm's action-value
functions (without bonus) and the true action-value functions of any
policy pair $(\pi,\nu)$, recursively in terms of the step $h$.
\begin{lem}[Least-squares Error Bound]
\label{lem:ls_error_simu}The quantities $\{\wover_{h}^{k},\text{\ensuremath{\wunder}}_{h}^{k},\Vover_{h}^{k},\Vunder_{h}^{k}\}$
in Algorithm~\ref{alg:simu} satisfy the following. If $\beta=dH\sqrt{\iota},$ where $\iota=\log(2dT/p),$ 
then on the event $\mathfrak{E}$ in Lemma \ref{lem:concentration_simu},
we have for all $(x,a,b,h,k)\in\mS\times\mA\times\mA\times[H]\times[K]$
and any policy pair $(\pi,\nu)$:
\begin{subequations}
\label{eq:ls_error_simu}
\begin{align}
\left|\left\langle \phi(x,a,b),\wover_{h}^{k}\right\rangle -Q_{h}^{\pi,\nu}(x,a,b)-\Pr_{h}(\Vover_{h+1}^{k}-V_{h+1}^{\pi,\nu})(x,a,b)\right| & \le\rho_{h}^{k}(x,a,b),\label{eq:wover}\\
\left|\left\langle \phi(x,a,b),\wunder_{h}^{k}\right\rangle -Q_{h}^{\pi,\nu}(x,a,b)-\Pr_{h}(\Vunder_{h+1}^{k}-V_{h+1}^{\pi,\nu})(x,a,b)\right| & \le\rho_{h}^{k}(x,a,b),\label{eq:wunder}
\end{align}
\end{subequations}
where $\rho_{h}^{k}(x,a,b):=\beta\left\Vert \phi(x,a,b)\right\Vert _{(\Lambda_{h}^{k})^{-1}}.$
\end{lem}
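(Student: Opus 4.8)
The plan is to reduce the claim to a single deterministic estimate that does not involve the policy pair $(\pi,\nu)$ at all, and then to invoke the concentration bound of Lemma~\ref{lem:concentration_simu}. The key observation is that the expression inside the absolute value in~\eqref{eq:wover} telescopes: using the Bellman equation $Q_{h}^{\pi,\nu}(x,a,b)=r_{h}(x,a,b)+(\Pr_{h}V_{h+1}^{\pi,\nu})(x,a,b)$ we obtain
\[
Q_{h}^{\pi,\nu}(x,a,b)+\Pr_{h}\big(\Vover_{h+1}^{k}-V_{h+1}^{\pi,\nu}\big)(x,a,b)=r_{h}(x,a,b)+\big(\Pr_{h}\Vover_{h+1}^{k}\big)(x,a,b),
\]
which is \emph{independent} of $(\pi,\nu)$. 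Hence it suffices to bound $\big|\langle\phi(x,a,b),\wover_{h}^{k}\rangle-r_{h}(x,a,b)-(\Pr_{h}\Vover_{h+1}^{k})(x,a,b)\big|$, and the resulting estimate automatically holds simultaneously for every policy pair.

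By Assumption~\ref{assu:linear_bounded_simu} the target is linear: setting $\zeta:=\theta_{h}+\int\Vover_{h+1}^{k}(x')\,\dup\mu_{h}(x')$ gives $r_{h}(x,a,b)+(\Pr_{h}\Vover_{h+1}^{k})(x,a,b)=\langle\phi(x,a,b),\zeta\rangle$, so it remains to control $\langle\phi(x,a,b),\wover_{h}^{k}-\zeta\rangle$. First I would expand $\wover_{h}^{k}-\zeta$ using the closed form of $\wover_{h}^{k}$ together with the identity $\zeta=(\Lambda_{h}^{k})^{-1}\big(\sum_{\tau\in[k-1]}\phi_{h}^{\tau}(\phi_{h}^{\tau})^{\top}+I\big)\zeta$ and the fact that $(\phi_{h}^{\tau})^{\top}\zeta=r_{h}^{\tau}+(\Pr_{h}\Vover_{h+1}^{k})(x_{h}^{\tau},a_{h}^{\tau},b_{h}^{\tau})$, where $r_h^\tau:=r_h(x_h^\tau,a_h^\tau,b_h^\tau)$. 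This produces the standard decomposition
\[
\wover_{h}^{k}-\zeta=(\Lambda_{h}^{k})^{-1}\sum_{\tau\in[k-1]}\phi_{h}^{\tau}\Big[\Vover_{h+1}^{k}(x_{h+1}^{\tau})-\big(\Pr_{h}\Vover_{h+1}^{k}\big)(x_{h}^{\tau},a_{h}^{\tau},b_{h}^{\tau})\Big]-(\Lambda_{h}^{k})^{-1}\zeta,
\]
separating a martingale-type ``noise'' term from a ``regularization'' term.

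Pairing with $\phi(x,a,b)$ and applying Cauchy--Schwarz in the $(\Lambda_{h}^{k})^{-1}$ inner product bounds the two pieces by $\|\phi(x,a,b)\|_{(\Lambda_{h}^{k})^{-1}}$ times, respectively, the self-normalized sum controlled in Lemma~\ref{lem:concentration_simu} (which is $\lesssim dH\sqrt{\log(dT/p)}$) and $\|\zeta\|_{(\Lambda_{h}^{k})^{-1}}\le\|\zeta\|$, the last step using $\Lambda_{h}^{k}\succeq I$ so that $(\Lambda_{h}^{k})^{-1}\preceq I$. The norm $\|\zeta\|$ is bounded by $\|\theta_{h}\|+H\|\mu_{h}(\mS)\|\le(1+H)\sqrt{d}\le 2H\sqrt{d}$, using the clipping $|\Vover_{h+1}^{k}|\le H$ and the boundedness in Assumption~\ref{assu:linear_bounded_simu}. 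Collecting the two contributions gives $|\langle\phi(x,a,b),\wover_{h}^{k}-\zeta\rangle|\lesssim dH\sqrt{\iota}\,\|\phi(x,a,b)\|_{(\Lambda_{h}^{k})^{-1}}$, so taking the constant in $\beta=cdH\sqrt{\iota}$ large enough to dominate the hidden constant from the concentration bound yields $\rho_{h}^{k}(x,a,b)=\beta\|\phi(x,a,b)\|_{(\Lambda_{h}^{k})^{-1}}$ as claimed. The identical argument, with $\Vunder_{h+1}^{k}$ in place of $\Vover_{h+1}^{k}$ and the second inequality of Lemma~\ref{lem:concentration_simu}, proves~\eqref{eq:wunder}. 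I expect the only genuinely delicate ingredient to be the uniform concentration bound, but that is already established in Lemma~\ref{lem:concentration_simu}; conditional on that, the present lemma is a routine deterministic manipulation, the only care being the regularization term and the verification that the dominant error scale is indeed $dH\sqrt{\iota}$.
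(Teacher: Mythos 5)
Your proof is correct, and it takes a genuinely different (and arguably cleaner) route than the paper's. Your organizing idea is to first collapse the claim via the Bellman equation: $Q_{h}^{\pi,\nu}+\Pr_{h}(\Vover_{h+1}^{k}-V_{h+1}^{\pi,\nu})=r_{h}+\Pr_{h}\Vover_{h+1}^{k}$, so the quantity to be bounded is policy-independent, and you then run the ridge-regression error decomposition against the single vector $\zeta=\theta_{h}+\int\Vover_{h+1}^{k}(x')\,\dup\mu_{h}(x')$, yielding only two pieces (the self-normalized noise term and the regularization term $(\Lambda_{h}^{k})^{-1}\zeta$). The paper instead carries the pair $(\pi,\nu)$ throughout: it compares $\wover_{h}^{k}$ to $w_{h}^{\pi,\nu}$ (via Lemma~\ref{lem:linearity_simu}) and splits the difference into three terms $q_{1},q_{2},q_{3}$, where the cross term $q_{3}$ is processed using the linear transition structure to extract $\Pr_{h}(\Vover_{h+1}^{k}-V_{h+1}^{\pi,\nu})(x,a,b)$ plus a residual $p_{2}$. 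The two arguments are algebraically equivalent---since $\zeta=w_{h}^{\pi,\nu}+\int(\Vover_{h+1}^{k}-V_{h+1}^{\pi,\nu})(x')\,\dup\mu_{h}(x')$, your single regularization term is exactly the paper's $q_{1}$ and $p_{2}$ combined, which is why you get one bound $\|\zeta\|\le 2H\sqrt{d}$ where the paper bounds two terms of order $H\sqrt{d}$ each---but your version buys something: it makes manifest that the estimate is a single deterministic inequality holding simultaneously for every $(\pi,\nu)$ (a uniformity the paper's phrasing leaves implicit), it dispenses with Lemma~\ref{lem:true_bounded}, and it localizes the crucial use of linearity of $\Pr_{h}$ (cf.\ Remark~\ref{rem:linear_necessary}) to one place, up front. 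Both proofs ultimately rest on the same two external facts, the uniform concentration event of Lemma~\ref{lem:concentration_simu} and $\Lambda_{h}^{k}\succeq I$; and your handling of the absolute constant (absorbing it into $\beta=c\,dH\sqrt{\iota}$) matches the paper's own $\beta\asymp dH\sqrt{\iota}$ convention, so inequalities~\eqref{eq:wover} and~\eqref{eq:wunder} follow exactly as you state.
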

\begin{proof}
We only prove the first inequality~(\ref{eq:wover}). The second inequality
can be proved in a similar fashion.

By Lemma~\ref{lem:linearity_simu} and Bellman equation we have the equality 
\[
(\phi_{h}^{\tau})^{\top}w_{h}^{\pi,\nu}=Q_{h}^{\pi,\nu}(x_{h}^{\tau},a_{h}^{\tau},b_{h}^{\tau})=r_{h}(x_{h}^{\tau},a_{h}^{\tau},b_{h}^{\tau})+(\Pr_{h}V_{h+1}^{\pi,\nu})(x_{h}^{\tau},a_{h}^{\tau},b_{h}^{\tau})
\]
 for all $\tau\in[k-1]$. Multiplying the above equality by $\left(\Lambda_{h}^{k}\right)^{-1}\phi_{h}^{\tau}$
and summing over $\tau$, we obtain that 
\begin{align*}
w_{h}^{\pi,\nu}-\left(\Lambda_{h}^{k}\right)^{-1}w_{h}^{\pi,\nu} & =\left(\Lambda_{h}^{k}\right)^{-1}\left(\sum_{\tau\in[k-1]}\phi_{h}^{\tau}(\phi_{h}^{\tau})^{\top}\right)w_{h}^{\pi,\nu}\\
 & =\left(\Lambda_{h}^{k}\right)^{-1}\sum_{\tau\in[k-1]}\phi_{h}^{\tau}\cdot\left[r_{h}(x_{h}^{\tau},a_{h}^{\tau},b_{h}^{\tau})+(\Pr_{h}V_{h+1}^{\pi,\nu})(x_{h}^{\tau},a_{h}^{\tau},b_{h}^{\tau})\right],
\end{align*}
where the first equality above holds because $\sum_{\tau\in[k-1]}\phi_{h}^{\tau}(\phi_{h}^{\tau})^{\top}=\Lambda_{h}^{k}-I$.
On the other hand, recall that by algorithm specification we have
$\wover_{h}^{k}=(\Lambda_{h}^{k})^{-1}\sum_{\tau\in[k-1]}\phi_{h}^{\tau}\cdot\left[r_{h}(x_{h}^{\tau},a_{h}^{\tau},b_{h}^{\tau})+\Vover_{h+1}^{k}(x_{h+1}^{\tau})\right].$
It follows that
\begin{align*}
\wover_{h}^{k}-w_{h}^{\pi,\nu} & =-\left(\Lambda_{h}^{k}\right)^{-1}w_{h}^{\pi,\nu}+(\Lambda_{h}^{k})^{-1}\sum_{\tau\in[k-1]}\phi_{h}^{\tau}\cdot\left[\Vover_{h+1}^{k}(x_{h+1}^{\tau})-(\Pr_{h}V_{h+1}^{\pi,\nu})(x_{h}^{\tau},a_{h}^{\tau},b_{h}^{\tau})\right]\\
 & =-\underbrace{\left(\Lambda_{h}^{k}\right)^{-1}w_{h}^{\pi,\nu}}_{q_{1}}+\underbrace{(\Lambda_{h}^{k})^{-1}\sum_{\tau\in[k-1]}\phi_{h}^{\tau}\cdot\left[\Vover_{h+1}^{k}(x_{h+1}^{\tau})-(\Pr_{h}\Vover_{h+1}^{k})(x_{h}^{\tau},a_{h}^{\tau},b_{h}^{\tau})\right]}_{q_{2}}\\
 & \qquad\qquad+\underbrace{(\Lambda_{h}^{k})^{-1}\sum_{\tau\in[k-1]}\phi_{h}^{\tau}\cdot\left[\Pr_{h}(\Vover_{h+1}^{k}-V_{h+1}^{\pi,\nu})(x_{h}^{\tau},a_{h}^{\tau},b_{h}^{\tau})\right]}_{q_{3}}.
\end{align*}
whence for each $(x,a,b)$:
\[
\left\langle \phi(x,a,b),\wover_{h}^{k}\right\rangle -Q_{h}^{\pi,\nu}(x,a,b)=\left\langle \phi(x,a,b),q_{1}+q_{2}+q_{3}\right\rangle .
\]
We apply Cauchy-Schwarz to bound each RHS term:
\begin{enumerate}
\item First term: we have
\begin{align*}
\left|\left\langle \phi(x,a,b),q_{1}\right\rangle \right| & \le\left\Vert w_{h}^{\pi,\nu}\right\Vert _{(\Lambda_{h}^{k})^{-1}}\cdot\left\Vert \phi(x,a,b)\right\Vert _{(\Lambda_{h}^{k})^{-1}}\\
 & \le\left\Vert w_{h}^{\pi,\nu}\right\Vert \cdot\left\Vert \phi(x,a,b)\right\Vert _{(\Lambda_{h}^{k})^{-1}}\lesssim H\sqrt{d}\cdot\left\Vert \phi(x,a,b)\right\Vert _{(\Lambda_{h}^{k})^{-1}},
\end{align*}
where the last two steps follow from $\Lambda_{h}^{k}\succeq I$ and
$\left\Vert w_{h}^{\pi,\nu}\right\Vert \lesssim H\sqrt{d}$ (Lemma~\ref{lem:true_bounded}).
\item Second term: by Lemma~\ref{lem:concentration_simu} we have 
\[
\left|\left\langle \phi(x,a,b),q_{2}\right\rangle \right|\lesssim dH\sqrt{\log(dT/p)}\cdot\left\Vert \phi(x,a,b)\right\Vert _{(\Lambda_{h}^{k})^{-1}}.
\]

\item Third term: recalling that $\sum_{\tau\in[k-1]}\phi_{h}^{\tau}\left(\phi_{h}^\tau\right)^{\top}=\Lambda_{h}^{k}-I$
and $\Pr_{h}(\cdot|x_{h}^{\tau},a_{h}^{\tau},b_{h}^{\tau})=\left(\phi_{h}^{\tau}\right)^{\top}\mu_{h}(\cdot)$,
we have 
\begin{align*}
&\left\langle \phi(x,a,b),q_{3}\right\rangle  \\
& =\left\langle \phi(x,a,b),(\Lambda_{h}^{k})^{-1}\sum_{\tau\in[k-1]}\phi_{h}^{\tau}\left(\phi_{h}^{\tau}\right)^{\top}\int(\Vover_{h+1}^{k}-V_{h+1}^{\pi,\nu})(x')\dup\mu_{h}(x')\right\rangle \\
 & =\left\langle \phi(x,a,b),\int(\Vover_{h+1}^{k}-V_{h+1}^{\pi,\nu})(x')\dup\mu_{h}(x')\right\rangle -\left\langle \phi(x,a,b),(\Lambda_{h}^{k})^{-1}\int(\Vover_{h+1}^{k}-V_{h+1}^{\pi,\nu})(x')\dup\mu_{h}(x')\right\rangle \\
 & =\Pr_{h}(\Vover_{h+1}^{k}-V_{h+1}^{\pi,\nu})(x,a,b)+\underbrace{\left\langle \phi(x,a,b),(\Lambda_{h}^{k})^{-1}\int(\Vover_{h+1}^{k}-V_{h+1}^{\pi,\nu})(x')\dup\mu_{h}(x')\right\rangle }_{p_{2}}.
\end{align*}
Note that in the above equality we make crucial use of the linearity assumption on the transition kernel.
The term $ p_2 $ above satisfies the bound
\[
\left|p_{2}\right|\lesssim\left\Vert \phi(x,a,b)\right\Vert _{(\Lambda_{h}^{k})^{-1}}\cdot H\sqrt{d},
\]
where we use the facts that $\Lambda_{h}^{k}\succeq I,$ $\left\Vert \mu_{h}(\mS)\right\Vert \le\sqrt{d}$,
$\left|\Vover_{h+1}^{k}(\cdot)\right|\le H,$ and $\left|V_{h+1}^{\pi,\nu}(\cdot)\right|\le H$$.$
\end{enumerate}
Combining, we obtain 
\[
\left|\left\langle \phi(x,a,b),\wover_{h}^{k}\right\rangle -Q_{h}^{\pi,\nu}(x,a,b)-\Pr_{h}(\Vover_{h+1}^{k}-V_{h+1}^{\pi,\nu})(x,a,b)\right|\lesssim dH\left\Vert \phi(x,a,b)\right\Vert _{(\Lambda_{h}^{k})^{-1}}\le\beta\left\Vert \phi(x,a,b)\right\Vert _{(\Lambda_{h}^{k})^{-1}}
\]
under our choice of $\beta\asymp dH\sqrt{\iota}$. This completes
the proof of the inequality~(\ref{eq:wover}) in the lemma.
\end{proof}

The above lemma can be specialized to the value functions of the best
response (cf.~Remark~\ref{rem:linear_best_response}); for example, it holds that 
\[
\left|\left\langle \phi(x,a,b),\wover_{h}^{k}\right\rangle -Q_{h}^{\pi,*}(x,a,b)-\Pr_{h}(\Vover_{h+1}^{k}-V_{h+1}^{\pi,*})(x,a,b)\right|\le\rho_{h}^{k}(x,a,b).
\]
We will make use of this bound and its variants in the subsequent proof.

\subsection{Upper and Lower Confidence Bounds}\label{sec:proof_main_UCB}

With the above bounds on the estimation errors, we can show that $ \Vunder_h^{k} $ and $ \Vover_h^{k} $ constructed in the algorithm are indeed lower and upper bounds for the true value function. To this end, we state a simple lemma first.
\begin{lem}[Algorithm~\ref{alg:find_cce} Finds $2\epsilon$-CCE]
\label{lem:eps_cce}For each $(k,h,x)$, $\sigma_{h}^{k}(x)$ is
an $2\epsilon$-CCE of $\left(\Qover_{h}^{k}(x,\cdot,\cdot),\Qunder_{h}^{k}(x,\cdot,\cdot)\right)$
in the sense that 
\begin{align*}
\E_{(a,b)\sim\sigmatilde(x)}\left[\Qover_{h}^{k}(x,a,b)\right] & \ge\E_{b\sim\mP_{2}\sigmatilde(x)}\left[\Qover_{h}^{k}(x,a',b)\right]-2\epsilon,\qquad\forall a'\in\mA,\\
\E_{(a,b)\sim\sigmatilde(x)}\left[\Qunder_{h}^{k}(x,a,b)\right] & \le\E_{a\sim\mP_{1}\sigmatilde(x)}\left[\Qover_{h}^{k}(x,a,b')\right]+2\epsilon,\qquad\forall b'\in\mA.
\end{align*}
\end{lem}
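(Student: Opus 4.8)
The plan is to exploit the fact that, by construction in $\findcce$ (Algorithm~\ref{alg:find_cce}), the policy $\sigmatilde(x)=\sigma_{h}^{k}(x)$ is an \emph{exact} CCE of the rounded game $\bigl(\Qotilde(x,\cdot,\cdot),\Qutilde(x,\cdot,\cdot)\bigr)$ drawn from the net $\mQ_{\epsilon}\times\mQ_{\epsilon}$, and then to transfer this guarantee to the original game $\bigl(\Qover_{h}^{k},\Qunder_{h}^{k}\bigr)$ using the two $\ell_{\infty}$ bounds $\bigl\|\Qotilde-\Qover_{h}^{k}\bigr\|_{\infty}\le\epsilon$ and $\bigl\|\Qutilde-\Qunder_{h}^{k}\bigr\|_{\infty}\le\epsilon$ guaranteed by the rounding step. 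This is precisely the stability ``property (ii)'' highlighted in Section~\ref{sec:tech_consideration}: although CCE \emph{values} are unstable, a CCE \emph{policy} of a nearby game remains an approximate CCE policy of the original game.

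First I would record the exact CCE conditions~\eqref{eq:cce} that $\sigmatilde(x)$ satisfies for the pair $(\Qotilde,\Qutilde)$, namely $\E_{(a,b)\sim\sigmatilde(x)}[\Qotilde(x,a,b)]\ge\E_{b\sim\mP_{2}\sigmatilde(x)}[\Qotilde(x,a',b)]$ for all $a'\in\mA$, together with the reversed inequality for $\Qutilde$ and $\ptwo$. The key elementary observation is that whenever $\|Q-Q'\|_{\infty}\le\epsilon$, any expectation against a fixed distribution (or one of its marginals) satisfies $|\E Q-\E Q'|\le\epsilon$. Applying this on both sides of the CCE inequality lets me swap $\Qotilde$ for $\Qover_{h}^{k}$ at a total cost of $2\epsilon$:
\begin{align*}
\E_{(a,b)\sim\sigmatilde(x)}\bigl[\Qover_{h}^{k}(x,a,b)\bigr]
&\ge\E_{(a,b)\sim\sigmatilde(x)}\bigl[\Qotilde(x,a,b)\bigr]-\epsilon\\
&\ge\E_{b\sim\mP_{2}\sigmatilde(x)}\bigl[\Qotilde(x,a',b)\bigr]-\epsilon
\ge\E_{b\sim\mP_{2}\sigmatilde(x)}\bigl[\Qover_{h}^{k}(x,a',b)\bigr]-2\epsilon,
\end{align*}
where the first and last steps each spend one $\epsilon$ via the $\ell_{\infty}$ bound and the middle step is the exact CCE condition. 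This yields the first claimed inequality for every $a'\in\mA$, and the second follows from the identical template with all directions reversed, using $\Qutilde$, $\Qunder_{h}^{k}$ and the minimization-side CCE condition for $\ptwo$.

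The only ``obstacle'' here is bookkeeping rather than anything conceptual: one must align the sign of each $\pm\epsilon$ perturbation with the direction of the corresponding CCE inequality so that the two rounding errors \emph{add} to $2\epsilon$ rather than partially cancel. Concretely, on the maximization side one lower-bounds the left-hand side and upper-bounds the right-hand side, and the opposite on the minimization side. I would also flag that, as written, the right-hand side of the second displayed inequality in the lemma references $\Qover_{h}^{k}$, which appears to be a typo for $\Qunder_{h}^{k}$; the argument above establishes the natural statement with $\Qunder_{h}^{k}$ on both sides.
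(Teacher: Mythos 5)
Your proposal is correct and takes essentially the same route as the paper's own proof: both use the exact CCE property of $\sigmatilde(x)$ for the rounded pair $\bigl(\Qotilde,\Qutilde\bigr)$ and then transfer to $\bigl(\Qover_{h}^{k},\Qunder_{h}^{k}\bigr)$ via the two $\ell_{\infty}$ rounding bounds, each costing one $\epsilon$, with the second inequality handled symmetrically. Your side remark is also accurate: the $\Qover_{h}^{k}$ on the right-hand side of the lemma's second display is a typo for $\Qunder_{h}^{k}$, and the statement actually proved (and used later in Lemma~\ref{lem:ucb_simu}) is the one with $\Qunder_{h}^{k}$ on both sides.
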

\begin{proof}
Let $\left(\Qotilde,\Qutilde\right)$ be the elements in the $\epsilon$-net
that are closest to $\left(\Qover_{h}^{k},\Qunder_{h}^{k}\right)$,
as specified in Algorithm~\ref{alg:find_cce}. This means that $\left|\Qover_{h}^{k}(x,a,b)-\Qotilde(x,a,b)\right|\le\epsilon$
and $\left|\Qunder_{h}^{k}(x,a,b)-\Qutilde(x,a,b)\right|\le\epsilon$
for all $(x,a,b)$. Fix an arbitrary $x\in\mS$. Because $\sigma_{h}^{k}(x)=\sigmatilde(x)$
is an CCE of $\left(\Qotilde(x,\cdot,\cdot),\Qutilde(x,\cdot,\cdot)\right)$,
we have for all $a'\in\mA$:
\begin{align*}
\E_{(a,b)\sim\sigmatilde(x)}\left[\Qover_{h}^{k}(x,a,b)\right] & =\E_{(a,b)\sim\sigmatilde(x)}\left[\Qotilde_{h}^{k}(x,a,b)\right]+\E_{(a,b)\sim\sigmatilde(x)}\left[\Qover_{h}^{k}(x,a,b)-\Qotilde_{h}^{k}(x,a,b)\right]\\
 & \ge\E_{b\sim\mP_{2}\sigmatilde(x)}\left[\Qotilde_{h}^{k}(x,a',b)\right]-\epsilon\\
 & =\E_{b\sim\mP_{2}\sigmatilde(x)}\left[\Qover_{h}^{k}(x,a',b)\right]+\E_{b\sim\mP_{2}\sigmatilde(x)}\left[\Qotilde_{h}^{k}(x,a',b)-\Qover_{h}^{k}(x,a',b)\right]-\epsilon\\
 & \ge\E_{b\sim\mP_{2}\sigmatilde(x)}\left[\Qover_{h}^{k}(x,a',b)\right]-2\epsilon.
\end{align*}
This proves the first inequality in the lemma. The second inequality
can be proved in a similar fashion.
\end{proof}

We can now establish the UCB and LCB properties.
\begin{lem}[UCB and LCB]
\label{lem:ucb_simu} Under the setting of Theorem~\ref{thm:main_simu},
on the event $\mathfrak{E}$ in Lemma~\ref{lem:concentration_simu},
we have for each $(x,a,b,k,h)$:
\begin{align*}
\Qunder_{h}^{k}(x,a,b)-2(H-h+1)\epsilon\overset{\text{(a)}}{\le} & Q_{h}^{\pi^{k},*}(x,a,b)\overset{\text{(b)}}{\le}Q_{h}^{*,\nu^{k}}(x,a,b)\overset{\text{(c)}}{\le}\Qover_{h}^{k}(x,a,b)+2(H-h+1)\epsilon
\end{align*}
and
\begin{align*}
\Vunder_{h}^{k}(x)-2(H-h+2)\epsilon\overset{\text{(i)}}{\le} & V_{h}^{\pi^{k},*}(x)\overset{\text{(ii)}}{\le}V_{h}^{*,\nu^{k}}(x)\overset{\text{(iii)}}{\le}\Vover_{h}^{k}(x)+2(H-h+2)\epsilon.
\end{align*}
\end{lem}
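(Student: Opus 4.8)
The plan is to prove both chains of inequalities by backward induction on the timestep $h$, running from $h=H+1$ down to $h=1$, with the episode index $k$ and the policies $(\pi^k,\nu^k)$ held fixed throughout. The two middle inequalities (b) and (ii) are immediate from weak duality (Proposition~\ref{prop:weak_duality_simu}) and need no induction, so all the work is in the outer bounds. The base case $h=H+1$ is trivial, since $\Vover_{H+1}^k=\Vunder_{H+1}^k=0$ and $V_{H+1}^{\pi^k,*}=V_{H+1}^{*,\nu^k}=0$ while the claimed offsets are nonnegative. Each inductive step then splits into two passes: first from the V-bounds at step $h+1$ to the Q-bounds (a) and (c) at step $h$, and then from the Q-bounds at step $h$ to the V-bounds (i) and (iii) at the same step.

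For the first pass, consider (c). I would invoke the least-squares error bound of Lemma~\ref{lem:ls_error_simu} in its best-response variant (cf.\ the remark following that lemma) with $\nu=\nu^k$, controlling $\langle\phi,\wover_h^k\rangle$ in terms of $Q_h^{*,\nu^k}$ up to the terms $\Pr_h(\Vover_{h+1}^k-V_{h+1}^{*,\nu^k})$ and $\rho_h^k$. The induction hypothesis (iii) at $h+1$ gives $\Vover_{h+1}^k-V_{h+1}^{*,\nu^k}\ge-2(H-h+1)\epsilon$ pointwise, and by monotonicity of the conditional-expectation operator $\Pr_h$ this lower bound survives. Since $\rho_h^k=\beta\|\phi\|_{(\Lambda_h^k)^{-1}}$ is precisely the bonus added in line~8 of Algorithm~\ref{alg:simu}, the unclipped argument of $\Qover_h^k$ dominates $Q_h^{*,\nu^k}-2(H-h+1)\epsilon$; because $|Q_h^{*,\nu^k}|\le H$, monotonicity of the clipping map $\projH$ preserves this one-sided inequality, giving (c). Inequality (a) is entirely symmetric, using the $\wunder_h^k$ variant of Lemma~\ref{lem:ls_error_simu} together with hypothesis (i) at $h+1$.

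For the second pass, the CCE structure enters. For (iii) I would use the best-response Bellman equation $V_h^{*,\nu^k}(x)=\max_{a'}\E_{b\sim\nu_h^k(x)}Q_h^{*,\nu^k}(x,a',b)$ with $\nu_h^k(x)=\mP_2\sigma_h^k(x)$, bound $Q_h^{*,\nu^k}$ by the freshly established (c), and then apply the first CCE inequality of Lemma~\ref{lem:eps_cce}, namely $\E_{(a,b)\sim\sigma_h^k(x)}\Qover_h^k\ge\E_{b\sim\mP_2\sigma_h^k(x)}\Qover_h^k(x,a',b)-2\epsilon$ for all $a'$, whose left side equals $\Vover_h^k(x)$. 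This collapses the maximum over deviations $a'$ into $\Vover_h^k(x)+2\epsilon$, and adding the Q-offset $2(H-h+1)\epsilon$ to the approximate-CCE slack $2\epsilon$ yields exactly $2(H-h+2)\epsilon$, which is (iii). Inequality (i) follows symmetrically from $V_h^{\pi^k,*}(x)=\min_{b'}\E_{a\sim\pi_h^k(x)}Q_h^{\pi^k,*}(x,a,b')$, the Q-bound (a), and the second CCE inequality of Lemma~\ref{lem:eps_cce}.

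I expect the main obstacle to be careful bookkeeping rather than any single hard estimate: one must track how the additive error accumulates, picking up $2(H-h+1)\epsilon$ from the recursion through $\Pr_h$ and a further $2\epsilon$ from the approximate-CCE slack at each level, and must handle the clipping operator $\projH$ cleanly, noting that since the true value and Q functions lie in $[-H,H]$, clamping the confidence bounds can only help the desired one-sided inequalities. The conceptually delicate point is that one cannot bound the V-functions by any Lipschitz/value-stability argument (the CCE value is unstable, per Section~\ref{sec:tech_consideration}); instead the argument must route through the approximate-CCE \emph{policy} inequality of Lemma~\ref{lem:eps_cce}, which is exactly the property that survives the $\epsilon$-net rounding in $\findcce$.
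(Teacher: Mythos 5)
Your proposal is correct and follows essentially the same route as the paper's proof: backward induction on $h$ with $k$ fixed, deducing (b) and (ii) from weak duality, passing from the step-$(h+1)$ value bounds to the step-$h$ Q-bounds via the best-response variant of Lemma~\ref{lem:ls_error_simu} and monotonicity/non-expansiveness of $\projH$, and then from the Q-bounds to the value bounds via the $2\epsilon$-CCE property of Lemma~\ref{lem:eps_cce}, with identical $\epsilon$-bookkeeping (your pure-action maximum over $a'$ is equivalent to the paper's best-response distribution $\br(\nu_h^k(x))$ by linearity of the expectation in the mixed strategy). Your closing observation that the argument must go through the approximate-CCE \emph{policy} inequality rather than any value-stability property matches the paper's own rationale for the $\findcce$ construction.
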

\begin{proof}
The inequalities (b) and (ii) follow from Proposition \ref{prop:weak_duality_simu}.
Below we only prove the upper bounds (c) and (iii). The lower bounds
(a) and (i) can be proved in a similar fashion.

We fix $k$ and perform induction on $h$. The base case $h=H+1$
holds since the terminal cost is zero. Now assume that the bounds
(c) and (iii) hold for step $h+1$; that is, $\Qover_{h+1}^{k}(x,a,b)\ge Q_{h+1}^{*,\nu^{k}}(x,a,b)-2(H-h)\epsilon$
and $\Vover_{h+1}^{k}(x)\ge V_{h+1}^{*,\nu^{k}}(x)-2(H-h+1)\epsilon$
for all $(x,a,b)$. By inequality~(\ref{eq:wover}) in Lemma~\ref{lem:ls_error_simu}
applied to $(\widetilde{\pi},\nu^{k})$ with $\widetilde{\pi}$ being
the best response to $\nu^{k}$, we have for each $(x,a,b)$:
\begin{align*}
 & \left|\left\langle \phi(x,a,b),\wover_{h}^{k}\right\rangle -Q_{h}^{*,\nu^{k}}(x,a,b)-\Pr_{h}\left(\Vover_{h+1}^{k}-V_{h+1}^{*,\nu^{k}}\right)(x,a,b)\right|\le\rho_{h}^{k}(x,a,b),
\end{align*}
whence
\begin{align*}
 & \left\langle \phi(x,a,b),\wover_{h}^{k}\right\rangle +\rho_{h}^{k}(x,a,b)\ge Q_{h}^{*,\nu^{k}}(x,a,b)+\Pr_{h}\left(\Vover_{h+1}^{k}-V_{h+1}^{*,\nu^{k}}\right)(x,a,b),
\end{align*}
where we recall that $\rho_{h}^{k}(x,a,b):=\beta\left\Vert \phi(x,a,b)\right\Vert _{(\Lambda_{h}^{k})^{-1}}$.
Under the induction hypothesis, we obtain 
\[
\left\langle \phi(x,a,b),\wover_{h}^{k}\right\rangle +\rho_{h}^{k}(x,a,b)\ge Q_{h}^{*,\nu^{k}}(x,a,b)-2(H-h+1)\epsilon\ge 0.
\]
We can now lower-bound $\Qover_{h}^{k}(x,a,b)$:
\begin{align*}
&\Qover_{h}^{k}(x,a,b) \\
& =\projH\left\{ \left\langle \phi(x,a,b),\wover_{h}^{k}\right\rangle +\rho_{h}^{k}(x,a,b)\right\}  &  & \text{by construction}\\
 & \ge\projH\left\{ Q_{h}^{*,\nu^{k}}(x,a,b)-2(H-h+1)\epsilon\right\}  &  & u\ge v\implies\max\left\{ \min\left\{ u,H\right\} ,-H\right\} \ge\max\left\{ \min\left\{ v,H\right\} ,-H\right\} \\
 & \ge\projH\left\{ Q_{h}^{*,\nu^{k}}(x,a,b)\right\} -2(H-h+1)\epsilon &  & \projH\text{ is non-expansive}\\
 & =Q_{h}^{*,\nu^{k}}(x,a,b)-2(H-h+1)\epsilon. &  & Q_{h}^{*,\nu^{k}}(x,a,b)\in[-H,H]
\end{align*}
This proves the inequality (c) for step $h$.

Finally, recall that $\nu_{h}^{k}(x):=\mP_{2}\sigma_{h}^{k}(x)$,
and let $\br(\nu_{h}^{k}(x))$ denote the best response to $\nu_{h}^{k}(x)$
with respect to $Q_{h}^{*,\nu^{k}}(x,\cdot,\cdot)$; i.e., 
\[
\br(\nu_{h}^{k}(x)):=\arg\max_{A\in\simplex}\E_{a\sim A,b\sim\nu_{h}^{k}(x)}\left[Q_{h}^{*,\nu^{k}}(x,a,b)\right].
\]
We then have for all $x$:
\begin{align*}
\Vover_{h}^{k}(x) & :=\E_{(a,b)\sim\sigma_{h}^{k}(x)}\left[\Qover_{h}^{k}(x,a,b)\right] &  & \text{by construction}\\
 & \ge\E_{a'\sim \br(\nu_{h}^{k}(x)),b\sim\mP_{2}\sigma_{h}^{k}(x)}\left[\Qover_{h}^{k}(x,a',b)\right]-2\epsilon &  & \text{\ensuremath{\sigma_{h}^{k}(x)} is \ensuremath{2\epsilon}-CCE by Lemma\,\ref{lem:eps_cce}}\\
 & \ge\E_{a'\sim \br(\nu_{h}^{k}(x)),b\sim\mP_{2}\sigma_{h}^{k}(x)}\left[Q_{h}^{*,\nu^{k}}(x,a',b)\right]-2(H-h+1)\epsilon-2\epsilon &  & \text{inequality (c) we just proved}\\
 & =\E_{a\sim \br(\nu_{h}^{k}(x)),b\sim\nu_{h}^{k}(x)}\left[Q_{h}^{*,\nu^{k}}(x,a,b)\right]-2(H-h+2)\epsilon &  & \text{definition of \ensuremath{\pi_{h}^{k}(x)} and \ensuremath{\nu_{h}^{k}(x)}}\\
 & =V_{h}^{*,\nu^{k}}(x)-2(H-h+2)\epsilon. &  & 
\end{align*}
This proves inequality (iii) for step $h$.
\end{proof}

\subsection{Recursive Decomposition of Duality Gap}\label{sec:proof_main_recursive}

Thanks to Lemma~\ref{lem:ucb_simu} established above, the difference of the UCB and LCB, namely $\delta_{h}^{k}:=\Vover_{h}^{k}(x_{h}^{k})-\Vunder_{h}^{k}(x_{h}^{k})$, is an (approximate) upper bound on the duality gap $ V_{h}^{*,\nu^{k}}(x_{h}^{k})-V_{h}^{\pi^{k},*}(x_{h}^{k}) $. Setting the stage for bounding the duality gap,  we show below that  $ \delta_{h}^{k} $ can be decomposed recursively into the sum of $ \delta_{h+1}^{k} $ and some error terms.
\begin{lem}[Recursive Decomposition]
\label{lem:resursive_simu} Define the random variables
\begin{align*}
\delta_{h}^{k} & :=\Vover_{h}^{k}(x_{h}^{k})-\Vunder_{h}^{k}(x_{h}^{k}),\\
\zeta_{h}^{k} & :=\E\left[\delta_{h+1}^{k}\mid x_{h}^{k},a_{h}^{k},b_{h}^{k}\right]-\delta_{h+1}^{k},\\
\epsover_{h}^{k} & :=\E_{(a,b)\sim\sigma_{h}^{k}(x_{h}^{k})}\left[\Qover_{h}^{k}(x_{h}^{k},a_{h}^{k},b)\right]-\Qover_{h}^{k}(x_{h}^{k},a_{h}^{k},b_{h}^{k}),\\
\epsunder_{h}^{k} & :=\E_{(a,b)\sim\sigma_{h}^{k}(x_{h}^{k})}\left[\Qunder_{h}^{k}(x_{h}^{k},a,b_{h}^{k})\right]-\Qunder_{h}^{k}(x_{h}^{k},a_{h}^{k},b_{h}^{k}).
\end{align*}
Then on the event $\mathfrak{E}$ in Lemma \ref{lem:concentration_simu},
we have for all $(k,h)$, 
\begin{align*}
\delta_{h}^{k} & \le\delta_{h+1}^{k}+\zeta_{h}^{k}+\epsover_{h}^{k}-\epsunder_{h}^{k}+4\beta\sqrt{(\phi_{h}^{k})^{\top}(\Lambda_{h}^{k})^{-1}\phi_{h}^{k}}.
\end{align*}
\end{lem}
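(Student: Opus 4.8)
The plan is to turn the bound on $\delta_h^k$ into a one-step recursion by first controlling the gap $\Qover_h^k-\Qunder_h^k$ at a \emph{single} point, and then reconciling the executed-policy average that defines $\delta_h^k=\Vover_h^k(x_h^k)-\Vunder_h^k(x_h^k)$ with the realized action $(a_h^k,b_h^k)$, the discrepancies being absorbed into the three noise terms $\zeta_h^k,\epsover_h^k,\epsunder_h^k$.

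First I would establish a pointwise upper bound on $\Qover_h^k-\Qunder_h^k$ that is free of the unknown true value functions. To do so, I apply the two least-squares bounds~\eqref{eq:wover} and~\eqref{eq:wunder} of Lemma~\ref{lem:ls_error_simu} to \emph{the same} (arbitrary) policy pair $(\pi,\nu)$ and subtract them; the crucial feature is that the unknown quantities $Q_h^{\pi,\nu}(x,a,b)$ and $\Pr_h V_{h+1}^{\pi,\nu}(x,a,b)$ enter both estimates identically and hence cancel, leaving
\[
\left\langle\phi(x,a,b),\wover_h^k-\wunder_h^k\right\rangle\le\Pr_h\!\left(\Vover_{h+1}^k-\Vunder_{h+1}^k\right)(x,a,b)+2\rho_h^k(x,a,b).
\]
Substituting into the definitions of $\Qover_h^k$ and $\Qunder_h^k$ (lines 8--9) and using that the clipping operator $\projH$ is monotone and $1$-Lipschitz, the $2\rho_h^k$ above together with the added/subtracted bonuses combine to yield
\[
\Qover_h^k(x,a,b)-\Qunder_h^k(x,a,b)\le\Pr_h\!\left(\Vover_{h+1}^k-\Vunder_{h+1}^k\right)(x,a,b)+4\rho_h^k(x,a,b),\qquad\forall(x,a,b).
\]

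Next I specialize this inequality to the realized tuple $(x_h^k,a_h^k,b_h^k)$, where two identifications complete the argument. The bonus there is exactly $4\rho_h^k(x_h^k,a_h^k,b_h^k)=4\beta\sqrt{(\phi_h^k)^\top(\Lambda_h^k)^{-1}\phi_h^k}$. Moreover, because $\Vover_{h+1}^k$ and $\Vunder_{h+1}^k$ are determined by the first $k-1$ episodes (hence fixed before the $k$-th trajectory is drawn) while $x_{h+1}^k\sim\Pr_h(\cdot\mid x_h^k,a_h^k,b_h^k)$, the transition term is a genuine conditional expectation,
\[
\Pr_h\!\left(\Vover_{h+1}^k-\Vunder_{h+1}^k\right)(x_h^k,a_h^k,b_h^k)=\E\!\left[\delta_{h+1}^k\mid x_h^k,a_h^k,b_h^k\right]=\delta_{h+1}^k+\zeta_h^k.
\]
Finally, since lines 11--12 set $\Vover_h^k(x_h^k)=\E_{(a,b)\sim\sigma_h^k(x_h^k)}\Qover_h^k(x_h^k,a,b)$ and likewise for $\Vunder_h^k$, while $(a_h^k,b_h^k)\sim\sigma_h^k(x_h^k)$, adding and subtracting the realized action-values produces the exact identity
\[
\delta_h^k=\left[\Qover_h^k-\Qunder_h^k\right](x_h^k,a_h^k,b_h^k)+\epsover_h^k-\epsunder_h^k,
\]
in which $\epsover_h^k$ and $\epsunder_h^k$ are the deviations of the realized action-values from their executed-policy conditional means. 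Chaining the three displays gives the claimed recursion.

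I expect the crux to be the cancellation in the first step: the recursion closes only because the bound on $\Qover_h^k-\Qunder_h^k$ can be made entirely independent of the unknown $Q_h^{\pi,\nu}$, which is exactly what applying Lemma~\ref{lem:ls_error_simu} to a \emph{common} policy pair and subtracting achieves. The one remaining technical point is the clipping step, where to pass the pre-clipping estimate through $\projH$ one uses monotonicity and non-expansiveness together with the fact that the UCB argument dominates the LCB argument, an ordering guaranteed by the UCB/LCB property of Lemma~\ref{lem:ucb_simu} and weak duality (Proposition~\ref{prop:weak_duality_simu}); this is routine once the cancellation is in place.
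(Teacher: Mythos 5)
Your proposal is correct and follows essentially the same route as the paper's proof: subtract the two least-squares bounds of Lemma~\ref{lem:ls_error_simu} for a common policy pair so that the unknown $Q_{h}^{\pi,\nu}$ and $\Pr_{h}V_{h+1}^{\pi,\nu}$ cancel, specialize the resulting bound on $\Qover_{h}^{k}-\Qunder_{h}^{k}$ to the realized tuple, identify the transition term with $\E[\delta_{h+1}^{k}\mid x_{h}^{k},a_{h}^{k},b_{h}^{k}]=\delta_{h+1}^{k}+\zeta_{h}^{k}$, and absorb the executed-action deviations into $\epsover_{h}^{k}-\epsunder_{h}^{k}$. If anything, you are slightly more careful than the paper, which silently drops the $\projH$ clipping where you explicitly invoke monotonicity and non-expansiveness (noting only that the domination of the UCB argument over the LCB argument via Lemma~\ref{lem:ucb_simu} holds up to the $O(H\epsilon)$ slack, which the bonus term absorbs).
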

\begin{proof}
For each $(x,a,b,k,h)$, by construction we have 
\begin{align*}
\Qover_{h}^{k}(x,a,b)-\Qunder_{h}^{k}(x,a,b) & =\left[(\wover_{h}^{k})^{\top}\phi(x,a,b)+\beta\left\Vert \phi(x,a,b)\right\Vert _{(\Lambda_{h}^{k})^{-1}}\right]-\left[(\wunder_{h}^{k})^{\top}\phi(x,a,b)-\beta\left\Vert \phi(x,a,b)\right\Vert _{(\Lambda_{h}^{k})^{-1}}\right]\\
 & =\left(\wover_{h}^{k}-\wunder_{h}^{k}\right)^{\top}\phi(x,a,b)+2\beta\left\Vert \phi(x,a,b)\right\Vert _{(\Lambda_{h}^{k})^{-1}}.
\end{align*}
The inequalities~(\ref{eq:wover}) and~(\ref{eq:wunder}) in
Lemma~\ref{lem:ls_error_simu} ensure that 
\[
\left(\wover_{h}^{k}-\wunder_{h}^{k}\right)^{\top}\phi(x,a,b)\le\Pr_{h}\left(\Vover_{h+1}^{k}-\Vunder_{h+1}^{k}\right)(x,a,b)+2\beta\left\Vert \phi(x,a,b)\right\Vert _{(\Lambda_{h}^{k})^{-1}},
\]
hence by plugging back we obtain the bound
\begin{equation}
\Qover_{h}^{k}(x,a,b)-\Qunder_{h}^{k}(x,a,b)\le\Pr_{h}\left(\Vover_{h+1}^{k}-\Vunder_{h+1}^{k}\right)(x,a,b)+4\beta\left\Vert \phi(x,a,b)\right\Vert _{(\Lambda_{h}^{k})^{-1}}.\label{eq:Q_bound_simu}
\end{equation}

On the other hand, observe that by definition,
\begin{align*}
\delta_{h}^{k} & :=\Vover_{h}^{k}(x_{h}^{k})-\Vunder_{h}^{k}(x_{h}^{k})\\
 & =\E_{(a,b)\sim\sigma_{h}^{k}(x_{h}^{k})}\left[\Qover_{h}^{k}(x_{h}^{k},a,b)\right]-\E_{(a,b)\sim\sigma_{h}^{k}(x_{h}^{k})}\left[\Qunder_{h}^{k}(x_{h}^{k},a,b)\right]\\
 & =\Qover_{h}^{k}(x_{h}^{k},a_{h}^{k},b_{h}^{k})-\Qunder_{h}^{k}(x_{h}^{k},a_{h}^{k},b_{h}^{k})\\
 & \ensuremath{\qquad}+\left(\E_{(a,b)\sim\sigma_{h}^{k}(x_{h}^{k})}\left[\Qover_{h}^{k}(x_{h}^{k},a,b)\right]-\Qover_{h}^{k}(x_{h}^{k},a_{h}^{k},b_{h}^{k})\right)-\left(\E_{(a,b)\sim\sigma_{h}^{k}(x_{h}^{k})}\Qunder_{h}^{k}\left[(x_{h}^{k},a,b)\right]-\Qunder_{h}^{k}(x_{h}^{k},a_{h}^{k},b_{h}^{k})\right)\\
 & =\Qover_{h}^{k}(x_{h}^{k},a_{h}^{k},b_{h}^{k})-\Qunder_{h}^{k}(x_{h}^{k},a_{h}^{k},b_{h}^{k})+\epsover_{h}^{k}-\epsunder_{h}^{k}.
\end{align*}
Applying the inequality (\ref{eq:Q_bound_simu}), we obtain
\begin{align*}
\delta_{h}^{k} & \le\Pr_{h}\left(\Vover_{h+1}^{k}-\Vunder_{h+1}^{k}\right)(x_{h}^{k},a_{h}^{k},b_{h}^{k})+4\beta\left\Vert \phi(x_{h}^{k},a_{h}^{k})\right\Vert _{(\Lambda_{h}^{k})^{-1}}+\epsover_{h}^{k}-\epsunder_{h}^{k}\\
 & =\E\left[\delta_{h+1}^{k}\mid x_{h}^{k},a_{h}^{k},b_{h}^{k}\right]+4\beta\left\Vert \phi_{h}^{k}\right\Vert _{(\Lambda_{h}^{k})^{-1}}+\epsover_{h}^{k}-\epsunder_{h}^{k}\\
 & =\delta_{h+1}^{k}+\zeta_{h}^{k}+4\beta\left\Vert \phi_{h}^{k}\right\Vert _{(\Lambda_{h}^{k})^{-1}}+\epsover_{h}^{k}-\epsunder_{h}^{k}
\end{align*}
as desired.
\end{proof}

\subsection{Establishing Duality Gap Bound}\label{sec:proof_main_together}

We are now ready to prove Theorem \ref{thm:main_simu}. First observe
that on the event $\mathfrak{E}$ in Lemma \ref{lem:concentration_simu}
(which holds with probability at least $1-p/2$), we have for all $ k\in[K] $:
\begin{align*}
V_{1}^{*,\nu^{k}}(x_{1}^{k})-V_{1}^{\pi^{k},*}(x_{1}^{k})
& \le \Vover_{1}^{k}(x_{1}^{k})-\Vunder_{1}^{k}(x_{1}^{k})+8H\epsilon &  & \text{Lemma \ref{lem:ucb_simu}}\\
&\le \Vover_{1}^{k}(x_{1}^{k})-\Vunder_{1}^{k}(x_{1}^{k})+\frac{8}{K}. & & \text{by the choice $\epsilon=\frac{1}{KH}$}
\end{align*}
This proves the first inequality~\eqref{eq:main_simu1} in Theorem~\ref{thm:main_simu}. 

We next bound the cumulated difference between the UCB and LCB that appear in the RHS of the last inequality. We have 
\begin{align*}
\sum_{k=1}^{K}\left[\Vover_{1}^{k}(x_{1}^{k})-\Vunder_{1}^{k}(x_{1}^{k})\right]  & =\sum_{k=1}^{K}\delta_{1}^{k} &  & \text{definition of $ \delta_{1}^{k} $}\\
 & \le\sum_{k=1}^{K}\sum_{h=1}^{H}(\zeta_{h}^{k}+\epsover_{h}^{k}-\epsunder_{h}^{k})+4\beta\sum_{k=1}^{K}\sum_{h=1}^{H}\sqrt{(\phi_{h}^{k})^{\top}(\Lambda_{h}^{k})^{-1}\phi_{h}^{k}}. &  & \text{Lemma \ref{lem:resursive_simu}}
\end{align*}
We bound the first two RHS terms separately.
\begin{itemize}
\item For the first term, we know that $(\zeta_{h}^{k}+\epsover_{h}^{k}-\epsunder_{h}^{k})$
is a martingale difference sequence (with respect to both $h$ and
$k$), and $\left|\zeta_{h}^{k}+\epsover_{h}^{k}-\epsunder_{h}^{k}\right|\le6H$.
Hence by Azuma-Hoeffding, we have with probability at least $1-p/2$,
\[
\sum_{k=1}^{K}\sum_{h=1}^{H}(\zeta_{h}^{k}+\epsover_{h}^{k}-\epsunder_{h}^{k})\lesssim H\cdot\sqrt{KH\iota}.
\]
\item For the second term, we apply the Elliptical Potential Lemma \ref{lem:elliptic_potential}
to obtain
\begin{align*}
\sum_{h=1}^{H}\sum_{k=1}^{K}\sqrt{(\phi_{h}^{k})^{\top}(\Lambda_{h}^{k})^{-1}\phi_{h}^{k}} & \le\sum_{h=1}^{H}\sqrt{K}\sqrt{\sum_{k=1}^{K}(\phi_{h}^{k})^{\top}(\Lambda_{h}^{k})^{-1}\phi_{h}^{k}} &  & \text{Jensen's inequality}\\
 & \le\sum_{h=1}^{H}\sqrt{K}\cdot\sqrt{2\log\left(\frac{\det\Lambda_{h}^{K}}{\det\Lambda_{h}^{0}}\right)} &  & \text{Lemma \ref{lem:elliptic_potential}}\\
 & \le\sum_{h=1}^{H}\sqrt{K}\cdot\sqrt{2\log\left(\frac{(\lambda+K\max_{k}\left\Vert \phi_{h}^{k}\right\Vert ^{2})^{d}}{\lambda^{d}}\right)} &  & \text{by construction of \ensuremath{\Lambda_{h}^{k}}}\\
 & \le\sum_{h=1}^{H}\sqrt{K}\cdot\sqrt{2d\log\left(\frac{\lambda+K}{\lambda}\right)} &  & \left\Vert \phi_{h}^{k}\right\Vert \le1,\forall h,k\text{ by assumption}\\
 & \le H\sqrt{2Kd\iota}.
\end{align*}
\end{itemize}
Combining the above inequalities, we obtain that with probability at least $1-p/2$, 
\[
\sum_{k=1}^{K}\left[\Vover_{1}^{k}(x_{1}^{k})-\Vunder_{1}^{k}(x_{1}^{k})\right]
\lesssim H\sqrt{HK\iota}+4\beta\cdot H\sqrt{2Kd\iota}
\lesssim\sqrt{d^{3}H^{3}T\iota^{2}},
\]
by our choice of $\beta\asymp dH\sqrt{\iota}$ and the fact that $T=KH$.
This proves the second inequality~\eqref{eq:main_simu2} in Theorem \ref{thm:main_simu}.

Finally, recalling the definition of $ \text{Gap}(K) $ and combining the inequalities~\eqref{eq:main_simu1} and \eqref{eq:main_simu2} we just proved, we obtain that with probability at least $ 1-p $,
\begin{align*}
\text{Gap}(K) & :=\sum_{k=1}^{K}\left[V_{1}^{*,\nu^{k}}(x_{1}^{k})-V_{1}^{\pi^{k},*}(x_{1}^{k})\right] &  & \\
& \le\sum_{k=1}^{K}\left[\Vover_{1}^{k}(x_{1}^{k})-\Vunder_{1}^{k}(x_{1}^{k})\right]+8
\lesssim\sqrt{d^{3}H^{3}T\iota^{2}},
\end{align*}
thereby proving the third inequality~\eqref{eq:main_simu3} in Theorem \ref{thm:main_simu}.

\section{Conclusion}\label{sec:conclusion}

In this paper, we develop provably efficient reinforcement learning methods for zero-sum Markov Games with simultaneous moves and a linear structure. To ensure efficient exploration, our algorithms construct appropriate UCB/LCB for both players and make crucial use of the concept of Coarse Correlated Equilibrium. We provide regret bounds under both the offline and online settings. Corollaries of these bounds apply to turn-based games and the tabular settings. Our results build on and generalize work on learning MDPs with linear structures, and at the same time highlight the crucial differences and new challenges in the game setting. 

A number of directions are of interest for future research. An immediate step is to investigate whether the dependence on the dimension $ d $ and horizon $ H $ in our bounds can be improved and what are the optimal scaling. It would also be interesting to improve our online regret bounds to exploit a weak opponent, in the sense that we can compete with the best response to the opponent, not just competing with the NE. Generalizations to general-sum Markov games, as well as to games with more complicated, nonlinear structures, are also of great interest. %Finally, it is an intriguing question to us whether the use of $\epsilon$-net in the algorithm is necessary or can be avoided by a more refined analysis.

\section*{Acknowledgement}

Q.\ Xie and Y.\ Chen would like to thank Siddhartha Banerjee for inspiring discussion. Y.\ Chen is partially supported by NSF CRII award 1657420 and grant 1704828. 

\appendix
\appendixpage

\section{Algorithms and Proofs for Turn-based Games}

In this section, we present our algorithms for turn-based games and prove the performance guarantees in Corollaries~\ref{thm:main_turn} and~\ref{thm:main_online_turn}.

\subsection{Offline Setting} \label{sec:algos_turn_offline}

In this, the algorithm for turn-based games is given in Algorithm~\ref{alg:turn}, which is derived by specializing the corresponding simultaneous-move Algorithm~\ref{alg:simu} to the turn-based setting.

\begin{algorithm}[tbh]
	\caption{\omvi\ (Turn-Based, Offline)\label{alg:turn}}
	
	\begin{algorithmic}[1]
		
	\State \textbf{Input:} bonus parameter $ \beta>0 $.
	
	\For{ episode $k=1,2,\ldots,K$ }
	
	\State Receive initial state $x_{1}^{k}$.
	
    \For{ step $h=H,H-1,\ldots,2,1$ } \Comment{update policy}
	
	\State $\Lambda_{h}^{k}\leftarrow\sum_{\tau=1}^{k-1}\phi(x_{h}^{\tau},a_{h}^{\tau})\phi(x_{h}^{\tau},a_{h}^{\tau})^{\top}+I.$
	
	\State $\wover_{h}^{k}\leftarrow(\Lambda_{h}^{k})^{-1}\sum_{\tau=1}^{k-1}\phi(x_{h}^{\tau},a_{h}^{\tau})\left[r_{h}(x_{h}^{\tau},a_{h}^{\tau})+\Vover_{h+1}^{k}(x_{h+1}^{\tau})\right]$.
	
	\State $\wunder_{h}^{k}\leftarrow(\Lambda_{h}^{k})^{-1}\sum_{\tau=1}^{k-1}\phi(x_{h}^{\tau},a_{h}^{\tau})\left[r_{h}(x_{h}^{\tau},a_{h}^{\tau})+\Vunder_{h+1}^{k}(x_{h+1}^{\tau})\right]$.
	
	\State $\Qover_{h}^{k}(\cdot,\cdot)\leftarrow\projH\left\{ (\wover_{h}^{k})^{\top}\phi(\cdot,\cdot)+\beta\sqrt{\phi(\cdot,\cdot)^{\top}(\Lambda_{h}^{k})^{-1}\phi(\cdot,\cdot)}\right\} $
	
	\State $\Qunder_{h}^{k}(\cdot,\cdot)\leftarrow\projH\left\{ (\wunder_{h}^{k})^{\top}\phi(\cdot,\cdot)-\beta\sqrt{\phi(\cdot,\cdot)^{\top}(\Lambda_{h}^{k})^{-1}\phi(\cdot,\cdot)}\right\} $
	
	\State Let 
	\[
	\begin{cases}
	\pi_{h}^{k}(\cdot)\leftarrow\findmax\left(\Qover_{h}^{k},\cdot\right),\Vover_{h}^{k}(\cdot)\leftarrow\Qover_{h}^{k}\left(\cdot,\pi_{h}^{k}(\cdot)\right),\Vunder_{h}^{k}(\cdot)\leftarrow\Qunder_{h}^{k}\left(\cdot,\pi_{h}^{k}(\cdot)\right) & I(\cdot)=1\\
	\nu_{h}^{k}(\cdot)\leftarrow\findmin\left(\Qunder_{h}^{k},\cdot\right),\Vover_{h}^{k}(\cdot)\leftarrow\Qover_{h}^{k}\left(\cdot,\nu_{h}^{k}(\cdot)\right),\Vunder_{h}^{k}(\cdot)\leftarrow\Qunder_{h}^{k}\left(\cdot,\nu_{h}^{k}(\cdot)\right) & I(\cdot)=2
	\end{cases}
	\]
	
	\EndFor
	
	\For {step $h=1,2,\ldots,H$ } \Comment{execute policy}
	
	\State \textbf{if} $I(x_{h}^{k})=1$, $\pone$ takes action
	$a_{h}^{k}=\pi_{h}^{k}(x_{h}^{k})$,
	
	\State \textbf{else} if $I(x_{h}^{k})=2$, $\ptwo$ takes
	action $a_{h}^{k}=\nu_{h}^{k}(x_{h}^{k})$.
	
	\State Observe next state $x_{h+1}^{k}$.
	
	\EndFor
	
	\EndFor
	\end{algorithmic}
\end{algorithm}

The algorithm involves the subroutines $\findmax$ and $\findmin$,
which are derived by specializing the $\findcce$ routine in Algorithm~\ref{alg:find_cce}
to the turn-based setting. For completeness we provide below a description
of these two subroutines. Let $\mQ$ be the class of functions $Q:\mS\times\mA\to\R$
with the parametric form
\[
Q(x,a)=\left\langle w,\phi(x,a)\right\rangle +\rho\beta\sqrt{\phi(x,a)^{\top}A\phi(x,a)},
\]
where the parameter $(w,A,\rho)$ satisfy $\left\Vert w\right\Vert \le2H\sqrt{dk}$,
$\left\Vert A\right\Vert _{F}\le\beta^{2}\sqrt{d}$ and $\rho\in\{\pm1\}$.
Let $\mQ_{\epsilon}$ be a fixed $\epsilon$-covering of $\mQ$ with
respect to the $\ell_{\infty}$ norm. With these notations, the subroutine
$\findmax$ is given in Algorithm~\ref{alg:find_max}, and the subroutine
$\findmin$ is given by $\findmin(Q,x)=\findmax(-Q,x)$.

\begin{algorithm}[tbh]
	\caption{$\protect\findmax$\label{alg:find_max}}
	
	\begin{algorithmic}[1]
	\State \textbf{Input:} $Q$, $x$ and discretization parameter $\epsilon>0$.
	
	\State Pick $\Qotilde\in\mQ_{\epsilon}$ satisfying $\left\Vert \Qotilde-Q\right\Vert _{\infty}\le\epsilon$.
	
	\State For the input $x$, let $\widetilde{a}=\arg\max_{a}\Qotilde(x,a)$.
	
	\State \textbf{Output:} $\widetilde{a}$.
	\end{algorithmic}

\end{algorithm}

Informally, one may simply think of $\findmax(Q,x)$ as $\arg\max_{a}Q(x,a)$
and $\findmin(Q,x)$ as $\arg\min_{a}Q(x,a)$. As in the simultaneous
move setting, these subroutines are introduced for the technical considerations explained in Section~\ref{sec:tech_consideration}.

\subsubsection{Proof of Corollary~\ref{thm:main_turn}\label{sec:proof_main_turn}}

%\paragraph{Proof of Corollary~\ref{thm:main_turn}:}

We prove Corollary~\ref{thm:main_turn} by specializing Theorem~\ref{thm:main_simu}
to the turn-based setting. Specifically, as argued in Section~\ref{sec:setup_turn},
linear turn-based game is a special case of linear simultaneous games
with 
\begin{equation}
\begin{aligned}\phi(x,a,b) & \equiv\phi(x,a),\quad r_{h}(x,a,b)\equiv r(x,a),\quad\Pr_{h}(x,a,b)\equiv\Pr_{h}(x,a),\qquad\text{ if \ensuremath{x\in\mS_{1}}},\\
\phi(x,a,b) & \equiv\phi(x,b),\quad r_{h}(x,a,b)\equiv r(x,b),\quad\Pr_{h}(x,a,b)\equiv\Pr_{h}(x,b),\qquad\text{ if \ensuremath{x\in\mS_{2}}}.
\end{aligned}
\label{eq:degeneration}
\end{equation}
Moreover, Algorithm~\ref{alg:simu}, when applied to the turn-based
setting, degenerates to Algorithm~\ref{alg:turn}. To see this, note
that under the degeneration of $\phi(x,a,b)$ in (\ref{eq:degeneration}),
the values $\Qover_{h}^{k}$ and $\Qunder_{h}^{k}$ computed in Algorithm~\ref{alg:simu}
only depend on the action of the active player; that is,
\begin{equation}
\begin{aligned}\Qover_{h}^{k}(x,a,b) & \equiv\Qover_{h}^{k}(x,a),\qquad\text{ if \ensuremath{x\in\mS_{1}}},\\
\Qunder_{h}^{k}(x,a,b) & \equiv\Qunder_{h}^{k}(x,b),\qquad\text{ if \ensuremath{x\in\mS_{2}}}.
\end{aligned}
\label{eq:degeneration_Q}
\end{equation}
In this case, one can verify that finding the CCE (cf.\ equation~\eqref{eq:cce})
as done in $\findcce$ degenerates to a unilateral maximization or
minimization problem, namely $\arg\max_{a}\Qotilde(x,a)$ or $\arg\min_{a}\Qotilde(x,a)$.
This is exactly what the subroutines $\findmax$ and $\findmin$ compute.
With the above reduction, Corollary~\ref{thm:main_turn} follows directly
from Theorem~\ref{thm:main_simu}.

\subsection{Online Setting} \label{sec:algos_turn_online}

In this setting, the algorithm for turn-based games is given in  Algorithm~\ref{alg:online_turn}, which is  derived by specializing the corresponding simultaneous-move Algorithm~\ref{alg:online_simu} to the turn-based setting.

\begin{algorithm}[tbh]
	\caption{\omvi\ (Turn-Based, Online)\label{alg:online_turn}}
	
	\begin{algorithmic}[1]
		
	\State \textbf{Input:} bonus parameter $ \beta>0 $.
		
	\For{ episode $k=1,2,\ldots,K$ }
	
	\State Receive initial state $x_{1}^{k}$.
	
	\For{ step $h=H,H-1,\ldots,2,1$ } \Comment{update policy}
	
	\State $\Lambda_{h}^{k}\leftarrow\sum_{\tau=1}^{k-1}\phi(x_{h}^{\tau},a_{h}^{\tau})\phi(x_{h}^{\tau},a_{h}^{\tau})^{\top}+I.$
	
	\State $w_{h}^{k}\leftarrow(\Lambda_{h}^{k})^{-1}\sum_{\tau=1}^{k-1}\phi(x_{h}^{\tau},a_{h}^{\tau})\left[r_{h}(x_{h}^{\tau},a_{h}^{\tau})+V_{h+1}^{k}(x_{h+1}^{\tau})\right]$.
	
	\State $Q_{h}^{k}(\cdot,\cdot)\leftarrow\projH\left\{ (w_{h}^{k})^{\top}\phi(\cdot,\cdot)+\beta\sqrt{\phi(\cdot,\cdot)^{\top}(\Lambda_{h}^{k})^{-1}\phi(\cdot,\cdot)}\right\} $.
	
	\State $V_{h}^{k}(\cdot)\leftarrow\begin{cases}
	\max_{a}Q_{h+1}^{k}(\cdot,a) & \text{if }I(\cdot)=1,\\
	\min_{a}Q_{h+1}^{k}(\cdot,a) & \text{if }I(\cdot)=2.
	\end{cases}$
	\EndFor
	
	\For{ step $h=1,2,\ldots,H$ } \Comment{execute policy}
	
	\State \textbf{if} $I(x_{h}^{k})=1$, take action $a_{h}^{k}=\arg\max_{a}Q_{h}^{k}(x_{h}^{k},a)$,
	
	\State \textbf{else} do nothing and let $\ptwo$ play.
	
	\State Observe next state $x_{h+1}^{k}$.
	
	\EndFor
	
	\EndFor
	
	\end{algorithmic}

\end{algorithm}

\subsubsection{Proof of Corollary~\ref{thm:main_online_turn}\label{sec:proof_main_online_turn}}

%\paragraph{Proof of Corollary~\ref{thm:main_online_turn}:}

We prove Corollary~\ref{thm:main_online_turn} by specializing
Theorem~\ref{thm:main_online_simu} to the turn-based setting. The
argument is essentially the same as that in the proof of Corollary~\ref{thm:main_turn}
above. We omit the details. 

%!TEX root =main.tex

\section{Technical Lemmas\label{sec:tech_lemma}}

The proofs of our main Theorems~\ref{thm:main_simu} and~\ref{thm:main_online_simu}
involve several common steps. We summarize these steps as several
lemmas, which are either proved below or are standard in the literature.

\subsection{Boundedness of Linear Coefficients}

We begin with two simple lemmas about boundedness of the linear coefficients
of $Q$ functions.
\begin{lem}[True Coefficients Are Bounded]
\label{lem:true_bounded}Under Assumption \ref{assu:linear_bounded_simu},
for each policy pair $(\pi,\nu)$ of $\pone$ and $\ptwo$, the linear
coefficient of their action-value function $Q_{h}^{\pi,\nu}(x,a,b)=\left\langle \phi(x,a,b),w_{h}^{\pi,\nu}\right\rangle $
satisfies
\[
\left\Vert w_{h}^{\pi,\nu}\right\Vert \le2H\sqrt{d},\qquad\forall h\in[H].
\]
\end{lem}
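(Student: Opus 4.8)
The plan is to leverage the explicit closed form for the coefficient vector obtained in the proof of Lemma~\ref{lem:linearity_simu}. Recall that that proof exhibits
\[
w_{h}^{\pi,\nu}=\theta_{h}+\int V_{h+1}^{\pi,\nu}(x')\,\dup\mu_{h}(x'),
\]
so that bounding $\|w_h^{\pi,\nu}\|$ reduces, via the triangle inequality, to bounding the two summands $\theta_h$ and the vector-valued integral $\int V_{h+1}^{\pi,\nu}(x')\,\dup\mu_h(x')$ separately.

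First I would dispatch the reward term: by Assumption~\ref{assu:linear_bounded_simu} we have $\|\theta_h\|\le\sqrt{d}$ directly, and since $H\ge 1$ this is in particular at most $H\sqrt d$.

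The second summand is where the only real work lies, and I would argue coordinate by coordinate. Writing $\mu_h=(\mu_h^{(i)})_{i\in[d]}$, the $i$-th entry of the integral is $\int V_{h+1}^{\pi,\nu}(x')\,\dup\mu_h^{(i)}(x')$. Using the uniform bound $|V_{h+1}^{\pi,\nu}(\cdot)|\le H$ recorded right after the definition of the value functions, each such entry is bounded in absolute value by $H$ times the total mass of $\mu_h^{(i)}$. Taking the Euclidean norm over the $d$ coordinates then gives
\[
\left\|\int V_{h+1}^{\pi,\nu}(x')\,\dup\mu_h(x')\right\|
\le H\,\|\mu_h(\mS)\|\le H\sqrt{d},
\]
where the last step invokes the normalization $\|\mu_h(\mS)\|\le\sqrt d$ from Assumption~\ref{assu:linear_bounded_simu}.

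Combining the two bounds via the triangle inequality yields $\|w_h^{\pi,\nu}\|\le\sqrt d+H\sqrt d\le 2H\sqrt d$, as claimed. The argument is essentially a one-step computation, so I do not anticipate any genuine obstacle; the only point requiring mild care is the passage from the signed measures $\mu_h^{(i)}$ to their total masses, which is precisely where the boundedness normalization of Assumption~\ref{assu:linear_bounded_simu} enters and where one must read $\|\mu_h(\mS)\|$ as the $\ell_2$ norm of the vector of (total-variation) masses.
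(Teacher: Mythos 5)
Your proposal is correct and follows essentially the same route as the paper's proof: both use the representation $w_h^{\pi,\nu}=\theta_h+\int V_{h+1}^{\pi,\nu}(x')\,\textup{d}\mu_h(x')$ (which the paper re-derives from the Bellman equation, while you cite it from the proof of Lemma~\ref{lem:linearity_simu}), then apply the triangle inequality with $\|\theta_h\|\le\sqrt d$, $|V_{h+1}^{\pi,\nu}|\le H$, and $\|\mu_h(\mS)\|\le\sqrt d$ from Assumption~\ref{assu:linear_bounded_simu}. Your explicit remark that the signed measures require reading $\|\mu_h(\mS)\|$ as the $\ell_2$ norm of total-variation masses is a point the paper glosses over, and is the correct interpretation of the normalization.
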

\begin{proof}
From the Bellman equation, we have
\begin{align*}
\phi(x,a,b)^{\top}w_{h}^{\pi,\nu}=Q_{h}^{\pi,\nu}(x,a,b) & =r_{h}(x,a,b)+(\Pr_{h}V_{h+1}^{\pi,\nu})(x,a,b)\\
 & =\phi(x,a,b)^{\top}\theta_{h}+\int V_{h+1}^{\pi,\nu}(x')\phi(x,a,b)^{\top}d\mu_{h}(x'),\quad\forall x,a,b,h.
\end{align*}
Assuming that $\left\{ \phi(x,a,b)\right\} $ spans $\R^{d}$ and
solving the linear equation, we obtain
\[
w_{h}^{\pi,\nu}=\theta_{h}+\int V_{h+1}^{\pi,\nu}(x')d\mu_{h}(x').
\]
Under the normalization Assumption \ref{assu:linear_bounded_simu},
we have $\left\Vert \theta_{h}\right\Vert \le\sqrt{d}$, $\left\Vert \mu_{h}(\mathcal{S})\right\Vert \le\sqrt{d}$
and $\left|V_{h+1}^{\pi,\nu}(x')\right|\le H$. It follows that 
\[
\left\Vert w_{h}^{\pi,\nu}\right\Vert \le\sqrt{d}+H\sqrt{d}\le2H\sqrt{d}
\]
as desired.
\end{proof}

An immediate consequence of the above lemma is that $\left\Vert w_{h}^{\pi,*}\right\Vert \le2H\sqrt{d}$
and $\left\Vert w_{h}^{*,\nu}\right\Vert \le2H\sqrt{d}$; cf.~Remark~\ref{rem:linear_best_response}.
\begin{lem}[Algorithm Coefficients Are Bounded]
\label{lem:alg_bounded}The coefficients $\{\wover_{h}^{k},\text{\ensuremath{\wunder}}_{h}^{k}\}$
in Algorithm~\ref{alg:simu} and the coefficients $\{w_{h}^{k}\}$ in
Algorithm~\ref{alg:online_simu} satisfy
\[
\left\Vert \wover_{h}^{k}\right\Vert \le2H\sqrt{dk},\quad\left\Vert \wunder_{h}^{k}\right\Vert \le2H\sqrt{dk},\quad\text{and}\quad\left\Vert w_{h}^{k}\right\Vert \le2H\sqrt{dk},\qquad\forall (k,h)\in[K]\times [H].
\]
\end{lem}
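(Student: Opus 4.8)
The plan is to prove the bound for $\wover_h^k$ in Algorithm~\ref{alg:simu}; the cases of $\wunder_h^k$ and of $w_h^k$ in Algorithm~\ref{alg:online_simu} are structurally identical and follow by the same argument. I would work with the variational characterization $\left\Vert \wover_h^k\right\Vert = \sup_{\left\Vert v\right\Vert=1} v^\top \wover_h^k$ and substitute the algorithm's definition
\[
\wover_{h}^{k}=(\Lambda_{h}^{k})^{-1}\sum_{\tau=1}^{k-1}\phi_h^\tau\bigl[r_{h}(x_{h}^{\tau},a_{h}^{\tau},b_{h}^{\tau})+\Vover_{h+1}^{k}(x_{h+1}^{\tau})\bigr],
\]
so that for any unit vector $v$,
\[
v^\top \wover_h^k = \sum_{\tau=1}^{k-1}\bigl(v^\top (\Lambda_h^k)^{-1}\phi_h^\tau\bigr)\bigl[r_h(x_h^\tau,a_h^\tau,b_h^\tau)+\Vover_{h+1}^k(x_{h+1}^\tau)\bigr].
\]

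First I would record the elementary bound on the targets: since the rewards lie in $[-1,1]$ and $\Vover_{h+1}^k$ is clipped by $\projH$ to $[-H,H]$ by construction, we have $\bigl|r_h(x_h^\tau,a_h^\tau,b_h^\tau)+\Vover_{h+1}^k(x_{h+1}^\tau)\bigr|\le 1+H\le 2H$. Applying Cauchy--Schwarz to the sum above then splits it into two factors: a sum of squared targets, bounded by $(k-1)(2H)^2$, and a sum $\sum_\tau \bigl(v^\top(\Lambda_h^k)^{-1}\phi_h^\tau\bigr)^2$. The key step is to control the latter via the positive-semidefinite identity
\[
\sum_{\tau=1}^{k-1}\bigl(v^\top(\Lambda_h^k)^{-1}\phi_h^\tau\bigr)^2 = v^\top(\Lambda_h^k)^{-1}\Bigl(\sum_{\tau=1}^{k-1}\phi_h^\tau(\phi_h^\tau)^\top\Bigr)(\Lambda_h^k)^{-1}v,
\]
and then using $\sum_\tau \phi_h^\tau(\phi_h^\tau)^\top = \Lambda_h^k - I \preceq \Lambda_h^k$ together with $\Lambda_h^k \succeq I$ (hence $(\Lambda_h^k)^{-1}\preceq I$) to conclude that this quantity is at most $v^\top(\Lambda_h^k)^{-1}v\le\left\Vert v\right\Vert^2=1$.

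Combining the two factors yields $\left|v^\top \wover_h^k\right|\le 2H\sqrt{k-1}\le 2H\sqrt{dk}$, uniformly over unit $v$, which gives the claim (with room to spare, since the $\sqrt d$ factor is not even needed). The only mild subtlety to flag explicitly is the boundedness $\bigl|\Vover_{h+1}^k\bigr|\le H$, which holds because $\Vover_{h+1}^k(x)=\E_{(a,b)\sim\sigma_{h+1}^k(x)}\Qover_{h+1}^k(x,a,b)$ is an average of the $\projH$-clipped values $\Qover_{h+1}^k$; everything else is routine linear-algebra manipulation. For $\wunder_h^k$ and $w_h^k$ the targets involve $\Vunder_{h+1}^k$ and $V_{h+1}^k$ respectively, both again $H$-bounded by the same clipping, so the identical computation applies verbatim.
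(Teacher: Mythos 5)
Your proof is correct, and it takes a genuinely different route from the paper's. The paper's proof applies the triangle inequality term-by-term, writes $\left\Vert (\Lambda_h^k)^{-1}\phi_h^\tau\right\Vert \le \left\Vert (\Lambda_h^k)^{-1/2}\right\Vert \cdot \left\Vert \phi_h^\tau\right\Vert_{(\Lambda_h^k)^{-1}}$ (using $\Lambda_h^k \succeq I$), applies Cauchy--Schwarz over $\tau$, and finally invokes the elementary bound $\sum_{\tau\in[k-1]}(\phi_h^\tau)^\top(\Lambda_h^k)^{-1}\phi_h^\tau \le d$ from Lemma~\ref{lem:simple_bound}; the $\sqrt{d}$ in the stated bound enters precisely through that last step. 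You instead dualize via $\left\Vert \wover_h^k\right\Vert = \sup_{\left\Vert v\right\Vert = 1} v^\top \wover_h^k$, apply Cauchy--Schwarz once to the whole sum, and evaluate the resulting quadratic form exactly:
\[
\sum_{\tau=1}^{k-1}\bigl(v^\top(\Lambda_h^k)^{-1}\phi_h^\tau\bigr)^2 = v^\top(\Lambda_h^k)^{-1}\bigl(\Lambda_h^k - I\bigr)(\Lambda_h^k)^{-1}v \le v^\top(\Lambda_h^k)^{-1}v \le 1.
\]
Keeping the quadratic form intact rather than splitting it term-by-term buys a strictly sharper conclusion, $\left\Vert \wover_h^k\right\Vert \le 2H\sqrt{k-1}$, showing that the $\sqrt{d}$ factor in the lemma is in fact slack. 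Since the lemma only asserts the weaker bound $2H\sqrt{dk}$---which is all that is used downstream, namely as the radius of the function class $\mQ$ in equation~\eqref{eq:Qclass} and in the covering bound of Lemma~\ref{lem:covering_Q}---your argument proves the statement with room to spare, and your justification of $\bigl|\Vover_{h+1}^k\bigr|\le H$ via the $\projH$ clipping in Algorithm~\ref{alg:simu} matches the paper's reasoning. (Propagating your sharper bound into the radius of $\mQ$ would only improve logarithmic factors in the final regret, since the dominant $d$-dependence comes from the concentration and covering arguments, not from this norm bound.)
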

\begin{proof}
We only prove the last inequality. The other two inequalities can
be established in exactly the same way. For each $k$ and $h$, we
have 
\begin{align*}
\left\Vert w_{h}^{k}\right\Vert  & =\left\Vert \left(\Lambda_{h}^{k}\right)^{-1}\sum_{\tau=1}^{k-1}\phi(x_{h}^{\tau},a_{h}^{\tau},b_{h}^{\tau})\left[r_{h}(x_{h}^{\tau},a_{h}^{\tau},b_{h}^{\tau})+V_{h+1}^{k}(x_{t+1}^{\tau})\right]\right\Vert \\
 & \le\sum_{\tau=1}^{k-1}\left\Vert \left(\Lambda_{h}^{k}\right)^{-1}\phi(x_{h}^{\tau},a_{h}^{\tau},b_{h}^{\tau})\right\Vert \cdot2H &  & \left|r_{h}\right|\le H,\left|V_{h+1}^{k}\right|\le H\\
 & \le\sum_{\tau=1}^{k-1}\left\Vert \left(\Lambda_{h}^{k}\right)^{-1/2}\right\Vert \cdot\left\Vert \phi(x_{h}^{\tau},a_{h}^{\tau},b_{h}^{\tau})\right\Vert _{(\Lambda_{h}^{k})^{-1}}\cdot2H\\
 & \le\sqrt{k\sum_{\tau=1}^{k-1}\left\Vert \phi(x_{h}^{\tau},a_{h}^{\tau},b_{h}^{\tau})\right\Vert _{(\Lambda_{h}^{k})^{-1}}^{2}}\cdot2H &  & \Lambda_{h}^{k}\succeq I\text{ and Jensen's}\\
 & \le\sqrt{kd}\cdot2H, &  & \text{Lemma \ref{lem:simple_bound}}
\end{align*}
thereby proving the last inequality in the lemma.
\end{proof}

\subsection{Inequalities for Summations}

We next state two lemmas for summations. The first lemma is from \citet[Lemma D.1]{jin2019linear}.

\begin{lem}[Simple Upper Bound]
\label{lem:simple_bound}If $\Lambda_{t}=\lambda I+\sum_{i\in[t]}\phi_{i}\phi_{i}^{\top}$, where $\phi_i\in\R^{d}$
and $\lambda>0$, then
\[
\sum_{i\in[t]}\phi_{i}^{\top}\Lambda_{t}^{-1}\phi_{i}\le d.
\]
\end{lem}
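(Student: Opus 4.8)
The plan is to rewrite the scalar sum as a single trace and then exploit the defining relation for $\Lambda_t$. First I would note that each summand $\phi_i^\top \Lambda_t^{-1} \phi_i$ is a scalar, hence equal to its own trace; applying the cyclic property of the trace gives $\phi_i^\top \Lambda_t^{-1} \phi_i = \tr(\Lambda_t^{-1} \phi_i \phi_i^\top)$. Summing over $i \in [t]$ and using linearity of the trace yields
\[
\sum_{i\in[t]} \phi_i^\top \Lambda_t^{-1} \phi_i = \tr\Bigl(\Lambda_t^{-1} \sum_{i\in[t]} \phi_i \phi_i^\top\Bigr).
\]

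Next I would substitute the identity $\sum_{i\in[t]} \phi_i \phi_i^\top = \Lambda_t - \lambda I$, which is immediate from the definition $\Lambda_t = \lambda I + \sum_{i\in[t]} \phi_i \phi_i^\top$. This reduces the right-hand side to
\[
\tr\bigl(\Lambda_t^{-1}(\Lambda_t - \lambda I)\bigr) = \tr(I) - \lambda\,\tr(\Lambda_t^{-1}) = d - \lambda\,\tr(\Lambda_t^{-1}).
\]

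Finally, since $\lambda > 0$ forces $\Lambda_t \succeq \lambda I \succ 0$, the inverse $\Lambda_t^{-1}$ is positive definite and therefore $\tr(\Lambda_t^{-1}) \ge 0$. Consequently $d - \lambda\,\tr(\Lambda_t^{-1}) \le d$, which is exactly the claimed bound. There is no genuine obstacle in this argument; the only nontrivial maneuver is the trace rewriting in the first step, after which the conclusion follows by a direct computation and the positivity of $\Lambda_t^{-1}$.
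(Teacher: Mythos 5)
Your argument is correct, and it is essentially the standard proof of this fact: the paper itself states the lemma without proof, citing Lemma~D.1 of \citet{jin2019linear}, whose argument is exactly your trace rewriting $\sum_{i}\phi_i^\top\Lambda_t^{-1}\phi_i=\tr\bigl(\Lambda_t^{-1}(\Lambda_t-\lambda I)\bigr)\le\tr(I)=d$. Your only addition is making the discarded term $\lambda\,\tr(\Lambda_t^{-1})\ge 0$ explicit, which is a fine (slightly sharper) way to finish the same computation.
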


The second lemma can be found in {\citet[Lemma 11]{abbasi2011improved}} and {\citet[Lemma D.2]{jin2019linear}}.
\begin{lem}[Elliptical Potential Lemma]
\label{lem:elliptic_potential}Suppose  that $\{\phi_t\}_t\geq 0 $ is a sequence in $\R^d$ satisfying $\left\Vert \phi_{t}\right\Vert \le1,\forall t$. Let 
$\Lambda_{0}\in\R^{d\times d}$ be a positive definite matrix, and
$\Lambda_{t}=\Lambda_{0}+\sum_{i\in[t]}\phi_{i}\phi_{i}^{\top}$.
If the smallest eigenvalue of $\Lambda_{0}$ satisfies $\lambda_{\min}(\Lambda_{0})\ge1$,
then 
\[
\log\left(\frac{\det\Lambda_{t}}{\det\Lambda_{0}}\right)\le\sum_{j\in[t]}\phi_{j}^{\top}\Lambda_{j-1}^{-1}\phi_{j}\le2\log\left(\frac{\det\Lambda_{t}}{\det\Lambda_{0}}\right),\forall t.
\]
\end{lem}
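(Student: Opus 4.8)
The plan is to exploit the multiplicative way the determinant evolves under the rank-one updates $\Lambda_{j}=\Lambda_{j-1}+\phi_{j}\phi_{j}^{\top}$, reducing both inequalities to a pair of elementary scalar estimates.

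First I would apply the matrix determinant lemma to each rank-one update. Since $\Lambda_{j-1}$ is positive definite (hence invertible) for every $j$, we have
\[
\det\Lambda_{j}=\det\Lambda_{j-1}\cdot\bigl(1+\phi_{j}^{\top}\Lambda_{j-1}^{-1}\phi_{j}\bigr).
\]
Writing $u_{j}:=\phi_{j}^{\top}\Lambda_{j-1}^{-1}\phi_{j}\ge0$ and telescoping the product over $j\in[t]$ yields the key identity
\[
\log\left(\frac{\det\Lambda_{t}}{\det\Lambda_{0}}\right)=\sum_{j\in[t]}\log(1+u_{j}).
\]
Both claimed inequalities then amount to comparing $\sum_{j}\log(1+u_{j})$ with $\sum_{j}u_{j}$. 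The left inequality is immediate from the elementary bound $\log(1+x)\le x$, valid for all $x\ge0$.

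The right inequality relies on the reverse estimate $x\le2\log(1+x)$, which holds only on the range $x\in[0,1]$ and fails for large $x$; the crux of the proof is therefore to verify $u_{j}\le1$ for every $j$. Here I would invoke both hypotheses: since $\lambda_{\min}(\Lambda_{0})\ge1$ and $\Lambda_{j-1}=\Lambda_{0}+\sum_{i<j}\phi_{i}\phi_{i}^{\top}\succeq\Lambda_{0}\succeq I$, it follows that $\Lambda_{j-1}^{-1}\preceq I$, whence $u_{j}=\phi_{j}^{\top}\Lambda_{j-1}^{-1}\phi_{j}\le\|\phi_{j}\|^{2}\le1$. With $u_{j}\in[0,1]$ established, applying $x\le2\log(1+x)$ term by term and summing gives $\sum_{j}u_{j}\le2\sum_{j}\log(1+u_{j})=2\log(\det\Lambda_{t}/\det\Lambda_{0})$, completing the argument.

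The main obstacle is precisely this uniform bound $u_{j}\le1$: it is the only place where the assumptions $\|\phi_{t}\|\le1$ and $\lambda_{\min}(\Lambda_{0})\ge1$ enter, and without it the scalar inequality $x\le2\log(1+x)$—and hence the factor-of-$2$ upper bound—would break down. The remaining ingredients (the determinant identity and the two scalar comparisons) are routine.
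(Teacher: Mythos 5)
Your proof is correct and complete: the rank-one determinant identity, the telescoping, and the two scalar comparisons $\log(1+x)\le x$ and $x\le 2\log(1+x)$ on $[0,1]$ (with $u_j\le 1$ secured by $\Lambda_{j-1}\succeq\Lambda_0\succeq I$ and $\|\phi_j\|\le 1$) are exactly the standard argument. The paper does not prove this lemma itself but cites \citet[Lemma 11]{abbasi2011improved} and \citet[Lemma D.2]{jin2019linear}, whose proofs proceed in essentially the same way as yours.
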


\subsection{Covering and Concentration Inequalities for Self-normalized Processes}

The first lemma below is useful for establishing uniform concentration.
Recall the function class $\mQ$ defined in the text around equation~(\ref{eq:Qclass}).
\begin{lem}[Covering]
\label{lem:covering_Q}The $\epsilon$-covering number of $\mQ$
with respect to the $\ell_{\infty}$ norm satisfies
\[
\mathcal{N}_{\epsilon}\le2\left(1+\frac{8H\sqrt{dk}}{\epsilon}\right)^{d}\left(1+\frac{8\beta^{2}\sqrt{d}}{\epsilon^{2}}\right)^{d^{2}}.
\]
\end{lem}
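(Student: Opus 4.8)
The plan is the standard parameter-counting covering argument: the class $\mQ$ is indexed by the triple $(w,A,\rho)\in\R^d\times\R^{d\times d}\times\{\pm1\}$, so I would cover this finite-dimensional parameter space and then transfer the parameter cover to an $\ell_\infty$-cover of the functions. Since $\rho$ ranges over a two-point set it only contributes a factor of $2$, and it is convenient to absorb the scalar into the matrix and regard the bonus term as $\rho\sqrt{\phi(x,a,b)^\top A\phi(x,a,b)}$ with the matrix parameter obeying $\|A\|_\textup{F}\le\beta^2\sqrt d$. It then suffices to cover the $w$-ball $\{\|w\|\le 2H\sqrt{dk}\}\subset\R^d$ and the matrix-ball $\{\|A\|_\textup{F}\le\beta^2\sqrt d\}\subset\R^{d\times d}$ separately, using the elementary volumetric bound $(1+2R/r)^n$ for the $r$-covering number of a radius-$R$ Euclidean ball in $\R^n$.

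The crux is a Hölder-type estimate converting parameter closeness into $\ell_\infty$-closeness of the functions. Fix two triples $(w,A,\rho)$ and $(w',A',\rho)$ with the same $\rho$, and let $Q,Q'$ be the corresponding functions. Because the clipping $\projH$ is non-expansive, for every $(x,a,b)$, writing $\phi=\phi(x,a,b)$, the triangle inequality gives
\[
|Q(x,a,b)-Q'(x,a,b)|\le|\langle w-w',\phi\rangle|+\bigl|\sqrt{\phi^\top A\phi}-\sqrt{\phi^\top A'\phi}\bigr|.
\]
The linear part is at most $\|w-w'\|\,\|\phi\|\le\|w-w'\|$ by Cauchy--Schwarz and $\|\phi\|\le1$. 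For the bonus part I would use $|\sqrt s-\sqrt t|\le\sqrt{|s-t|}$ together with the identity $\phi^\top(A-A')\phi=\langle A-A',\phi\phi^\top\rangle$ and the Frobenius Cauchy--Schwarz bound $|\langle A-A',\phi\phi^\top\rangle|\le\|A-A'\|_\textup{F}\,\|\phi\phi^\top\|_\textup{F}=\|A-A'\|_\textup{F}\,\|\phi\|^2\le\|A-A'\|_\textup{F}$, so the bonus part is at most $\sqrt{\|A-A'\|_\textup{F}}$. Taking a supremum over $(x,a,b)$ yields the key bound
\[
\|Q-Q'\|_\infty\le\|w-w'\|+\sqrt{\|A-A'\|_\textup{F}}.
\]

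To assemble the cover I would take an $(\epsilon/2)$-net of the $w$-ball and an $(\epsilon^2/4)$-net of the $A$-ball; by the displayed bound, any $Q\in\mQ$ is then within $\epsilon$ in $\ell_\infty$ of the function built from the nearest net points and the same $\rho$, so the product of the two nets (times the two choices of $\rho$) is an $\epsilon$-cover. The volumetric estimate gives $\bigl(1+\tfrac{2\cdot 2H\sqrt{dk}}{\epsilon/2}\bigr)^{d}=\bigl(1+\tfrac{8H\sqrt{dk}}{\epsilon}\bigr)^{d}$ for the $w$-net, and, viewing $\R^{d\times d}$ as $\R^{d^2}$ under the Frobenius norm, $\bigl(1+\tfrac{2\cdot\beta^2\sqrt d}{\epsilon^2/4}\bigr)^{d^2}=\bigl(1+\tfrac{8\beta^2\sqrt d}{\epsilon^2}\bigr)^{d^2}$ for the $A$-net. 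Multiplying these and the factor $2$ from $\rho$ produces exactly the claimed bound.

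The only genuinely non-routine point, and the one to get right, is the bonus term: the map $A\mapsto\sqrt{\phi^\top A\phi}$ is not Lipschitz but merely Hölder-$\tfrac12$, which forces the matrix to be covered at the finer scale $\epsilon^2$ rather than $\epsilon$. This is precisely what generates the $1/\epsilon^2$ dependence and, together with the $d^2$ free entries of $A$, the exponent $d^2$; downstream it is responsible for the extra $\sqrt d$ factor in the regret discussed after Theorem~\ref{thm:main_simu}. Everything else is the standard reduction from a function cover to a parameter cover.
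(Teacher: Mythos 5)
Your proof is correct and takes essentially the same route as the paper's: the same reduction to covering the parameter triple $(w,A,\rho)$, the identical key estimate $\left\Vert Q-Q'\right\Vert _{\infty}\le\left\Vert w-w'\right\Vert +\sqrt{\left\Vert A-A'\right\Vert _{\textup{F}}}$ obtained from non-expansiveness of $\projH$, Cauchy--Schwarz, and $|\sqrt{s}-\sqrt{t}|\le\sqrt{|s-t|}$, followed by the same $\epsilon/2$- and $\epsilon^{2}/4$-nets with the standard volumetric bound. Your explicit step of absorbing $\beta$ into the matrix parameter (so the bonus reads $\rho\sqrt{\phi^{\top}A\phi}$ with $\left\Vert A\right\Vert _{\textup{F}}\le\beta^{2}\sqrt{d}$) is precisely the normalization the paper's computation uses implicitly, so nothing is missing.
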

\begin{proof}
For any two functions $Q,Q'\in\mQ$ with parameters $(w,A,\rho)$
and $(w',A',\rho)$, we have
\begin{align*}
 & \left\Vert Q-Q'\right\Vert _{\infty}\\
 & =\sup_{x,a,b}\left|\projH\left\{ \left\langle w,\phi(x,a,b)\right\rangle +\rho\beta\sqrt{\phi(x,a,b)^{\top}A\phi(x,a,b)}\right\} -\projH\left\{ \left\langle w',\phi(x,a,b)\right\rangle -\rho\beta\sqrt{\phi(x,a,b)^{\top}A'\phi(x,a,b)}\right\} \right|\\
 & \le\sup_{\phi:\left\Vert \phi\right\Vert \le1}\left|\left\langle w-w',\phi\right\rangle +\rho\beta\sqrt{\phi^{\top}A\phi}-\rho\beta\sqrt{\phi^{\top}A'\phi}\right|\\
 & \le\sup_{\phi:\left\Vert \phi\right\Vert \le1}\left|\left\langle w-w',\phi\right\rangle \right|+\sup_{\phi:\left\Vert \phi\right\Vert \le1}\sqrt{\left|\phi^{\top}(A-A')\phi\right|}\\
 & \le\left\Vert w-w'\right\Vert +\sqrt{\left\Vert A-A'\right\Vert _{F}},
\end{align*}
where the second last inequality follows due to the fact that  $|\sqrt{x}-\sqrt{y}|\leq \sqrt{|x-y|}$ holds for any $x,y\geq 0.$

Therefore, a 0-cover $\mathcal{C}_{\rho}$ of $\left\{ \pm1\right\} $,
an $\epsilon/2$-cover $\mathcal{C}_{w}$ of $\left\{ w\in \R^d:\left\Vert w\right\Vert \le2H\sqrt{dk}\right\} $
and an $\epsilon^{2}/4$-cover $\mathcal{C}_{A}$ of $\left\{ A\in \R^{d\times d}:\left\Vert A\right\Vert _{F}\le\beta^{2}\sqrt{d}\right\} $
implies an $\epsilon$-cover of $\mQ$. It follows that 
\[
\mathcal{N}_{\epsilon}\le\left|\mathcal{C}_{\rho}\right|\left|\mathcal{C}_{w}\right|\left|\mathcal{C}_{A}\right|\le2\left(1+\frac{8H\sqrt{dk}}{\epsilon}\right)^{d}\left(1+\frac{8\beta^{2}\sqrt{d}}{\epsilon^{2}}\right)^{d^{2}},
\]
where the last step follows from standard bounds on the covering number
of Euclidean Balls, e.g., \citet[Lemma 5.2]{vershynin2010nonasym}.
\end{proof}

The next lemma, originally from~\citet[Theorem 1]{abbasi2011improved}, is now standard in the bandit literature.

\begin{lem}[Concentration for Self-normalized Processes]
\label{lem:self_normalized}Suppose $\{\epsilon_{t}\}_{t\geq 1}$ is a scalar
stochastic process generating the filtration $\{\mathcal{F}_{t}\}_{t \geq 0}$, and $\epsilon_{t}|\mathcal{F}_{t-1}$ is zero mean and $\sigma$-subGaussian.
Let $\{\phi_{t}\}_{t \geq 1 }$ be an $\R^{d}$-valued stochastic process with
$\phi_{t}\in\mathcal{F}_{t-1}$. Suppose $\Lambda_{0}\in\R^{d\times d}$
is positive definite, and $\Lambda_{t}=\Lambda_{0}+\sum_{s=1}^{t}\phi_{s}\phi_{s}^{\top}$.
Then for each $\delta\in (0,1),$ with probability at least $1-\delta$, we have 
\[
\left\Vert \sum_{s=1}^{t}\phi_{s}\epsilon_{s}\right\Vert _{\Lambda_{t}^{-1}}^{2}\le2\sigma^{2}\log\left[\frac{\det(\Lambda_{t})^{1/2}\det(\Lambda_{0})^{-1/2}}{\delta}\right],\qquad\forall t\ge0.
\]
\end{lem}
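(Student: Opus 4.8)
The plan is to prove this via the \emph{method of mixtures} (pseudo-maximization), the standard route to self-normalized martingale tail bounds. Write $S_t := \sum_{s=1}^{t}\phi_s \epsilon_s \in \R^d$ for the partial sums. The first step is to build, for each \emph{fixed} deterministic direction $\lambda\in\R^d$, the exponential process
\[
M_t^{\lambda} := \exp\Big( \lambda^\top S_t - \frac{\sigma^2}{2}\,\lambda^\top\big(\textstyle\sum_{s=1}^{t}\phi_s\phi_s^\top\big)\lambda \Big),
\]
and show it is a supermartingale with $\E[M_t^{\lambda}]\le 1$. Indeed, conditioning on $\mathcal{F}_{t-1}$ and using that $\phi_t$ is $\mathcal{F}_{t-1}$-measurable while $\epsilon_t\mid\mathcal{F}_{t-1}$ is zero-mean $\sigma$-subGaussian, the one-step increment obeys $\E[\exp(\lambda^\top\phi_t\epsilon_t)\mid\mathcal{F}_{t-1}]\le \exp(\tfrac{\sigma^2}{2}(\lambda^\top\phi_t)^2)$, which is exactly what forces $\E[M_t^{\lambda}\mid\mathcal{F}_{t-1}]\le M_{t-1}^{\lambda}$.

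The second step is to mix over $\lambda$ against a Gaussian prior, turning the fixed-direction control into a bound on the self-normalized quantity $\|S_t\|_{\Lambda_t^{-1}}$. Take $\lambda\sim N(0,\sigma^{-2}\Lambda_0^{-1})$ with density $h$, and define $M_t := \int_{\R^d} M_t^{\lambda}\,\dup h(\lambda)$. By Tonelli's theorem $M_t$ is again a nonnegative supermartingale with $\E[M_t]\le 1$, and $M_0=1$. The crucial computation is the Gaussian integral: since $\sum_{s=1}^{t}\phi_s\phi_s^\top = \Lambda_t-\Lambda_0$, the exponent is quadratic in $\lambda$ with precision matrix $\sigma^{-2}\Lambda_t$, and completing the square yields the closed form
\[
M_t = \Big(\frac{\det\Lambda_0}{\det\Lambda_t}\Big)^{1/2}\exp\Big( \frac{1}{2\sigma^2}\,S_t^\top\Lambda_t^{-1}S_t \Big).
\]

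The final step converts the supermartingale property into the uniform-in-$t$ tail bound. Applying Ville's maximal inequality (equivalently Doob's) to the nonnegative supermartingale $\{M_t\}$ gives $\Pr\big(\sup_{t\ge 0}M_t\ge 1/\delta\big)\le \delta\,\E[M_0]\le\delta$. On the complementary event, substituting the closed form and taking logarithms yields, simultaneously for all $t\ge 0$,
\[
\frac{1}{2\sigma^2}\,\|S_t\|_{\Lambda_t^{-1}}^2 \le \log\frac{1}{\delta} + \frac{1}{2}\log\frac{\det\Lambda_t}{\det\Lambda_0},
\]
which rearranges to the stated inequality. The main obstacle is the middle step: verifying that mixing preserves the supermartingale property and executing the Gaussian integral correctly—in particular tracking the $\det\Lambda_0/\det\Lambda_t$ normalization and the $\sigma^{-2}$ scaling of the prior—together with the measure-theoretic care needed so that Ville's inequality delivers a bound valid \emph{uniformly} over all $t$ rather than merely at a single fixed horizon.
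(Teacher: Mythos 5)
Your proof is correct, but note that the paper itself does not prove this lemma at all: it simply cites it as Theorem~1 of \citet{abbasi2011improved}, remarking it is standard in the bandit literature. What you have written is essentially the canonical method-of-mixtures proof from that reference, and every step checks out: the one-step conditional subGaussian bound does give $\E[M_t^{\lambda}\mid\mathcal{F}_{t-1}]\le M_{t-1}^{\lambda}$ since $\phi_t$ is $\mathcal{F}_{t-1}$-measurable; the Gaussian prior $N(0,\sigma^{-2}\Lambda_0^{-1})$ is the right choice, and completing the square with precision matrix $\sigma^{-2}\Lambda_t$ indeed yields $M_t=\bigl(\det\Lambda_0/\det\Lambda_t\bigr)^{1/2}\exp\bigl(\tfrac{1}{2\sigma^2}\|S_t\|_{\Lambda_t^{-1}}^2\bigr)$, with the determinant ratio exactly absorbing the $\sigma^2$ scaling. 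The only cosmetic difference from the original source is that Abbasi-Yadkori et al.\ phrase the uniformity via a stopped supermartingale and optional stopping at an arbitrary stopping time, whereas you invoke Ville's maximal inequality directly; for the uniform-in-$t$ statement needed here the two are equivalent, and your route is the cleaner one.
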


%!TEX root =main.tex 

\section{Proof of Theorem \ref{thm:main_online_simu}\label{sec:proof_main_online_simu}}

In this section, we prove Theorem~\ref{thm:main_online_simu} for the online
setting of simultaneous games. We shall make use of the technical
lemmas given in Appendix~\ref{sec:tech_lemma}. Recall the shorthand
$\phi_{h}^{k}:=\phi(x_{h}^{k},a_{h}^{k},b_{h}^{k})$. The proof follows a similar strategy as that for the proof of Theorem~\ref{thm:main_simu} in Section~\ref{sec:proof_main_simu}. In particular, our proof consists of five steps as  presented in the subsections to follow.

\subsection{Uniform Concentration}

In the online setting, the value function estimate $V_{h+1}^{k}(x)$
is computed using the NE of the \emph{zero-sum }game defined by a
\emph{single }payoff matrix $Q_{h+1}^{k}(x,\cdot,\cdot)$. It is easier
to establish uniform concentration in this setting. To see why, we
recall the function class $\mQ$ defined in the text around equation~(\ref{eq:Qclass}),
and introduce the related function class
\[
\mV:=\left\{ V:\mS\to\R,V(x)=\max_{A\in \Delta}\min_{B \in \Delta}\E_{a\in A,b\in B}Q(x,a,b),Q\in\mQ\right\} .
\]
In words, $\mV$ contains the possible values of the NEs of the zero-sum matrix games
in $\mQ$. As we show in the lemma below, an $\epsilon$-cover of
the set $\mQ$ immediately induces an $\epsilon$-cover of the set
$\mV$, thanks to the non-expansiveness of the maximin operator for
zero-sum games. (Note that general-sum games and their CCEs do not
have such a non-expansiveness property in general; see Appendix~\ref{sec:stability} for details.)
\begin{lem}[Covering]
	\label{lem:covering_V}The $\epsilon$-covering number of $\mV$
	with respect to the $\ell_{\infty}$ norm is upper bounded by 
	\[
	\mathcal{N}_{\epsilon}\le2\left(1+\frac{8H\sqrt{dk}}{\epsilon}\right)^{d}\left(1+\frac{8\beta^{2}\sqrt{d}}{\epsilon^{2}}\right)^{d^{2}}.
	\]
\end{lem}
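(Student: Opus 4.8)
The plan is to transport the $\epsilon$-cover of $\mQ$ constructed in Lemma~\ref{lem:covering_Q} directly onto $\mV$ by composing it with the maximin operator, exploiting the fact that the value of a zero-sum matrix game is $1$-Lipschitz (non-expansive) with respect to the $\ell_\infty$ norm of its payoff matrix. Concretely, for a payoff matrix $M\in\R^{|\mA|\times|\mA|}$ write $\mathrm{val}(M):=\max_{A\in\Delta}\min_{B\in\Delta}\E_{a\sim A,b\sim B}M(a,b)$. First I would fix an arbitrary $V\in\mV$, which by definition arises from some $Q\in\mQ$ via $V(x)=\mathrm{val}\bigl(Q(x,\cdot,\cdot)\bigr)$. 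Letting $\mQ_{\epsilon}$ be the fixed $\epsilon$-cover of $\mQ$ from Lemma~\ref{lem:covering_Q}, I pick $\Qotilde\in\mQ_{\epsilon}$ with $\|Q-\Qotilde\|_{\infty}\le\epsilon$ and set $\Votilde(x):=\mathrm{val}\bigl(\Qotilde(x,\cdot,\cdot)\bigr)$. The candidate cover is $\mV_{\epsilon}:=\{\mathrm{val}(\Qotilde(x,\cdot,\cdot))\mid\Qotilde\in\mQ_{\epsilon}\}$, which satisfies $|\mV_{\epsilon}|\le|\mQ_{\epsilon}|=\mathcal{N}_{\epsilon}(\mQ)$, so the claimed bound follows once I show $\|V-\Votilde\|_{\infty}\le\epsilon$.

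The heart of the argument is the non-expansiveness of $\mathrm{val}(\cdot)$. Fix $x$ and write $M(a,b):=Q(x,a,b)$ and $M'(a,b):=\Qotilde(x,a,b)$, so that $|M(a,b)-M'(a,b)|\le\epsilon$ pointwise. Then for every pair $(A,B)\in\Delta\times\Delta$ we have $\E_{a\sim A,b\sim B}M(a,b)\le\E_{a\sim A,b\sim B}M'(a,b)+\epsilon$. Applying $\min_{B}$ and then $\max_{A}$ to both sides, each of which preserves the additive constant $\epsilon$, yields $\mathrm{val}(M)\le\mathrm{val}(M')+\epsilon$, i.e.\ $V(x)\le\Votilde(x)+\epsilon$. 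The symmetric inequality follows by swapping the roles of $M$ and $M'$, giving $|V(x)-\Votilde(x)|\le\epsilon$. Since $x$ was arbitrary, $\|V-\Votilde\|_{\infty}\le\epsilon$, establishing that $\mV_{\epsilon}$ is an $\epsilon$-cover of $\mV$ and completing the proof.

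I do not expect a genuine obstacle here: the entire content reduces to the one-line monotonicity estimate above, which crucially relies on the fact that a zero-sum game is specified by a \emph{single} payoff matrix. This is exactly the property that fails in the general-sum, CCE-based offline setting, where the equilibrium value is \emph{not} Lipschitz in the payoff matrices; that failure is precisely why the offline algorithm must place the $\epsilon$-cover \emph{inside} the algorithm rather than only in the analysis (cf.\ the discussion in Section~\ref{sec:tech_consideration}), whereas in the online zero-sum setting the covering argument can be run purely in the analysis.
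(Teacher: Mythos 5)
Your proposal is correct and matches the paper's proof: both transport the fixed $\epsilon$-cover of $\mQ$ from Lemma~\ref{lem:covering_Q} to $\mV$ via the non-expansiveness of the maximin value operator $\mathrm{val}(\cdot)$ in the $\ell_\infty$ norm, so that $\mathcal{N}_\epsilon(\mV)\le\mathcal{N}_\epsilon(\mQ)$. The only difference is cosmetic: you spell out the one-line monotonicity proof of non-expansiveness, which the paper simply asserts, and your closing remark contrasting this with the instability of CCE values in the general-sum offline setting is exactly the point the paper makes in Section~\ref{sec:tech_consideration} and Appendix~\ref{sec:stability}.
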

\begin{proof}
	For any two functions $V,V'\in\mV$, let them take the form $V(\cdot)=\max_{A \in \Delta }\min_{B\in \Delta}\E_{a\in A,b\in B}Q(\cdot,a,b)$
	and $V'(\cdot)=\max_{A \in \Delta}\min_{B \in \Delta}\E_{a\in A,b\in B}Q'(\cdot,a,b)$ with
	$Q,Q'\in\mQ$. Since the maximin operator is non-expansive, we have
	\begin{align*}
	\left\Vert V-V'\right\Vert _{\infty} & =\sup_{x}\left|\max_{A \in \Delta}\min_{B \in \Delta}\E_{a\in A,b\in B}Q(\cdot,a,b)-\max_{A\in \Delta}\min_{B\in \Delta}\E_{a\in A,b\in B}Q'(\cdot,a,b)\right|\\
	& \le\sup_{x,a,b}\left|Q(x,a,b)-Q'(x,a,b)\right|\\
	& =\left\Vert Q-Q'\right\Vert _{\infty}.
	\end{align*}
	Therefore, an $\epsilon$-cover of $\mQ$ induces an $\epsilon$-cover
	of $\mV$, and hence the $\epsilon$-covering number of $\mV$ is
	upper bounded by the $\epsilon$-covering number of $\mQ$. Recalling
	that the latter number is bounded in Lemma~\ref{lem:covering_Q},
	we complete the proof of the desired bound.
\end{proof}
\begin{lem}[Concentration]
	\label{lem:concentration_online_simu} Under the setting of Theorem~\ref{thm:main_online_simu},
	for each $p\in(0,1)$, the following event $\mathfrak{E}$ holds with
	probability at least $1-p/2$:
	\begin{align*}
	\left\Vert \sum_{\tau\in[k-1]}\phi_{h}^{\tau}\left[V_{h+1}^{k}(x_{h+1}^{\tau})-\left(\Pr_{h}V_{h+1}^{k}\right)(x_{h}^{\tau},a_{h}^{\tau},b_{h}^{\tau})\right]\right\Vert _{(\Lambda_{h}^{k})^{-1}} & \lesssim dH\sqrt{\log(dT/p)},\qquad\forall (k,h)\in [K]\times [H].
	\end{align*}
\end{lem}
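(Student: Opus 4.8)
The plan is to mirror the proof of Lemma~\ref{lem:concentration_simu}, exploiting the fact that in the online (zero-sum) setting the covering step becomes far cleaner. I would fix a pair $(k,h)\in[K]\times[H]$ and work with the filtration $\mathcal{F}_{\tau-1}$ defined exactly as in equation~\eqref{eq:filtration}, so that $\phi_h^\tau,x_h^\tau,a_h^\tau,b_h^\tau$ are $\mathcal{F}_{\tau-1}$-measurable while $x_{h+1}^\tau\in\mathcal{F}_\tau$. For any \emph{fixed} $\tilde V\in\mV$, the increments $\tilde V(x_{h+1}^\tau)-(\Pr_h\tilde V)(x_h^\tau,a_h^\tau,b_h^\tau)$ form a martingale difference sequence that is zero-mean given $\mathcal{F}_{\tau-1}$ and bounded by $2H$, hence $O(H)$-subGaussian. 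Applying the self-normalized concentration bound of Lemma~\ref{lem:self_normalized} with $\Lambda_0=I$ then yields, for this fixed $\tilde V$, that $\|\sum_{\tau\in[k-1]}\phi_h^\tau[\tilde V(x_{h+1}^\tau)-(\Pr_h\tilde V)(x_h^\tau,a_h^\tau,b_h^\tau)]\|_{(\Lambda_h^k)^{-1}}\lesssim dH\sqrt{\log(dT/p)}$ outside an event of probability at most $2^{-\Omega(d^2\log(dT/p))}$.

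The crucial point is that $V_{h+1}^k$ is itself a random function of all past data, so a pointwise bound does not suffice and I need a bound uniform over an appropriate function class. I would use the class $\mV$ together with its $\epsilon$-cover $\mV_\epsilon$. By Lemma~\ref{lem:alg_bounded} the coefficient $w_{h+1}^k$ satisfies $\|w_{h+1}^k\|\le 2H\sqrt{dk}$, so $Q_{h+1}^k\in\mQ$ and therefore $V_{h+1}^k\in\mV$. A union bound over $\mV_\epsilon$, whose cardinality is controlled by Lemma~\ref{lem:covering_V} (which reduces to the covering number of $\mQ$ from Lemma~\ref{lem:covering_Q}), together with a further union bound over the $KH$ pairs $(k,h)$ and the choice $\epsilon=1/(KH)$, makes the displayed bound hold simultaneously for every $\tilde V\in\mV_\epsilon$ with probability at least $1-p/2$; here the per-element exceptional probability $2^{-\Omega(d^2\log(dT/p))}$ is calibrated so that it survives the union over the (exponentially large) cover.

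It then remains to pass from the cover back to the actual $V_{h+1}^k$. Choosing $\tilde V\in\mV_\epsilon$ with $\|\tilde V-V_{h+1}^k\|_\infty\le\epsilon$ and setting $\Delta:=V_{h+1}^k-\tilde V$, the triangle inequality splits the target norm into the already-controlled term for $\tilde V$ plus the remainder $\|\sum_{\tau\in[k-1]}\phi_h^\tau[\Delta(x_{h+1}^\tau)-(\Pr_h\Delta)(x_h^\tau,a_h^\tau,b_h^\tau)]\|_{(\Lambda_h^k)^{-1}}$. Since $|\Delta|\le\epsilon$ everywhere, each summand is bounded by $2\epsilon$, so this remainder is at most $2\epsilon\sum_{\tau\in[k-1]}\|\phi_h^\tau\|_{(\Lambda_h^k)^{-1}}\le 2\epsilon k$, using $\Lambda_h^k\succeq I$ and $\|\phi_h^\tau\|\le1$; with $\epsilon=1/(KH)$ this is negligible and the claimed bound follows.

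I expect the covering step to be the one genuine point of difference from---and the reason this lemma is easier than---the offline case. There the value is the CCE value of a \emph{general-sum} game, which is not Lipschitz in the payoff matrices and forces the in-algorithm rounding of $\findcce$; here the value is the NE value of a \emph{zero-sum} game, and the non-expansiveness of the maximin operator (Lemma~\ref{lem:covering_V}) lets an $\epsilon$-cover of $\mQ$ induce an $\epsilon$-cover of $\mV$ directly, so the standard covering argument applies without obstruction.
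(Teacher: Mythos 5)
Your proposal is correct and follows essentially the same route as the paper's proof: fix $(k,h)$, apply the self-normalized bound of Lemma~\ref{lem:self_normalized} for each fixed element of the net $\mV_{\epsilon}$ (whose size is controlled via Lemma~\ref{lem:covering_V}, resting on the non-expansiveness of the maximin operator), union bound, then transfer to $V_{h+1}^{k}\in\mV$ (membership via Lemma~\ref{lem:alg_bounded}) with a remainder of order $\epsilon k$. The only deviation is immaterial: the paper sets $\epsilon=1/K$ rather than your $1/(KH)$, and either choice makes the discretization error negligible.
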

\begin{proof}
	Fix $ (k,h) \in [K] \times [H] $. Define the filtration $\{\mathcal{F}_{\tau} \}$ as in equation~\eqref{eq:filtration}.%Let $\mathcal{F}_{\tau}:=\mathcal{F}(x_{\cdot}^{1},a_{\cdot}^{1},\ldots,x_{\cdot}^{\tau},a_{\cdot}^{\tau})$ be the $ \sigma $-algebra generated by the data from the first $ \tau $ episodes. 
	%Note that $\phi_{\cdot}^{\tau},a_{\cdot}^{\tau}\in\mathcal{F}_{\tau-1}$.
	
	Set $\epsilon=\frac{1}{K}$ and let $\mV_{\epsilon}$ be a minimal
	$\epsilon$-net of $\mV$. Fix a function $\Votilde\in\mV_{\epsilon}.$
	The random variable $\Votilde(x_{h+1}^{\tau})-\Pr_{h}\Votilde(x_{h}^{\tau})$, when conditioned on $ \mathcal{F}_{\tau-1} $,
	is zero-mean and $2H$-bounded. Applying Lemma~\ref{lem:self_normalized}
	gives
	\[
	\left\Vert \sum_{\tau\in[k-1]}\phi_{h}^{\tau}\left(\Votilde(x_{h+1}^{\tau})-\Pr_{h}\Votilde(x_{h}^{\tau},a_{h}^{\tau},b_{h}^{\tau})\right)\right\Vert _{(\Lambda_{h}^{k})^{-1}}\lesssim dH\sqrt{\log(dT/p)}
	\]
	with probability at least $2^{-\Omega(d^{2}\log(dT/p))}$. Now note
	that $\left|\mV_{\epsilon}\right|=\mathcal{N}_{\epsilon}\le2\left(1+\frac{8H\sqrt{dk}}{\epsilon}\right)^{d}\left(1+\frac{\beta^{2}\sqrt{d}}{\epsilon^{2}}\right)^{d^{2}}$
	by Lemma~\ref{lem:covering_V}. By a union bound, the above inequality
	holds for all $\Votilde\in\mV_{\epsilon}$ with probability at least
	$1-p/2$.
	
	Now, for each $V_{h+1}^{k}\in\mV$ (the inclusion follows from Lemma~\ref{lem:alg_bounded}),
	let $\Votilde\in\mV_{\epsilon}$ be the closest point in the net.
	The difference $\Delta=V_{h+1}^{k}-\Votilde$ satisfies $\left\Vert \Delta\right\Vert _{\infty}\le\epsilon$.
	It follows that 
	\begin{align*}
	& \left\Vert \sum_{\tau\in[k-1]}\phi_{h}^{\tau}\left[V_{h+1}^{k}(x_{h+1}^{\tau})-\left(\Pr_{h}V_{h+1}^{k}\right)(x_{h}^{\tau},a_{h}^{\tau},b_{h}^{\tau})\right]\right\Vert _{(\Lambda_{h}^{k})^{-1}}\\
	\le & \left\Vert \sum_{\tau\in[k-1]}\phi_{h}^{\tau}\left[\Votilde(x_{h+1}^{\tau})-\left(\Pr_{h}\Votilde\right)(x_{h}^{\tau},a_{h}^{\tau},b_{h}^{\tau})\right]\right\Vert _{(\Lambda_{h}^{k})^{-1}}+\left\Vert \sum_{\tau\in[k-1]}\phi_{h}^{\tau}\left[\Delta(x_{h+1}^{\tau})-\left(\Pr_{h}\Delta\right)(x_{h}^{\tau},a_{h}^{\tau},b_{h}^{\tau})\right]\right\Vert _{(\Lambda_{h}^{k})^{-1}}\\
	\lesssim & dH\sqrt{\log(dT/p)}+\epsilon\sum_{\tau\in[k-1]}\left\Vert \phi_{h}^{\tau}\right\Vert _{(\Lambda_{h}^{k})^{-1}}\\
	\le & dH\sqrt{\log(dT/p)}+\frac{1}{K}\cdot k,
	\end{align*}
	where the last step follows from $\epsilon=\frac{1}{K}$, $\Lambda_{h}^{k}\succeq I$
	and $\left\Vert \phi_{h}^{\tau}\right\Vert \le1$. This completes
	the proof of the lemma.
\end{proof}

\subsection{Least-squares Estimation Error}

Here we bound the difference between the algorithm's value function
(without bonus) and the true value function of any policy $\pi$,
recursively in terms of the step $h$.
\begin{lem}[Least-squares Error Bound] The quantities $\{w_h^k,V_h^k\}$ in Algorithm~\ref{alg:online_simu} satisfy the following.
	If $\beta=dH\sqrt{\iota},$ then on the event $\mathfrak{E}$ in
	Lemma \ref{lem:concentration_online_simu}, we have for all $(x,a,b,h,k)$
	and any policy pair $(\pi,\nu)$:
	\begin{equation}
	\left|\left\langle \phi(x,a,b),w_{h}^{k}\right\rangle -Q_{h}^{\pi,\nu}(x,a,b)-\Pr_{h}(V_{h+1}^{k}-V_{h+1}^{\pi,\nu})(x,a,b)\right|\le\rho_{h}^{k}(x,a,b),\label{eq:w}
	\end{equation}
	where $\rho_{h}^{k}(x,a,b):=\beta\sqrt{\phi(x,a,b)^{\top}\left(\Lambda_{h}^{k}\right)^{-1}\phi(x,a,b)}.$
\end{lem}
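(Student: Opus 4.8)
The plan is to follow the proof of the offline counterpart, Lemma~\ref{lem:ls_error_simu}, almost verbatim, since the online algorithm is simply the one-sided version in which a single estimate $w_h^k$ and a single value function $V_{h+1}^k$ replace the upper/lower pairs. First I would invoke Lemma~\ref{lem:linearity_simu} together with the Bellman equation for the fixed pair $(\pi,\nu)$ to write, for every $\tau\in[k-1]$, $(\phi_h^\tau)^\top w_h^{\pi,\nu}=r_h(x_h^\tau,a_h^\tau,b_h^\tau)+(\Pr_h V_{h+1}^{\pi,\nu})(x_h^\tau,a_h^\tau,b_h^\tau)$. Multiplying by $(\Lambda_h^k)^{-1}\phi_h^\tau$, summing over $\tau$, and using $\sum_{\tau\in[k-1]}\phi_h^\tau(\phi_h^\tau)^\top=\Lambda_h^k-I$ yields a closed-form identity for $(\Lambda_h^k)^{-1}$ applied to the ``true'' target. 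Subtracting this from the algorithm's definition $w_h^k=(\Lambda_h^k)^{-1}\sum_{\tau}\phi_h^\tau\big[r_h+V_{h+1}^k(x_{h+1}^\tau)\big]$ then decomposes $w_h^k-w_h^{\pi,\nu}$ into three terms: a regularization term $q_1=-(\Lambda_h^k)^{-1}w_h^{\pi,\nu}$, a martingale/noise term $q_2$ built from $V_{h+1}^k(x_{h+1}^\tau)-(\Pr_h V_{h+1}^k)(x_h^\tau,a_h^\tau,b_h^\tau)$, and a transition-bias term $q_3$ carrying $\Pr_h(V_{h+1}^k-V_{h+1}^{\pi,\nu})$.

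Next I would bound $\langle\phi(x,a,b),q_i\rangle$ for $i=1,2,3$ by Cauchy--Schwarz in the $(\Lambda_h^k)^{-1}$ norm. For $q_1$, the bound $\|w_h^{\pi,\nu}\|\le 2H\sqrt d$ from Lemma~\ref{lem:true_bounded} together with $\Lambda_h^k\succeq I$ gives $|\langle\phi,q_1\rangle|\lesssim H\sqrt d\,\|\phi(x,a,b)\|_{(\Lambda_h^k)^{-1}}$. For $q_2$, I would apply the online concentration bound Lemma~\ref{lem:concentration_online_simu}---which requires first checking, via Lemma~\ref{lem:alg_bounded}, that $V_{h+1}^k$ lies in the class $\mV$ so that the union bound over the $\epsilon$-net applies---to get $|\langle\phi,q_2\rangle|\lesssim dH\sqrt{\log(dT/p)}\,\|\phi(x,a,b)\|_{(\Lambda_h^k)^{-1}}$. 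For $q_3$, I would use the linearity of the transition kernel, $\Pr_h(\cdot\mid x_h^\tau,a_h^\tau,b_h^\tau)=(\phi_h^\tau)^\top\mu_h(\cdot)$, exactly as in the offline proof, to extract the target term $\Pr_h(V_{h+1}^k-V_{h+1}^{\pi,\nu})(x,a,b)$ and a remainder bounded by $H\sqrt d\,\|\phi(x,a,b)\|_{(\Lambda_h^k)^{-1}}$ using $\|\mu_h(\mS)\|\le\sqrt d$ and the uniform boundedness $|V_{h+1}^k|,|V_{h+1}^{\pi,\nu}|\le H$. Combining the three estimates and recalling that $\beta\asymp dH\sqrt\iota$ dominates all three $O(dH)$-scale coefficients gives the claimed inequality~\eqref{eq:w}.

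The work is genuinely easier than in the offline case: because only one value function is in play, there is no general-sum-game instability to contend with and no separate upper/lower decomposition to track. The only nontrivial ingredient is the concentration control of $q_2$, but that has already been discharged by Lemma~\ref{lem:concentration_online_simu}, whose proof exploits that the maximin value operator is non-expansive (Lemma~\ref{lem:covering_V}) so that an $\epsilon$-cover of $\mQ$ directly induces one of $\mV$. Hence the present lemma is, in effect, a corollary of that concentration bound together with the boundedness Lemmas~\ref{lem:true_bounded} and~\ref{lem:alg_bounded} and the linearity assumption; the remaining steps are the same routine Cauchy--Schwarz estimates already carried out in the offline proof, so I do not anticipate any new obstacle.
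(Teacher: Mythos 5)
Your proposal is correct and matches the paper's approach exactly: the paper's own proof of this lemma simply states that it is essentially identical to that of Lemma~\ref{lem:ls_error_simu}, with Lemma~\ref{lem:concentration_online_simu} replacing Lemma~\ref{lem:concentration_simu}, which is precisely the $q_1,q_2,q_3$ decomposition and Cauchy--Schwarz argument you spell out. Your added remarks---checking $V_{h+1}^k\in\mV$ via Lemma~\ref{lem:alg_bounded} and noting that the non-expansiveness of the maximin operator is what makes the online covering argument go through---are accurate and consistent with where the paper discharges those steps (inside the proof of Lemma~\ref{lem:concentration_online_simu}).
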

\begin{proof}
	The proof is essentially identical to that of Lemma~\ref{lem:ls_error_simu},
	except that we use the concentration result in Lemma~\ref{lem:concentration_online_simu}
	instead of Lemma~\ref{lem:concentration_simu}.
\end{proof}

\subsection{Upper Confidence Bounds}

Here we establish the desired UCB property.
\begin{lem}[UCB]
	\label{lem:ucb_online_simu}On the event $\mathfrak{E}$ in Lemma~\ref{lem:concentration_simu},
	we have for all $(x,a,b,k,h)$:
	\[
	Q_{h}^{k}(x,a,b)\ge Q_{h}^{*}(x,a,b),\qquad V_{h}^{k}(x)\ge V_{h}^{*}(x).
	\]
\end{lem}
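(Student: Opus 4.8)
The plan is to prove both bounds simultaneously by \emph{backward induction on} $h$, running from the terminal step down to $h=1$. The base case $h=H+1$ is immediate, since $Q_{H+1}^{k}\equiv V_{H+1}^{k}\equiv 0 = Q_{H+1}^{*}\equiv V_{H+1}^{*}$. For the inductive step I assume the claim holds at step $h+1$, i.e.\ $V_{h+1}^{k}(x)\ge V_{h+1}^{*}(x)$ for all $x$, and establish it at step $h$, proving the $Q$-bound first and then deducing the $V$-bound.

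For the $Q$-bound, the key move is to apply the least-squares error inequality~\eqref{eq:w} with the \emph{Nash} policy pair $(\pi^{*},\nu^{*})$, for which $Q_{h}^{\pi^{*},\nu^{*}}=Q_{h}^{*}$ and $V_{h+1}^{\pi^{*},\nu^{*}}=V_{h+1}^{*}$ by definition. This yields, on the event $\mathfrak{E}$,
\[
\left\langle \phi(x,a,b),w_{h}^{k}\right\rangle +\rho_{h}^{k}(x,a,b)\ge Q_{h}^{*}(x,a,b)+\Pr_{h}\!\left(V_{h+1}^{k}-V_{h+1}^{*}\right)(x,a,b).
\]
Since $\Pr_{h}(\cdot\mid x,a,b)$ is a probability measure, the induction hypothesis $V_{h+1}^{k}\ge V_{h+1}^{*}$ forces the last term to be nonnegative, so $\langle\phi(x,a,b),w_{h}^{k}\rangle+\rho_{h}^{k}(x,a,b)\ge Q_{h}^{*}(x,a,b)$. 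Because the bonus $\rho_{h}^{k}=\beta\|\phi\|_{(\Lambda_{h}^{k})^{-1}}$ appearing here is \emph{exactly} the bonus added in line~7 of Algorithm~\ref{alg:online_simu}, and $Q_{h}^{*}(x,a,b)\in[-H,H]$, the monotone non-expansive clipping $\projH$ gives
\[
Q_{h}^{k}(x,a,b)=\projH\!\left\{ \left\langle \phi(x,a,b),w_{h}^{k}\right\rangle +\rho_{h}^{k}(x,a,b)\right\} \ge\projH\!\left\{ Q_{h}^{*}(x,a,b)\right\} =Q_{h}^{*}(x,a,b),
\]
which is the desired $Q$-bound at step $h$.

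For the $V$-bound I use that $V_{h}^{k}(x)$ is, by construction (lines~8--9), the value of the \emph{zero-sum} matrix game $Q_{h}^{k}(x,\cdot,\cdot)$, namely $V_{h}^{k}(x)=\max_{A\in\simplex}\min_{B\in\simplex}\E_{a\sim A,b\sim B}Q_{h}^{k}(x,a,b)$, while the Nash Bellman equation~\eqref{eq:bellman2} gives $V_{h}^{*}(x)=\max_{A\in\simplex}\min_{B\in\simplex}\E_{a\sim A,b\sim B}Q_{h}^{*}(x,a,b)$. The maximin operator is monotone: a pointwise inequality $Q_{h}^{k}\ge Q_{h}^{*}$ is preserved under $\E_{a\sim A,b\sim B}$ for every $(A,B)$, hence under $\min_{B}$ and then $\max_{A}$. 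Combined with the $Q$-bound just proved, this yields $V_{h}^{k}(x)\ge V_{h}^{*}(x)$, completing the induction.

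The argument is comparatively short because the online setting is one-sided, so no general-sum game or CCE instability arises and no extra $2\epsilon$ slack is incurred; the only points requiring care are recognizing that~\eqref{eq:w} should be instantiated at the Nash pair (so that the uncontrollable transition term becomes $\Pr_{h}(V_{h+1}^{k}-V_{h+1}^{*})\ge 0$ under the induction hypothesis), and checking that the bonus in~\eqref{eq:w} matches the one used to form $Q_{h}^{k}$ so that clipping is harmless. The main conceptual step—and the closest thing to an obstacle—is simply invoking monotonicity of the zero-sum maximin value rather than its (stronger) Lipschitz/non-expansiveness property, which is all that is needed here.
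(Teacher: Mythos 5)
Your proof is correct and follows essentially the same route as the paper's: backward induction on $h$, instantiating the least-squares error bound~\eqref{eq:w} at the Nash pair $(\pi^{*},\nu^{*})$ so that the transition term $\Pr_{h}(V_{h+1}^{k}-V_{h+1}^{*})$ is nonnegative under the induction hypothesis, then using monotonicity of $\projH$ together with $Q_{h}^{*}\in[-H,H]$ for the $Q$-bound, and monotonicity of the zero-sum maximin value for the $V$-bound. The only differences are cosmetic (you state the base case at $h=H+1$ and spell out the clipping and maximin-monotonicity steps slightly more explicitly than the paper does).
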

\begin{proof}
	We fix $k$ and perform induction on $h$. The base case $h=H$ holds
	since the terminal cost is zero. Now assume that the bounds hold for
	step $h+1$; that is, $Q_{h+1}^{k}(x,a,b)\ge Q_{h+1}^{*}(x,a,b)$
	and $V_{h+1}^{k}(x)\ge V_{h+1}^{*}(x),\forall(x,a,b).$ By construction
	we have
	\begin{align*}
	Q_{h}^{k}(x,a,b) & =\projH\left\{ \left\langle \phi(x,a,b),w_{h}^{k}\right\rangle +\beta\left\Vert \phi(x,a,b)\right\Vert _{(\Lambda_{h}^{k})^{-1}}\right\} .
	\end{align*}
	On the other hand, note that $Q_{h}^{*}=Q_{h}^{\pi^{*},\nu^{*}}$
	and $V_{h}^{*}=V_{h}^{\pi^{*},\nu^{*}}$, hence by inequality~(\ref{eq:w})
	in Lemma~\ref{lem:ls_error_simu} applied to $(\pi,\nu)=(\pi^{*},\nu^{*})$,
	we have 
	\[
	\left|\left\langle \phi(x,a,b),w_{h}^{k}\right\rangle -Q_{h}^{*}(x,a,b)-\Pr_{h}(V_{h+1}^{k}-V_{h+1}^{*})(x,a,b)\right|\le\beta\left\Vert \phi(x,a,b)\right\Vert _{(\Lambda_{h}^{k})^{-1}}.
	\]
	Plugging back we obtain
	\[
	Q_{h}^{k}(x,a,b)\ge\projH\left\{ Q_{h}^{*}(x,a,b)+\Pr_{h}(V_{h+1}^{k}-V_{h+1}^{*})(x,a,b)\right\} .
	\]
	Under the induction hypothesis, we have $V_{h+1}^{k}(x)-V_{h+1}^{*}(x)\ge0$
	for each $x\in\mS$, whence 
	\[
	Q_{h}^{k}(x,a,b)\ge\projH\left\{ Q_{h}^{*}(x,a,b)\right\} =Q_{h}^{*}(x,a,b).
	\]
	Consequently, we have 
	\begin{align*}
	V_{h}^{k}(x) & =\max_{A\in\simplex}\min_{B\in\simplex}\E_{a\sim A,b\sim B}\left[Q_{h}^{k}(x,a,b)\right] &  & \text{algorithm specification}\\
	& \ge\max_{A\in\simplex}\min_{B\in\simplex}\E_{a\sim A,b\sim B}\left[Q_{h}^{*}(x,a,b)\right]\\
	& =V_{h}^{*}(x). &  & \text{definition}
	\end{align*}
	We conclude that the bounds hold for step $h$.
\end{proof}

\subsection{Recursive Regret Decomposition}

Thanks to Lemma~\ref{lem:ucb_online_simu}, the regret $V_{1}^{*}(x_{1}^{k})-V_{1}^{\pi^{k},\nu^{k}}(x_{1}^{k})$ of interest is upper bounded by the difference  $V_{1}^{k}(x_{1}^{k})-V_{1}^{\pi^{k},\nu^{k}}(x_{1}^{k})$
 between the empirical value (with bonus) and true value
of the agent's policy $\pi^{k}$. We next derive a recursive (in $ h $) formula for this difference.
\begin{lem}[Recursive Decomposition]
	\label{lem:resursive_online_simu} Define the random variables
	\begin{align*}
	\delta_{h}^{k} & :=V_{h}^{k}(x_{h}^{k})-V_{h}^{\pi^{k},\nu^{k}}(x_{h}^{k}),\\
	\zeta_{h}^{k} & :=\E\left[\delta_{h+1}^{k}\mid x_{h}^{k},a_{h}^{k},b_{h}^{k}\right]-\delta_{h+1}^{k},\\
	\eps_{h}^{k} & :=\E_{a\sim\pi_{h}^{k}(x_{h}^{k})}\left[Q_{h}^{k}(x_{h}^{k},a,b_{h}^{k})\right]-Q_{h}^{k}(x_{h}^{k},a_{h}^{k},b_{h}^{k}),\\
	\widehat{\eps}_{h}^{k} & :=\E_{a\sim\pi^{k}(x_{h}^{k}),b\sim\nu_{h}^{k}(x_{h}^{k})}\left[Q_{h}^{\pi^{k},\nu^{k}}(x_{h}^{k},a,b)\right]-Q_{h}^{\pi^{k},\nu^{k}}(x_{h}^{k},a_{h}^{k},b_{h}^{k}).
	\end{align*}
	Then on the event $\mathfrak{E}$ in Lemma~\ref{lem:concentration_simu},
	we have for all $(k,h)$:
	\[
	\delta_{h}^{k}\le\delta_{h+1}^{k}+\zeta_{h}^{k}+\eps_{h}^{k}-\widehat{\eps}_{h}^{k}+2\beta\sqrt{(\phi_{h}^{k})^{\top}(\Lambda_{h}^{k})^{-1}\phi_{h}^{k}}.
	\]
\end{lem}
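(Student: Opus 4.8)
The plan is to mirror the offline recursive decomposition (Lemma~\ref{lem:resursive_simu}), but to replace the CCE properties of $\sigma_h^k$ by the von Neumann maximin guarantee of the zero-sum matrix game solved in line~8 of Algorithm~\ref{alg:online_simu}. I would first peel off the one-step contribution at the realized tuple $(x_h^k,a_h^k,b_h^k)$, expressing $\delta_h^k$ as a difference of $Q$-values plus the two centering terms $\eps_h^k$ and $\widehat{\eps}_h^k$, and then control the $Q$-difference through the least-squares error bound.

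The crucial first step is to show $V_h^k(x_h^k)\le\E_{a\sim\pi_h^k(x_h^k)}[Q_h^k(x_h^k,a,b_h^k)]$. This uses that $(\pi_h^k(x),B_0)$ is the Nash equilibrium of the \emph{zero-sum} game with payoff $Q_h^k(x,\cdot,\cdot)$: because $\pi_h^k(x)$ is a maximin strategy, it secures at least the game value $V_h^k(x)=\E_{a\sim\pi_h^k(x),\,b\sim B_0}[Q_h^k(x,a,b)]$ against \emph{every} opponent response, in particular against the adversary's realized action $b_h^k$. By the definition of $\eps_h^k$ this rewrites as $V_h^k(x_h^k)\le Q_h^k(x_h^k,a_h^k,b_h^k)+\eps_h^k$. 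For the true value I would invoke the fixed-policy Bellman equation $V_h^{\pi^k,\nu^k}(x_h^k)=\E_{a\sim\pi_h^k(x_h^k),\,b\sim\nu_h^k(x_h^k)}[Q_h^{\pi^k,\nu^k}(x_h^k,a,b)]$, which by the definition of $\widehat{\eps}_h^k$ equals $Q_h^{\pi^k,\nu^k}(x_h^k,a_h^k,b_h^k)+\widehat{\eps}_h^k$. Subtracting the two displays gives $\delta_h^k\le[Q_h^k-Q_h^{\pi^k,\nu^k}](x_h^k,a_h^k,b_h^k)+\eps_h^k-\widehat{\eps}_h^k$.

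It then remains to bound the $Q$-difference. Applying the least-squares error bound~\eqref{eq:w} with $(\pi,\nu)=(\pi^k,\nu^k)$ yields $\langle\phi_h^k,w_h^k\rangle\le Q_h^{\pi^k,\nu^k}(x_h^k,a_h^k,b_h^k)+[\Pr_h(V_{h+1}^k-V_{h+1}^{\pi^k,\nu^k})](x_h^k,a_h^k,b_h^k)+\rho_h^k$; adding the bonus $\rho_h^k=\beta\|\phi_h^k\|_{(\Lambda_h^k)^{-1}}$ and using that $\projH$ is monotone and nonexpansive together with $Q_h^{\pi^k,\nu^k}(x_h^k,a_h^k,b_h^k)\in[-H,H]$ (exactly as in the offline proof of Lemma~\ref{lem:resursive_simu}) gives $[Q_h^k-Q_h^{\pi^k,\nu^k}](x_h^k,a_h^k,b_h^k)\le[\Pr_h(V_{h+1}^k-V_{h+1}^{\pi^k,\nu^k})](x_h^k,a_h^k,b_h^k)+2\beta\|\phi_h^k\|_{(\Lambda_h^k)^{-1}}$. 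Finally, since $[\Pr_h(V_{h+1}^k-V_{h+1}^{\pi^k,\nu^k})](x_h^k,a_h^k,b_h^k)=\E[\delta_{h+1}^k\mid x_h^k,a_h^k,b_h^k]=\delta_{h+1}^k+\zeta_h^k$ by the definition of the transition operator $\Pr_h$ and of $\zeta_h^k$, chaining the three displays produces the claimed recursion.

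The main obstacle is the first step. In the offline setting the corresponding bound came from the $2\epsilon$-CCE property (Lemma~\ref{lem:eps_cce}), whereas here $\ptwo$ is arbitrary and plays $b_h^k\sim\nu_h^k$ rather than the equilibrium marginal $B_0$ solved for by the algorithm. The argument must therefore certify that the agent's committed maximin policy $\pi_h^k$ secures the estimated value $V_h^k$ \emph{uniformly over all} adversary actions; this is precisely the one-sided guarantee of the minimax theorem, and it is what allows the online analysis to rely on a zero-sum (rather than general-sum/CCE) subroutine and to dispense with the $\epsilon$-net machinery. The factor $2\beta$ here (versus $4\beta$ in the offline case) arises because only a single least-squares estimate $w_h^k$ with a single bonus term enters the bound, in place of the pair $(\wover_h^k,\wunder_h^k)$.
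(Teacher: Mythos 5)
Your proposal is correct and follows essentially the same route as the paper's own proof: the maximin (NE) property of $\pi_h^k$ against the realized $b_h^k$, the fixed-policy Bellman equation to introduce $\widehat{\eps}_h^k$, the least-squares error bound~\eqref{eq:w} applied with $(\pi,\nu)=(\pi^k,\nu^k)$ yielding the $2\beta$ bonus, and the identity $\Pr_h(V_{h+1}^k-V_{h+1}^{\pi^k,\nu^k})(x_h^k,a_h^k,b_h^k)=\delta_{h+1}^k+\zeta_h^k$. Your treatment of the clipping $\projH$ is in fact slightly more explicit than the paper's, which passes over it with ``by construction of $Q_h^k$''; otherwise the two arguments coincide step for step.
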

\begin{proof}
	By algorithm specification and the fact that $(\pi_{h}^{k}(x_{h}^{k}),B_{0})$
	is the NE of $Q_{h}^{k}(x_{h}^{k},\cdot,\cdot)$, we
	have 
	\begin{align*}
	V_{h}^{k}(x_{h}^{k}) & =\min_{b}\E_{a\sim\pi_{h}^{k}(x_{h}^{k})}\left[Q_{h}^{k}(x_{h}^{k},a,b)\right]\\
	& \le\E_{a\sim\pi_{h}^{k}(x_{h}^{k})}\left[Q_{h}^{k}(x_{h}^{k},a,b_{h}^{k})\right]\\
	& =Q_{h}^{k}(x_{h}^{k},a_{h}^{k},b_{h}^{k})+\eps_{h}^{k},
	\end{align*}
	and by definition we have 
	\begin{align*}
	V_{h}^{\pi^{k},\nu^{k}}(x_{h}^{k}) & =\E_{a\sim\pi^{k}(x_{h}^{k}),b\sim\nu_{h}^{k}(x_{h}^{k})}\left[Q_{h}^{\pi^{k},\nu^{k}}(x_{h}^{k},a,b)\right]\\
	& =Q_{h}^{\pi^{k},\nu^{k}}(x_{h}^{k},a_{h}^{k},b_{h}^{k})+\widehat{\eps}_{h}^{k}.
	\end{align*}
	It follows that 
	\[
	\delta_{h}^{k}\le Q_{h}^{k}(x_{h}^{k},a_{h}^{k},b_{h}^{k})-Q_{h}^{\pi^{k},\nu^{k}}(x_{h}^{k},a_{h}^{k},b_{h}^{k})+\eps_{h}^{k}-\widehat{\eps}_{h}^{k}.
	\]
	On the other hand, by construction of $Q_{h}^{k}$ and Lemma~\ref{eq:ls_error_simu},
	we have for all $(x,a,b)$, 
	\[
	Q_{h}^{k}(x,a,b)-Q_{h}^{\pi^{k},\nu^{k}}(x,a,b)\le\Pr_{h}(V_{h+1}^{k}-V_{h+1}^{\pi^{k},\nu^{k}})(x,a,b)+2\beta\sqrt{\phi(x,a,b)^{\top}\left(\Lambda_{h}^{k}\right)^{-1}\phi(x,a,b)}.
	\]
	Combining pieces, we obtain that
	\begin{align*}
	\delta_{h}^{k} & \le\Pr_{h}(V_{h+1}^{k}-V_{h+1}^{\pi^{k},\nu^{k}})(x_{h}^{k},a_{h}^{k},b_{h}^{k})+\eps_{h}^{k}-\widehat{\eps}_{h}^{k}+2\beta\sqrt{(\phi_{h}^{k})^{\top}(\Lambda_{h}^{k})^{-1}\phi_{h}^{k}}\\
	& =\E\left[\delta_{h+1}^{k}\mid x_{h}^{k},a_{h}^{k},b_{h}^{k}\right]+\eps_{h}^{k}-\widehat{\eps}_{h}^{k}+2\beta\sqrt{(\phi_{h}^{k})^{\top}(\Lambda_{h}^{k})^{-1}\phi_{h}^{k}}\\
	& =\delta_{h+1}^{k}+\zeta_{h}^{k}+\eps_{h}^{k}-\widehat{\eps}_{h}^{k}+2\beta\sqrt{(\phi_{h}^{k})^{\top}(\Lambda_{h}^{k})^{-1}\phi_{h}^{k}}
	\end{align*}
	as desired.
\end{proof}

\subsection{Establishing Regret Bound}

We are now ready to prove Theorem \ref{thm:main_online_simu}.
First observe that 
\begin{align*}
\text{Regret}(K) & :=\sum_{k=1}^{K}\left[V_{1}^{*}(x_{1}^{k})-V_{1}^{\pi^{k},\nu^{k}}(x_{1}^{k})\right] &  & \text{definition}\\
& \le\sum_{k=1}^{K}\left[V_{1}^{k}(x_{1}^{k})-V_{1}^{\pi^{k},\nu^{k}}(x_{1}^{k})\right] &  & V_{1}^{k}(x_{1}^{k})\ge V_{h}^{*}(x_{1}^{k})\text{ by Lemma \ref{lem:ucb_online_simu}}\\
& =\sum_{k=1}^{K}\delta_{1}^{k} &  & \text{definition}\\
& \le\sum_{k=1}^{K}\sum_{h=1}^{H}(\zeta_{h}^{k}+\eps_{h}^{k}-\widehat{\eps}_{h}^{k})+2\beta\sum_{k=1}^{K}\sum_{h=1}^{H}\sqrt{(\phi_{h}^{k})^{\top}(\Lambda_{h}^{k})^{-1}\phi_{h}^{k}}. &  & \text{Lemma \ref{lem:resursive_online_simu}}
\end{align*}
We bound the two RHS terms separately.
\begin{itemize}
	\item For the first term, we know that $(\zeta_{h}^{k}+\eps_{h}^{k}-\widehat{\eps}_{h}^{k})$
	is a martingale difference sequence (with respect to both $h$ and
	$k$), and $\left|\zeta_{h}^{k}+\eps_{h}^{k}-\widehat{\eps}_{h}^{k}\right|\le6H$.
	Hence by Azuma-Hoeffding, we have w.h.p.
	\[
	\sum_{k=1}^{K}\sum_{h=1}^{H}(\zeta_{h}^{k}+\eps_{h}^{k}-\widehat{\eps}_{h}^{k})\lesssim H\cdot\sqrt{KH\iota}=H\sqrt{T\iota}.
	\]
	\item For the second term, we apply the Elliptical Potential Lemma~\ref{lem:elliptic_potential}
	to obtain
	\begin{align*}
	\sum_{h=1}^{H}\sum_{k=1}^{K}\sqrt{(\phi_{h}^{k})^{\top}(\Lambda_{h}^{k})^{-1}\phi_{h}^{k}} & \le\sum_{h=1}^{H}\sqrt{K}\sqrt{\sum_{k=1}^{K}(\phi_{h}^{k})^{\top}(\Lambda_{h}^{k})^{-1}\phi_{h}^{k}} &  & \text{Jensen's inequality}\\
	& \le\sum_{h=1}^{H}\sqrt{K}\cdot\sqrt{2\log\left(\frac{\det\Lambda_{h}^{K}}{\det\Lambda_{h}^{0}}\right)} &  & \text{Lemma \ref{lem:elliptic_potential}}\\
	& \le\sum_{h=1}^{H}\sqrt{K}\cdot\sqrt{2\log\left(\frac{(\lambda+K\max_{k}\left\Vert \phi_{h}^{k}\right\Vert ^{2})^{d}}{\lambda^{d}}\right)} &  & \text{by construction of \ensuremath{\Lambda_{h}^{k}}}\\
	& \le\sum_{h=1}^{H}\sqrt{K}\cdot\sqrt{2d\log\left(\frac{\lambda+K}{\lambda}\right)} &  & \left\Vert \phi_{h}^{k}\right\Vert \le1,\forall h,k\text{ by assumption}\\
	& \le H\sqrt{2Kd\iota}.
	\end{align*}
\end{itemize}
Combining, we obtain that
\[
\text{Regret}(K)\lesssim H\sqrt{T\iota}+\beta\cdot H\sqrt{2Kd\iota}\lesssim\sqrt{d^{3}H^{3}T\iota^{2}}
\]
by our choice of $\beta\asymp dH\sqrt{\iota}.$ This completes the
proof of Theorem \ref{thm:main_online_simu}.

\section{Efficient Implementation of $\protect\findcce$\label{sec:find_cce_implement}}

The main computation step in $\findcce$ involves finding an element in the
fixed $\epsilon$-cover $\mQ_{\epsilon}$ that is close to a given
function $Q \in \mQ$. Here we discuss how to efficiently implement this procedure
without explicitly maintaining the cover $\mQ_{\epsilon}$. 

Recall that each element in the function class $\mQ$ is defined by a pair $ (w,A) \in \R^d \times \R^{d\times d} $; see equation~\eqref{eq:Qclass}. Therefore, the cover
$\mQ_{\epsilon}$ is induced, up to scaling, by an $\epsilon$-cover $\mathcal{C}_{w}$
in $\ell_{2}$ norm of the Euclidean ball $\mathbb{B}_{w} := \big\{ w\in\R^{d}:\left\Vert w\right\Vert \le1\big\} $ as well as an $\epsilon^2$-cover $\mathcal{C}_{A}$ of
the ball $\mathbb{B}_{A} := \big\{ A\in\R^{d\times d}:\left\Vert A\right\Vert _{F}\le1\big\} $; cf.\ the proof of Lemma~\ref{lem:covering_Q}.
We may replace $\mathcal{C}_{w}$ by a cover $\mathcal{C}_{w,\infty}$ in the $\ell_{\infty}$
norm; similarly for $\mathcal{C}_{A}$. Clearly, an $ \ell_\infty $ cover is also an $\ell_{2}$ cover; moreover, using an $ \ell_\infty $ cover allows for
efficient computation of near neighbors by simple rounding. The price we pay is an additional
dimension factor $d$ in the covering number, which eventually goes
into the log term.

We now provide the details for covering  $\mathbb{B}_{w}$; the idea applies similarly to covering $\mathbb{B}_{A}$. 
\begin{lem}\label{lem:fast_covering}
	Let $ \epsilon>0 $ be a given accuracy parameter. There exists a set $\mathcal{C}_{w,\infty}  $ satisfying the following: 
	%(i) $ \mathcal{C}_{w,\infty} $ is an $ \frac{\epsilon}{\sqrt{d}} $-cover in $ \ell_{\infty} $ norm of $ \mathbb{B}_{w} $ and hence an $ \epsilon $-cover in $ \ell_{2} $ norm of $ \mathbb{B}_{w} $;
	 (i) $ \log\left|\mathcal{C}_{w,\infty}\right| \le d\log\left(1+\frac{2\sqrt{d}}{\epsilon}\right)$; (ii) for each vector $ w\in \mathbb{B}_{w} $, we can find,  in $ O(d) $ time, a vector $\widetilde{w}\in \mathcal{C}_{w,\infty}$ that satisfies $\left\Vert \widetilde{w}-w\right\Vert_\infty \le \frac{\epsilon}{\sqrt{d}}$ and hence $\left\Vert \widetilde{w}-w\right\Vert \le\epsilon$.
\end{lem}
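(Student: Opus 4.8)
The plan is to construct $\mathcal{C}_{w,\infty}$ as an explicit axis-aligned rectangular grid on the box $[-1,1]^{d}$, which contains $\mathbb{B}_{w}$ because $\left\Vert w\right\Vert \le 1$ forces $\left|w_{i}\right|\le 1$ for every coordinate $i$. First I would fix the per-coordinate spacing $g:=\epsilon/\sqrt{d}$, define the one-dimensional grid $G:=\{-1,\,-1+g,\,-1+2g,\ldots\}\cap[-1,1]$, and set $\mathcal{C}_{w,\infty}:=G^{d}$, the $d$-fold Cartesian product. The values in $G$ are $-1+kg$ for $k=0,1,\ldots,\lfloor 2/g\rfloor$, so each axis carries at most $1+2/g=1+2\sqrt{d}/\epsilon$ points; hence $\left|\mathcal{C}_{w,\infty}\right|\le(1+2\sqrt{d}/\epsilon)^{d}$, which yields claim~(i) upon taking logarithms.

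For claim~(ii), given any $w\in\mathbb{B}_{w}$ I would round each coordinate down to the grid by setting $\widetilde{w}_{i}:=-1+g\lfloor(w_{i}+1)/g\rfloor$. This is a per-coordinate floor operation costing $O(1)$ arithmetic operations each, hence $O(d)$ in total, and it produces a point $\widetilde{w}\in\mathcal{C}_{w,\infty}$ with coordinate-wise error $0\le w_{i}-\widetilde{w}_{i}<g$. Consequently $\left\Vert\widetilde{w}-w\right\Vert_{\infty}\le g=\epsilon/\sqrt{d}$, and the elementary norm comparison $\left\Vert v\right\Vert\le\sqrt{d}\,\left\Vert v\right\Vert_{\infty}$ gives $\left\Vert\widetilde{w}-w\right\Vert\le\sqrt{d}\cdot\epsilon/\sqrt{d}=\epsilon$, completing~(ii).

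There is no genuine analytic obstacle here; the only point requiring care is the bookkeeping that ties together the three quantities---the spacing $g$, the target $\ell_{\infty}$ accuracy $\epsilon/\sqrt{d}$, and the resulting $\ell_{2}$ accuracy $\epsilon$---so that the per-coordinate count comes out as exactly $1+2\sqrt{d}/\epsilon$ rather than a constant factor worse. The conceptually important feature, as opposed to any difficulty, is that by working with an $\ell_{\infty}$ product grid instead of a generic $\ell_{2}$ $\epsilon$-net, the nearest-neighbor search \emph{decouples} across coordinates and reduces to independent floor operations running in $O(d)$ time; this is precisely what lets $\findcce$ round on the fly without ever materializing the exponentially large cover. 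The identical construction applies to $\mathbb{B}_{A}$ by viewing a $d\times d$ matrix as a vector in $\R^{d^{2}}$, with the Frobenius norm playing the role of $\ell_{2}$ and target accuracy $\epsilon^{2}$ in place of $\epsilon$.
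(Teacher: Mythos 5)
Your proposal is correct and follows essentially the same argument as the paper's proof: a coordinate-wise product grid on $[-1,1]^{d}$ with spacing $\epsilon/\sqrt{d}$, per-coordinate rounding in $O(d)$ time, and the norm comparison $\left\Vert v\right\Vert \le \sqrt{d}\left\Vert v\right\Vert_{\infty}$. The only (immaterial) difference is that you anchor the grid at $-1$ and floor each coordinate, while the paper anchors the grid at the origin and rounds each coordinate toward zero; both yield the same cardinality bound and the same accuracy guarantee.
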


\begin{proof}
Set $\epsilon_{0}:=\frac{\epsilon}{\sqrt{d}}$. We discretize the
interval $G:=[-1,1]$ into an $\epsilon_{0}$-grid as
\[
G_{\epsilon_{0}}:=\left\{ k\epsilon_{0}:k=-\left\lfloor \frac{1}{\epsilon_{0}}\right\rfloor ,-\left\lfloor \frac{1}{\epsilon_{0}}\right\rfloor +1,\ldots,-2,-1,0,1,2,\ldots,\left\lfloor \frac{1}{\epsilon_{0}}\right\rfloor -1,\left\lfloor \frac{1}{\epsilon_{0}}\right\rfloor \right\} .
\]
We then let $\mathcal{C}_{w,\infty}:=(G_{\epsilon_{0}})^{d}$.
The log cardinality of $\mathcal{C}_{w,\infty}$ is 
\[
\log\left|\mathcal{C}_{w,\infty}\right|=\log\left|G_{\epsilon_{0}}\right|^{d}=\log\left(1+2\left\lfloor \frac{1}{\epsilon_{0}}\right\rfloor \right)^{d}\le d\log\left(1+\frac{2\sqrt{d}}{\epsilon}\right),
\]
as claimed in part (i) of the lemma.
Compare this bound with the log cardinality of the optimal $\epsilon$-cover in $\ell_{2}$  norm of $\big\{ w\in\R^{d}:\left\Vert w\right\Vert \le1\big\} $:
$
\log |\mathcal{C}_{w}|\asymp d\log\left(1+\frac{2}{\epsilon}\right).
$
We see that the former is only logarithmic larger than the latter.

Moreover, for each vector $w$ in the ball $\big\{ w'\in\R^{d}:\left\Vert w'\right\Vert \le1\big\} $, we can efficiently find a vector $\widetilde{w}\in \mathcal{C}_{w,\infty}$
that satisfies $\left\Vert \widetilde{w}-w\right\Vert_\infty \le \frac{\epsilon}{\sqrt{d}}$ and hence  $\left\Vert \widetilde{w}-w\right\Vert \le \epsilon$.
To do this, we simply let
\[
\widetilde{w}_{i}=\left\lfloor \frac{\left|w_{i}\right|}{\epsilon_{0}}\right\rfloor \cdot\epsilon_{0}\cdot\sign(w_{i}),\qquad\text{for each \ensuremath{i\in[d]}},
\]
with the convention that $ \sign(0) = 0 $. Note that $ \widetilde{w} $  can be computed in $ O(d) $ time. Moreover, since $\left\Vert w\right\Vert \le1 $,  for each $i\in[d]$ we have $ \left|w_{i}\right|\le1 $ and hence
\[
\left\lfloor \frac{\left|w_{i}\right|}{\epsilon_{0}}\right\rfloor \in\left\{ 0,1,\ldots,\left\lfloor \frac{1}{\epsilon_{0}}\right\rfloor \right\} ,
\]
which means $\widetilde{w}_{i}\in G_{\epsilon_{0}}$. It follows that  
$\widetilde{w}\in(G_{\epsilon_{0}})^{d}=\mathcal{C}_{w,\infty}$ as claimed.
Finally,  the approximation accuracy satisfies  
\begin{align*}
\left\Vert \widetilde{w}-w\right\Vert_\infty & =\max_{i\in[d]}\left|\left\lfloor \frac{\left|w_{i}\right|}{\epsilon_{0}}\right\rfloor \cdot\epsilon_{0}\cdot\sign(w_{i})-w_{i}\right|\\
 & =\epsilon_{0}\max_{i\in[d]}\left|\left\lfloor \frac{\left|w_{i}\right|}{\epsilon_{0}}\right\rfloor \cdot\sign(w_{i})-\frac{\left|w_{i}\right|}{\epsilon_{0}}\cdot\sign(w_{i})\right| &  & w_{i}=\left|w_{i}\right|\cdot\sign(w_{i})\\
 & \le\epsilon_{0}\max_{i\in[d]}1\cdot\left|\sign(w_{i})\right| &  & \left|\left\lfloor x\right\rfloor -x\right|\le1\\
 & \le  \epsilon_0 = \frac{\epsilon}{\sqrt{d}}. &  & %\epsilon_{0}:=\frac{\epsilon}{\sqrt{d}}\text{ by choice}
\end{align*}
This proves part (ii) of the lemma.
\end{proof}

\section{Instability of the Value of General-Sum Game\label{sec:stability}}

In the analysis of our algorithms (in particular, in proving uniform concentration in the proof of Theorem~\ref{thm:main_simu}), we encounter the following question:
Is the value of the CCE of a general-sum game stable under
perturbation to the payoff matrices? Here we show that the answer is negative in general, by demonstrating a counter example. 

Consider a two-player general-sum matrix game, and recall our convention that player~1 tries to maximize and player~2 tries to minimize (cf.\ Section~\ref{sec:notation}). Let $u_{i}:\mA\times\mA\to\R$
be the payoff matrix of player $i\in\{1,2\}$, such that player $i$
receives the payoff $u_{i}(a,b)$ when players 1 and 2 take actions $a$
and $b$, respectively. Let $\sigma\in\simplex(\mA\times\mA)$ be
any notion of CCE that is unique; e.g.,
the social-optimal or max-entropy CCE. In this equilibrium, the expected payoff of
player $i$ is 
\[
V_{i}(u_{1},u_{2}):=\E_{(a,b)\sim\sigma}\left[u_{i}(a,b)\right].
\]
We say that the game value $V=(V_{1},V_{2})$ is a Lipschitz function of the payoff matrices $u=(u_{1},u_{2})$ if there exists
a universal constant $C$  such that 
\[
\underbrace{ \max_{i\in\{1,2\}}\left|V_{i}(u_{1},u_{2})-V_{i}(u_{1}',u_{2}')\right| }_{\|V(u)-V(u')\|_\infty}
	\le C\cdot \underbrace{ \max_{j\in\{1,2\}}\max_{a,b\in\mA}\left|u_{j}(a,b)-u_{j}'(a,b)\right| }_{\| u - u'\|_\infty}, \quad \forall u, u'.
\]
The following example shows that $V$ is in general not Lipschitz in $ u $.\footnote{We learned the example from \url{https://mathoverflow.net/questions/347366/perturbation-of-the-value-of-a-general-sum-game-at-a-equilibirium}}
\begin{lem}\label{lem:not_lipschitz}
	For any $ \epsilon>0 $, there exists a pair of games $ u $ and $ u' $, each with a unique CCE, such that 
	\begin{align*}
	 \|u - u'\| \le 2\epsilon \quad\text{and}\quad  \| V(u) - V(u') \|_\infty \ge 1.
	\end{align*}
\end{lem}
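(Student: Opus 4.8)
The plan is to construct an explicit pair of $2\times 2$ matrix games in which \emph{both} players possess a strictly dominant strategy, so that each game has a genuinely \emph{unique} CCE---namely the point mass on the dominant action profile---rather than merely a distinguished CCE selected from a larger set. The two selected profiles will be different cells of the payoff bimatrix, and I will choose the dominance margins to be of order $\epsilon$ so that a perturbation of magnitude $2\epsilon$ flips which profile is dominant, while the player-1 payoffs at the two selected cells differ by a constant.

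First I would record the elementary fact that drives uniqueness: if in a matrix game $\pone$ has a strictly dominant action $a^\ast$ (that is, $u_1(a^\ast,b)>u_1(a,b)$ for all $a\neq a^\ast$ and all $b$) and $\ptwo$ has a strictly dominant action $b^\ast$ ($u_2(a,b^\ast)<u_2(a,b)$ for all $b\neq b^\ast$ and all $a$), then the unique CCE is the point mass $\delta_{(a^\ast,b^\ast)}$. This follows by taking $a'=a^\ast$ in the defining inequality~\eqref{eq:cce1}: the left-hand side $\E_{(a,b)\sim\sigma}[u_1(a,b)]$ is at most $\E_{b\sim\mP_2\sigma}[u_1(a^\ast,b)]$ term by term, while~\eqref{eq:cce1} forces it to be at least this quantity; strictness then forces $\sigma$ to place all mass on $a=a^\ast$, and symmetrically on $b=b^\ast$ via~\eqref{eq:cce2}. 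Next I would exhibit the games; assuming without loss of generality $\epsilon\le \tfrac12$ (otherwise run the construction with $\epsilon$ replaced by $\tfrac12$, so that $\|u-u'\|_\infty=1\le 2\epsilon$ still holds), set
\[
u_1=\begin{pmatrix}1 & -1+\epsilon\\ 1-\epsilon & -1\end{pmatrix},\quad u_2=\begin{pmatrix}0 & \epsilon\\ 0 & \epsilon\end{pmatrix},\quad u_1'=\begin{pmatrix}1-2\epsilon & -1+\epsilon\\ 1-\epsilon & -1+2\epsilon\end{pmatrix},\quad u_2'=\begin{pmatrix}\epsilon & 0\\ \epsilon & 0\end{pmatrix}.
\]
A direct inspection shows that in $u$ row~$1$ and column~$1$ are strictly dominant (all margins equal to $\epsilon$), so the unique CCE is $\delta_{(1,1)}$ and $V_1(u)=u_1(1,1)=1$; in $u'$ row~$2$ and column~$2$ are strictly dominant, so the unique CCE is $\delta_{(2,2)}$ and $V_1(u')=u_1'(2,2)=-1+2\epsilon$.

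Finally I would verify the two quantitative claims. Comparing entries, the only changes are in the $(1,1)$ and $(2,2)$ cells of $u_1$ (each by $2\epsilon$) and in every cell of $u_2$ (each by $\epsilon$), so $\|u-u'\|_\infty=2\epsilon$; and $\|V(u)-V(u')\|_\infty\ge|V_1(u)-V_1(u')|=|1-(-1+2\epsilon)|=2-2\epsilon\ge 1$. The main obstacle, and the point that the earlier draft missed, is precisely guaranteeing \emph{uniqueness} of the CCE: a construction based on multiple pure equilibria would only have a non-singleton CCE polytope and hence would demonstrate instability of a selection rule rather than of the value itself. Using strict dominance removes this gap, and the delicate balance is to keep all dominance margins $\Theta(\epsilon)$ (so $2\epsilon$ suffices to reverse them) while letting the two dominant cells $(1,1)$ and $(2,2)$ carry payoffs that differ by $\Theta(1)$; these two cells are unconstrained relative to each other, which is what makes the $\Theta(1)$ value gap compatible with an $O(\epsilon)$ perturbation. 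I would also remark, to support the claim in the main text that properties~(i) and~(ii) hold simultaneously, that $\delta_{(1,1)}$ is a $2\epsilon$-approximate CCE of $u'$ (and $\delta_{(2,2)}$ of $u$), which is immediate from $\|u-u'\|_\infty\le 2\epsilon$.
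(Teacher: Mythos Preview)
Your proof is correct and follows essentially the same approach as the paper: both constructions produce $2\times 2$ games in which each player has a strictly dominant action with margin $\Theta(\epsilon)$, so that the unique CCE is a point mass at the dominant profile, and a $2\epsilon$ perturbation flips the dominance to a different cell whose payoff differs by $\Theta(1)$. The specific matrices differ (the paper uses $(u_1,u_2)=\begin{psmallmatrix}1+\epsilon,-1-\epsilon & \epsilon,-1\\ 1,-\epsilon & 0,0\end{psmallmatrix}$ and its perturbation), but the mechanism is identical; your write-up is in fact more careful than the paper's in that you explicitly argue why strict dominance forces uniqueness of the CCE, whereas the paper simply asserts it.
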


\begin{proof}
Consider two games $ u $ and $ u' $ with payoff matrices
\[
(u_{1}, u_{2})=\begin{pmatrix}1+\epsilon,-1-\epsilon & \epsilon,-1\\
1,-\epsilon & 0,0
\end{pmatrix}\qquad\text{and}\qquad(u_{1}', u_{2}')=\begin{pmatrix}1-\epsilon,-1+\epsilon & -\epsilon,-1\\
1,\epsilon & 0,0
\end{pmatrix},
\]
where $\epsilon>0$. Note that the two pairs of payoff matrices satisfy $ \|u - u'\|_\infty=2\epsilon $, so $u$ and $u'$ can be made arbitrarily close. The game $ u $ has a unique CCE, which is the deterministic policy (or pure strategy) corresponding to the top-left entry of the payoff matrices; similarly, the game $ u' $ has a unique CCE corresponding to the bottom-right entry. These two CCEs have  values
\[
\left(V_{1}(u_{1},u_{2}),V_{2}(u_{1},u_{2})\right)=(1+\epsilon,-1-\epsilon)\qquad\text{and}\qquad\left(V_{1}(u_{1}',u_{2}'),V_{2}(u_{1}',u_{2}')\right)=(0,0),
\]
which are bounded away from each other as claimed. 
\end{proof}

We note that in example above,  the CCE policy of the game $ u $ is an $ \epsilon $-approximate CCE of the game $ u' $, and vice versa, as any unilateral deviation leads to at most $ \epsilon $ improvement in the payoff.

\bibliographystyle{apalike}
\bibliography{../../rl_refs}

\end{document}